\renewcommand{\hat}{\widehat}
\newcommand{\lossell}{\ell}
\newcommand{\update}{\color{black}}
\title{Mixture Proportion Estimation and PU Learning:\\
A Modern Approach
}
\author{%
  Saurabh Garg$^1$,\, Yifan Wu$^1$,\, Alex Smola$^2$,\, Sivaraman Balakrishnan$^1$,\, Zachary C. Lipton$^1$ \\
  $^1$Carnegie Mellon University \\
  $^2$Amazon Web Services \\
}
\begin{document}

\maketitle

\begin{abstract}
Given only positive examples and unlabeled examples
(from both positive and negative classes),
we might hope nevertheless to estimate 
an accurate positive-versus-negative classifier.
Formally, this task is broken down into two subtasks:
(i) \emph{Mixture Proportion Estimation} (MPE)---determining
the fraction of positive examples in the unlabeled data;
and (ii) \emph{PU-learning}---given such an estimate,
learning the desired positive-versus-negative classifier.
Unfortunately, classical methods for both problems
break down in high-dimensional settings.
Meanwhile, recently proposed heuristics
lack theoretical coherence 
and depend precariously on hyperparameter tuning.
In this paper, we propose two simple techniques:
\emph{Best Bin Estimation} (BBE) (for MPE);
and \emph{Conditional Value Ignoring Risk} (CVIR),
a simple objective for PU-learning. 
Both methods dominate previous approaches empirically,
and for BBE, we establish formal guarantees 
that hold whenever we can train a model
to cleanly separate out a small subset of positive examples.
Our final algorithm (TED)$^n$, 
alternates between the two procedures,
significantly improving both our 
mixture proportion estimator and classifier\footnote{Code 
is available at 
\href{https://github.com/acmi-lab/PU\_learning}
{https://github.com/acmi-lab/PU\_learning}}.

\end{abstract}

\section{Introduction}
When deploying $k$-way classifiers in the wild,
what can we do when confronted 
with data from a previously unseen class ($k+1$)?
Theory dictates that 
learning under distribution shift 
is impossible absent assumptions.
And yet people appear 
to exhibit this capability routinely.
Faced with new surprising symptoms,
doctors can recognize the presence
of a previously unseen ailment
and attempt to estimate its prevalence. 
Similarly, naturalists can 
discover new species, 
estimate their range and population,
and recognize them reliably going forward.

To begin making this problem tractable,
we might make the label shift assumption
\citep{saerens2002adjusting, storkey2009training, lipton2018detecting},
i.e., that while the class balance $p(y)$ can change,
the class conditional distributions $p(x|y)$ do not. 
Moreover, we might begin by focusing on the base case, 
where only one class has been seen previously, i.e., $k=1$.
Here, we possess (labeled) positive data from the source distribution,
and (unlabeled) data from the target distribution,
consisting of both positive and negative instances. 
This problem has been studied in the literature
as \emph{learning from positive and unlabeled data}
\cite{de1999positive,letouzey2000learning}
and has typically been broken down into two subtasks:
(i) {Mixture Proportion Estimation} (MPE) 
where we estimate $\alpha$, 
the fraction of positives
among the unlabeled examples;
and (ii) {PU-learning} where this estimate
is incorporated into a scheme for learning
a Positive-versus-Negative (PvN) binary classifier.

Traditionally, MPE and PU-learning have been motivated
by settings involving large databases
where unlabeled examples are abundant
and a small fraction of the total positives
have been extracted. 
For example, medical records might be annotated
indicating the presence of certain diagnoses,
but the unmarked passages 
are not necessarily negative. 
This setup has also been motivated
by protein and gene identification 
\cite{elkan2008learning}. 
Databases in molecular biology 
often contain lists of molecules 
known to exhibit some characteristic of interest.
However, many other molecules may
exhibit the desired characteristic,
even if this remains unknown to science.

Many methods have been proposed for both MPE 
\cite{elkan2008learning,du2014class,scott2015rate,ramaswamy2016mixture,jain2016nonparametric,bekker2018estimating,pmlr-v98-reeve19a,ivanov2019dedpul}
and PU-learning~\citep{du2014analysis,du2015convex,kiryo2017positive}. 
However, classical MPE methods 
break down in high-dimensional settings~\citep{ramaswamy2016mixture}
or yield estimators whose accuracy depends 
on restrictive conditions~\citep{du2014class,scott2015rate}.
On the other hand,
most recent proposals either
lack theoretical coherence,
rely on heroic assumptions, 
or depend precariously 
on tuning hyperparameters 
that are, by the very problem setting, untunable.
For PU learning, \citet{elkan2008learning}
suggest training a classifier 
to distinguish positive from unlabeled data 
followed by a rescaling procedure.
\citet{du2015convex} suggest an unbiased 
risk estimation framework for PU learning. 
However, these methods fail badly when applied 
with model classes capable of overfitting
and thus implementations on high-dimensional 
datasets rely on extensive hyperparameter tuning
and additional ad-hoc heuristics   
that do not transport effectively across datasets.

\begin{figure*}[t!]
    \centering 
    \subfigure{\includegraphics[width=0.46\linewidth]{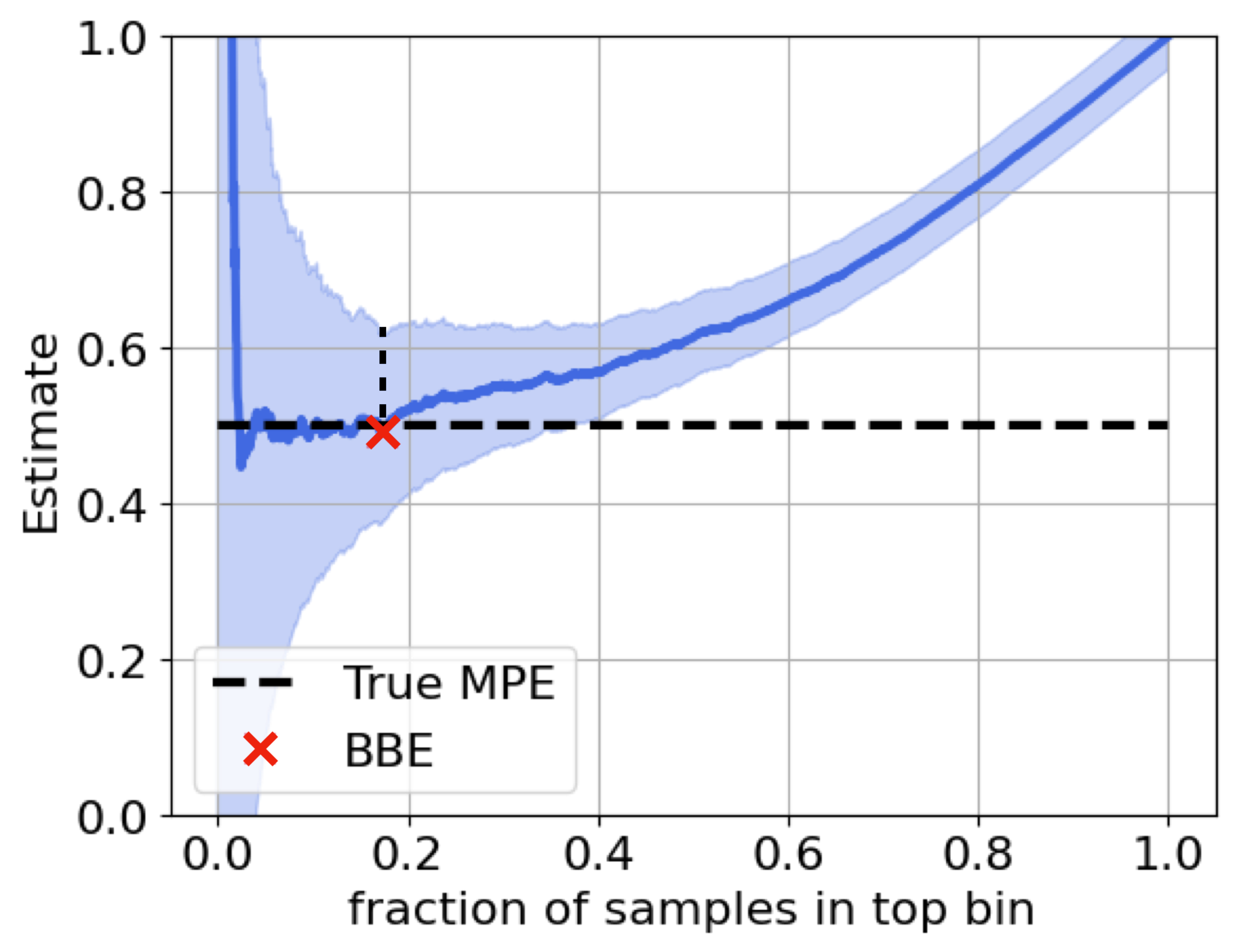}}\hfill
    \subfigure{\includegraphics[width=0.50\linewidth]{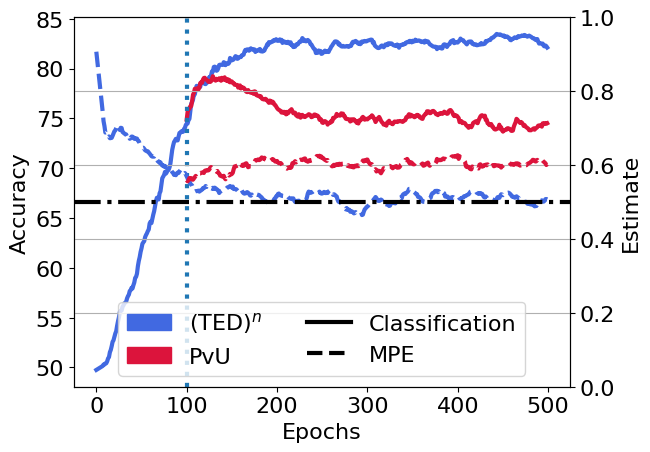}}
    \caption{ \emph{Illustration of proposed methods.} 
    \textbf{(left)} Estimate of $\bf{\alpha}$ with varying fraction 
    of unlabeled examples in the top bin. 
    The shaded region highlights the upper and lower confidence bounds. 
    BBE selects the top bin that minimizes the upper confidence bound. 
    \textbf{(right)} Accuracy and MPE estimate as training proceeds.  
    Till $100$-th epoch (vertical line), we perform PvU training, i.e.,
    warm start for (TED)$^n$. 
    Post $100$-th epoch, we continue 
    with both (TED)$^n$ and PvU training. 
    Note that (TED)$^n$ improves 
    both classification accuracy
    and MPE compared to PvU training. 
    Results with Resnet-18 on binary-CIFAR. 
    For details and comparisons with other methods,
    see \secref{sec:exp}.
    }
    \label{fig:intro}
  \end{figure*}

In this paper, we propose 
(i) Best Bin Estimation (BBE), 
an effective technique for MPE 
that produces consistent estimates $\widehat{\alpha}$
under mild assumptions
and admits finite-sample statistical guarantees 
achieving the desired $O(1/\sqrt{n})$ rates;
and (ii) learning with the 
Conditional Value Ignoring Risk (CVIR) objective,
which discards the highest loss $\hat{\alpha}$
fraction of examples on each training epoch,
removing the incentive to overfit to
the unlabeled positive examples. 
Both methods are simple to implement,
compatible with arbitrary hypothesis classes
(including deep networks),
and dominate existing methods
in our experimental evaluation. 
Finally, we combine the two in an iterated
Transform-Estimate-Discard (TED)$^n$
framework that significantly improves
both MPE estimation error and classifier error. 

We build on label shift methods
\cite{lipton2018detecting, azizzadenesheli2019regularized, alexandari2019adapting, rabanser2018failing, garg2020unified},
that leverage black-box classifiers
to reduce dimensionality,
estimating the target label distribution
as a functional of source and target
push-forward distributions.
While label shift methods rely on classifiers 
trained to separate previously seen classes,
BBE is able to exploit a Positive-versus-Unlabeled (PvU) target classifier,
which gives each input a score indicating 
how likely it is to be a positive sample.
In particular, BBE identifies a threshold
such that by estimating the ratio between
the fractions of positive and unlabeled 
points receiving scores above the threshold,
we obtain
the mixture proportion $\alpha$.

BBE works because in practice, 
for many datasets, PvU classifiers,
even when uncalibrated, produce outputs
with near monotonic calibration diagrams.
Higher scores correspond 
to a higher proportion of positives,
and when the positive data 
contains a separable sub-domain, i.e.,
a region of the input space 
where only the positive distribution has support, 
classifiers often exhibit a threshold
above which the \emph{top bin}
contains mostly positive examples.
We show that the existence 
of a (nearly) pure top bin
is sufficient for BBE to produce
a (nearly) consistent estimate $\hat{\alpha}$,
whose finite sample convergence rates
depend on the fraction of examples in the bin
and whose bias depends on the \emph{purity} of the bin.
Crucially, we can estimate 
the optimal threshold from data.

We conduct a battery of experiments
both to empirically validate our claim
that BBE's assumptions are mild
and frequently hold in practice,
and to establish the outperformance 
of BBE, CVIR, and (TED)$^n$ over 
the previous state of the art.
We first motivate BBE by demonstrating
that in practice PvU classifiers 
tend to isolate a reasonably large,
reasonably pure top bin.
We then conduct extensive experiments 
on semi-synthetic data, adapting 
a variety of binary classification 
datasets to the PU learning setup
and demonstrating the superior 
performance of BBE and PU-learning
with the CVIR objective.
Moreover, we show that (TED)$^n$,
which combines the two in an iterative fashion,
improves significantly over previous methods 
across several architectures 
on a range of image and text datasets.

\vspace{-5pt}
\section{Related Work}
\vspace{-3pt}
Research on MPE and PU learning date to \citep{denis1998pac,de1999positive,letouzey2000learning}
(see review by \cite{bekker2020learning}). 
\citet{elkan2008learning} first proposed 
to leverage a PvU classifier
%
to estimate the mixture proportion.
\citet{du2014semi} 
propose a different method 
for estimating the mixture 
coefficient based on Pearson divergence minimization. 
While they do not require a PvU classifier,
they suffer the same shortcoming. 
Both methods require that the positive 
and negative examples have disjoint support.
Our requirements are considerably milder.
%
\citet{blanchard2010semi} observe
that without assumptions on the underlying 
positive and negative distributions,
the mixture proportion is not identifiable.
Furthermore, \cite{blanchard2010semi} 
provide an \emph{irreducibility} condition
that identifies $\alpha$ 
and propose an estimator that converges 
to the true $\alpha$.
While their estimator can converge arbitrarily slowly, 
\citet{scott2015rate} showed faster convergence ($\calO(1/\sqrt{n})$)
under stronger conditions.
Unfortunately, despite its appealing theoretical properties
\citet{blanchard2010semi}'s estimator
is computationally infeasible. 
Building on \citet{blanchard2010semi},
\citet{sanderson2014class} and \citet{scott2015rate}
proposed estimating the mixture proportion 
from a ROC curve constructed 
for the PvU classifier. 
However, when the PvU classifier is not perfect, 
these methods are not clearly understood.  
%
%
\citet{ramaswamy2016mixture} proposed
the first computationally feasible 
algorithm for MPE
with convergence guarantees to the true proportion.   
Their method KM, requires 
embedding distributions onto an RKHS. 
However, their estimator underperforms 
on high dimensional datasets and 
scales poorly with large datasets. 
\citet{bekker2018estimating} proposed TIcE,
hoping to identify a positive subdomain 
in the input space using decision tree induction. 
This method also underperforms
in high-dimensional settings. 

In the most similar works,
\citet{jain2016nonparametric} and \citet{ivanov2019dedpul}
explore dimensionality reduction using a PvU classifier. 
Both methods estimate $\alpha$ 
through a procedure operating
on the PvU classifier's output.
However, neither methods 
has provided theoretical backing. 
\cite{ivanov2019dedpul} concede
that their method often fails 
and returns a zero estimate, 
requiring that they fall back 
to a different estimator.
Moreover while both papers state that their method 
require the Bayes-optimal PvU classifier 
to identify $\alpha$ in the transformed space, 
we prove that even when hypothesis class 
is well specified for PvN learning, 
PvU training can fail to recover
the Bayes-optimal scoring function.  
Furthermore, we also show that the heuristic estimator 
in \citet{scott2015rate} can be thought of as using 
PvU classifier for dimensionality reduction. 
While this heuristic is similar to 
our estimator in spirit, we show 
that the functional form of their estimator 
is different from ours and note 
that their heuristic 
enjoys no theoretical guarantee. 
By contrast, our estimator BBE 
is theoretically coherent under mild conditions 
and outperforms all of these methods empirically. 

Given $\alpha$, \citet{elkan2008learning} propose
a transformation via Bayes rule 
to obtain the PvN classifier. 
They also propose 
a weighted objective, 
with weights given 
by the PvU classifier. 
%
Other propose unbiased risk estimators~\citep{du2014analysis,du2015convex} 
which require the mixture proportion $\alpha$.
\citet{du2014analysis} proposed an unbiased 
estimator with non-convex loss functions
satisfying a specific symmetric condition, 
and subsequently \citet{du2015convex} 
generalized it to convex loss functions
(denoted uPU in our experiments). 
in our experiments. 
Noting the problem of overfitting 
in modern overparameterized models, 
\citet{kiryo2017positive} propose a 
regularized extension that clips 
the loss on unlabeled data to zero. 
This is considered the current 
state-of-the-art in PU literature 
(denoted nnPU in our experiments). 
More recently, \citet{ivanov2019dedpul} 
proposed DEDPUL, 
which finetunes the PvU classifiers
using several heuristics, Bayes rule, 
and Expectation Maximization (EM).
Since their method only applies a  
post-processing procedure, 
they rely on a good domain discriminator
classifier in the first place 
and several hyperparameters for their heuristics. 
%
%
%
Several classical methods attempt 
to learn weights that identify
reliable negative examples
\citep{liu2002partially, li2003learning, lee2003learning, liu2003building, zhang2005simple}. 
However, these earlier methods have not been 
successful with modern deep learning models. 

\section{Problem Setup}
By $\enorm{\cdot}$ and $\inner{\cdot}{\cdot}$,
we denote the Euclidean norm and inner product, respectively.
For a vector $v\in \Real^d$, 
we use $v_j$ to denote its $j^\text{th}$ entry, 
and for an event $E$, we let $\indict{E}$ 
denote the binary indicator of the event.
By $\abs{A}$, we denote the cardinality of set $A$.  
Let $\inpt \in \Real^d $ be the input space 
and $\out = \{-1, +1\}$ be the output space. 
Let $\Prob : \inpt\times\out \to [0,1]$
be the underlying joint distribution 
and let $p$ denote its corresponding density.

Let $\ProbP$ and $\ProbN$ be the 
class-conditional distributions for 
positive and negative class and 
$\pp(x) = p(x|y=+1)$ and $\pn(x) = p(x|y=-1)$ be
the corresponding class-conditional densities. 
$\ProbU$ denotes the distribution 
of the unlabeled data and $\pu$ denotes its density. 
Let $\alpha \in [0,1]$ be 
the fraction of positives 
among the unlabeled population,
i.e., $\ProbU = \alpha \ProbP + (1-\alpha)\ProbN$. 
When learning from positive and unlabeled data, 
we obtain i.i.d. samples from 
the positive (class-conditional) distribution, 
which we denote as
$X_p = \{x_1, x_2, \ldots, x_{n_p}\} \sim \ProbP^{n_p}$
and i.i.d samples from unlabeled distribution as 
$X_u = \{x_{n_p+1}, x_{n_p+2}, \ldots, x_{n_p + n_u}\} \sim \ProbU^{n_u}$. 

MPE is the problem of estimating $\alpha$.  
Absent assumptions on $\ProbP$, $\ProbN$ and $\ProbU$, 
the mixture proportion $\alpha$ is not 
identifiable~\citep{blanchard2010semi}. 
Indeed,  if $\ProbU  = \alpha \ProbP + (1-\alpha) \ProbN$, 
then any alternate decomposition of the form 
$\ProbU  = (\alpha - \gamma) \ProbP + (1-\alpha + \gamma) \ProbN^\prime$, 
for $\gamma \in [0, \alpha)$ and 
$\ProbN^\prime = (1 - \alpha + \gamma)^{-1}(\gamma \ProbP + (1-\alpha)\ProbN)$, 
is also valid. 
Since we do not observe samples 
from the distribution $\ProbN$, 
the parameter $\alpha$ is not identifiable.
\citet{blanchard2010semi} formulate an 
\emph{irreducibility} condition 
under which $\alpha$ is identifiable.
Intuitively, the condition restricts $\ProbN$ 
to ensure that it can not be a (non-trivial) mixture 
of $\ProbP$ and any other distribution. 
While this irreducibility condition 
makes $\alpha$ identifiable, 
in the worst-case, the parameter $\alpha$ 
can be difficult to estimate 
and any estimator must suffer 
an arbitrarily slow rate of convergence~\citep{blanchard2010semi}. 
In this paper, we propose mild conditions on the 
PvU classifier that make $\alpha$ identifiable 
and allows us to derive finite-sample 
convergence guarantees.  

With PU learning, the aim is to learn 
a classifier $f: \inpt \to [0,1]$ 
to approximate $p(y=+1|x)$. 
We assume that we are given a loss function 
$\lossell: [0 ,1] \times \out \to \Real $,
such that $\ell(z, y)$ is the loss incurred 
by predicting $z$ when the true label is $y$. 
For a classifier $f$ and a sampled set 
$X = \{x_1, x_2, \ldots, x_n\}$,  
we let $\wh L^+(f; X) = \sum_{i=1}^n \ell(f(x_i), +1) / n$ 
denote the loss when predicting the samples as positive 
and  $\wh L^-(f; X) = \sum_{i=1}^n \ell(f(x_i), -1) / n$ 
the loss when predicting the samples as negative. 
For a sample set $X$ each with true label $y$,
we define 0-1 error as 
$\wh \calE^{y}(f; X) = \sum_{i=1}^n \indict{ y(f(x_i) - t) \le 0 } / n$
for some predefined threshold $t$ . 
Unless stated otherwise, 
the threshold is assumed to be $0.5$. 

\section{Mixture Proportion Estimation}\label{sec:MPE}

\setlength{\textfloatsep}{12pt}
\begin{algorithm}[t]
  \caption{Best Bin Estimation (BBE)}
  \label{alg:MPE_PU}
  \begin{algorithmic}[1]
  \INPUT: Validation positive ($X_p$) and unlabeled ($X_u$) samples.
  Blackbox model classifier $\hat{\f}: \calX \to [0,1]$. 
  Hyperparameter $0< \delta,\gamma <1$.
    \STATE $Z_p, Z_u = f(X_p), f(X_u)$. 
    \STATE $\wh q_u (z), \wh q_p(z) = \frac{\sum_{z_i \in  Z_p} \indict{z_i \ge z}}{n_p}, \frac{\sum_{z_i \in  Z_u} \indict{z_i \ge z}}{n_u}$ for all $z \in [0,1]$. 
    \STATE {\update Estimate $\wh c \defeq \argmin_{c \in [0,1]} \left( \frac{\wh q_u(c)}{\wh q_p(c)}  + \frac{1 + \gamma}{\wh q_p(c)}\left( \sqrt{\frac{\log(4/\delta)}{2 n_u}} + \sqrt{\frac{\log(4/\delta)}{2n_p}}\right) \right)\,$. }
    \OUTPUT: { \update $\wh \alpha \defeq \frac{\wh q_u(\wh c)}{\wh q_p(\wh c)}$ }
\end{algorithmic}
\end{algorithm}

In this section, we introduce BBE, 
a new method that leverages 
a blackbox classifier $f$ to perform MPE 
and establish convergence guarantees. 
All proofs are relegated to \appref{ap:proof_mpe}. 
To begin, we assume access 
to a fixed classifier $f$.
For intuition, you may think of $f$ 
as a PvU classifer trained on some
portion fo the positive and unlabeled examples. 
In \secref{sec:classification}, 
we discuss other ways 
to obtain a suitable classifier from PU data.  

We now introduce some additional notation. 
Assume $f$ transforms an input $x\in\inpt$ 
to $z\in [0,1]$, i.e., $z = f(x)$.   
For given probability density function $p$ 
and a classifier $f$, 
define a function $q(z) = \int_{ A_z} p(x) dx$, 
where $A_z = \{x \in \inpt: f(x) \ge z\}$ for all $z\in [0,1]$. 
Intuitively, $q(z)$ captures the 
cumulative density of points in a top bin, 
the proportion of input domain 
that is assigned a value larger than $z$ 
by the classifier $f$ in the transformed space. 
We now define an empirical estimator $\wh q(z)$
given a set $X = \{x_1, x_2, \ldots, x_n\}$
sampled iid from $p(x)$. Let $Z = f(X)$. 
Define $ \wh q(z) = \sum_{i=1}^n \indict{z_i \ge z}/{n}$.
For each pdf $\pp$, $\pn$ and $\pu$, 
we define $\qp$, $\qn$ and $\qu$ respectively.

Without any assumptions on the underlying distribution 
and the classifier $f$, we aim to estimate 
$\alpha^* = \min_{c \in [0,1]} \qu(c)/\qp(c)$ with BBE. 
Later, under one mild assumption 
that empirically holds across numerous PU datasets, 
we show that $\alpha^* = \alpha$, i.e., $\alpha^*$ matches 
the true mixture proportion $\alpha$. 

Our procedure proceeds as follows: 
First, given a held-out dataset of positive ($X_p$) 
and unlabeled examples ($X_u$),
we push all examples through the classifier $f$
to obtain one-dimensional outputs 
$Z_p = f(X_p)$ and $Z_u = f(X_u)$. 
Next, with $Z_p$ and $Z_u$, 
we estimate $\wh q_p$ and $\wh q_u$. 
{\update 
Finally, we return the ratio 
$\wh q_u (\wh c) / \wh q_p (\wh c)$ at $\wh c$ 
that minimizes the upper confidence bound
(calculated using~\lemref{lem:ucb})
at a pre-specified level $\delta$ 
and a fixed parameter $\gamma \in (0,1)$.  
Our method is summarized in \algoref{alg:MPE_PU}.
For theoretical guarantees, we multiply 
the confidence bound term with $1+\gamma$ for 
a small positive constant $\gamma$. 
Refer to \appref{app:gamma_disc} for details.}
We now show that the proposed estimator 
comes with the following guarantee: 

{\update 
\begin{theorem}\label{thm:main_MPE}
Define $c^* = \argmin_{c \in [0,1]} \qu(c)/\qp(c)$. For $\min(n_p, n_u) \ge \frac{2\log(4/\delta)}{q_p(c^*)}$ and
for every $\delta >0$, 
the mixture proportion estimator $\wh \alpha$ 
defined in Algorithm~\ref{alg:MPE_PU} 
satisfies with probability $1-\delta$:  
\begin{align*}    
  \abs{ \wh \alpha - \alpha^*}  \le  \frac{c}{q_p(c^*)}\left( \sqrt{\frac{\log(4/\delta)}{n_u}} + \sqrt{\frac{\log(4/\delta)}{n_p}}\right)  
   \,,
\end{align*}
for some constant $c\ge0$.
\end{theorem}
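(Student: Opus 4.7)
My plan is to pair a one-shot DKW concentration step for the two empirical tail-probabilities with the optimality of $\hat c$ as a minimizer of the UCB from \lemref{lem:ucb}. Because $\hat q_p$ and $\hat q_u$ are simply empirical complementary CDFs of the one-dimensional push-forwards of $\ProbP$ and $\ProbU$ through $f$, the DKW inequality plus a union bound yields, with probability at least $1-\delta$,
$$\sup_{z\in[0,1]}|\hat q_p(z)-q_p(z)|\le \epsilon_p,\qquad \sup_{z\in[0,1]}|\hat q_u(z)-q_u(z)|\le \epsilon_u,$$
where $\epsilon_p = \sqrt{\log(4/\delta)/(2n_p)}$ and $\epsilon_u = \sqrt{\log(4/\delta)/(2n_u)}$. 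All that follows is conditioned on this event, on which \lemref{lem:ucb} certifies that the expression inside the $\argmin$ in \algoref{alg:MPE_PU} is a valid pointwise upper bound on the population ratio $q_u(c)/q_p(c)$.

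The sample-size hypothesis $\min(n_p,n_u)\ge 2\log(4/\delta)/q_p(c^*)$ is exactly what forces $\epsilon_p,\epsilon_u\le q_p(c^*)/2$, and therefore $\hat q_p(c^*)\ge q_p(c^*)/2$; this keeps the denominator at $c^*$ bounded away from zero. For the upper bound on $\hat\alpha$ I use the chain $\hat\alpha = \hat q_u(\hat c)/\hat q_p(\hat c)\le \mathrm{UCB}(\hat c)\le \mathrm{UCB}(c^*)$, the first inequality being trivial and the second being the defining optimality of $\hat c$. Expanding $\mathrm{UCB}(c^*)$, substituting the DKW estimates into $\hat q_u(c^*)$ and $\hat q_p(c^*)$, and using $\hat q_p(c^*)\ge q_p(c^*)/2$, a short algebraic rearrangement yields $\hat\alpha\le \alpha^*+O((\epsilon_p+\epsilon_u)/q_p(c^*))$.

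For the matching lower bound I start from the pointwise inequality $q_u(\hat c)/q_p(\hat c)\ge \alpha^*$ (the definition of $c^*$) and then transfer to the empirical ratio via the DKW bounds. The subtle point, and the main obstacle, is that this transfer costs $O(\epsilon/\hat q_p(\hat c))$, so I need to rule out the possibility that $\hat q_p(\hat c)$ is much smaller than $q_p(c^*)$. This is where the $(1+\gamma)$ slack in the UCB enters: any candidate $c$ at which $\hat q_p(c)$ shrinks to the order of $\epsilon_p$ would make the penalty $(1+\gamma)(\epsilon_u+\epsilon_p)/\hat q_p(c)$ strictly exceed $\mathrm{UCB}(c^*)$, so such a $c$ cannot be the minimizer; hence $\hat q_p(\hat c)\gtrsim q_p(c^*)$. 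With this bound in hand, the concentration error in the empirical ratio also carries a denominator of order $q_p(c^*)$, and combining with the upper bound above gives the claimed two-sided $O((\epsilon_p+\epsilon_u)/q_p(c^*))$ rate, matching the statement up to an absolute constant.
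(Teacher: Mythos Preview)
Your overall strategy matches the paper's: DKW uniformly on both empirical tails, UCB optimality at $\hat c$ versus $c^*$ for the upper bound, and a lower bound on $\hat q_p(\hat c)$ via the $(1+\gamma)$ slack for the lower bound. The upper-bound half is correct and essentially identical to the paper's argument.

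The lower-bound half has a real gap. You argue that when $\hat q_p(c)$ shrinks to order $\epsilon_p$ the \emph{penalty term alone}, namely $(1+\gamma)(\epsilon_u+\epsilon_p)/\hat q_p(c)$, already exceeds $\mathrm{UCB}(c^*)$, and from this you jump to $\hat q_p(\hat c)\ge c\, q_p(c^*)$. But $\mathrm{UCB}(c^*)\approx \alpha^*\le 1$, so the penalty-only comparison merely forces $\hat q_p(\hat c)\ge (1+\gamma)(\epsilon_u+\epsilon_p)/\mathrm{UCB}(c^*)$, i.e.\ $\hat q_p(\hat c)$ of order $\epsilon$, not of order $q_p(c^*)$. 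The entire intermediate range $\epsilon\ll \hat q_p(\hat c)\ll q_p(c^*)$ is left unexcluded, and on that range the ratio error $\epsilon/\hat q_p(\hat c)$ is $O(1)$, so the theorem would not follow. The missing ingredient (this is Part~1 of the paper's proof) is that on the DKW event the \emph{empirical ratio itself} obeys, for every $c$,
\[
\frac{\hat q_u(c)}{\hat q_p(c)} \;\ge\; \alpha^* \;-\; \frac{\epsilon_u+\alpha^*\epsilon_p}{\hat q_p(c)}
\;\ge\; \frac{\hat q_u(c^*)}{\hat q_p(c^*)} \;-\; \Bigl(\frac{1}{\hat q_p(c)}+\frac{1}{\hat q_p(c^*)}\Bigr)(\epsilon_u+\epsilon_p),
\]
so the full $\mathrm{UCB}(c)$ is at least $\hat q_u(c^*)/\hat q_p(c^*)+\bigl(\gamma/\hat q_p(c)-1/\hat q_p(c^*)\bigr)(\epsilon_u+\epsilon_p)$. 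Comparing this to $\mathrm{UCB}(c^*)$ shows $\mathrm{UCB}(c)\ge \mathrm{UCB}(c^*)$ whenever $\hat q_p(c)\le \tfrac{\gamma}{2+\gamma}\hat q_p(c^*)$, which is the lower bound you actually need. With this step added your proof goes through and coincides with the paper's.

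A minor side remark: the sample-size hypothesis as written in the statement does not give $\epsilon_p\le q_p(c^*)/2$ (it gives $\epsilon_p\le \sqrt{q_p(c^*)}/2$). The paper's own proof in fact uses $n\ge 2\log(4/\delta)/q_p(c^*)^2$, so the theorem statement appears to be missing a square in the denominator.
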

}

\begin{figure*}[t!]
  \centering 
  \subfigure[]{\includegraphics[width=0.33\linewidth]{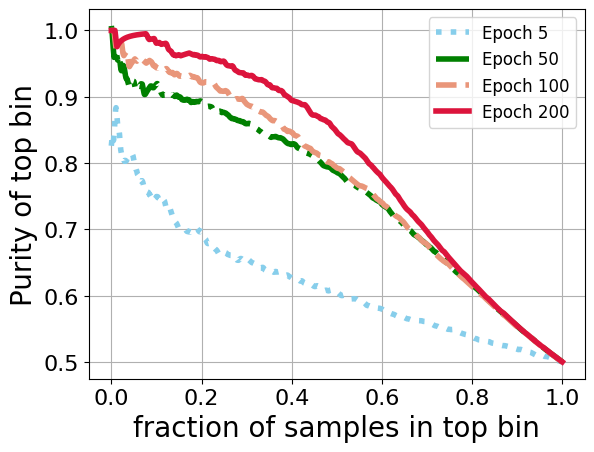}}\hfill
  \subfigure[]{\includegraphics[width=0.65\linewidth]{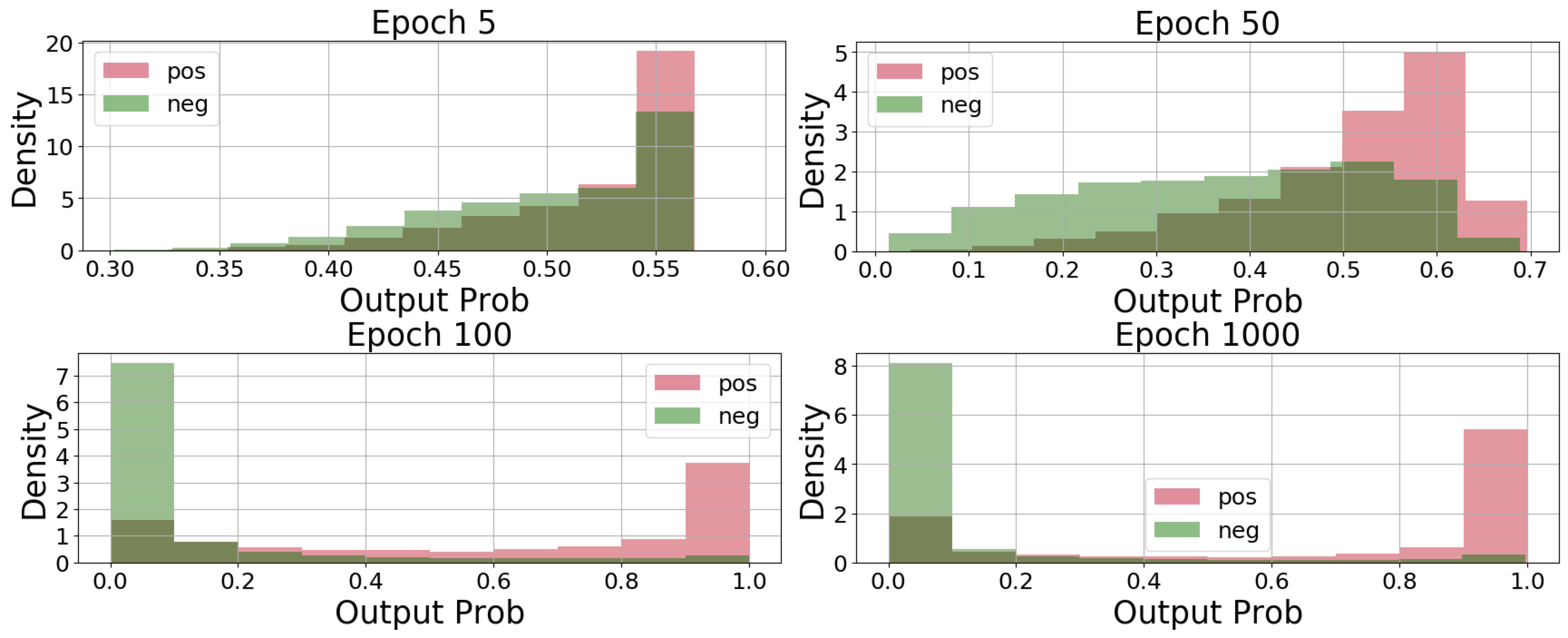}}
  \vspace{-5pt}
  \caption{(a) Purity and size (in terms of fraction of unlabeled samples) 
  in the top bin and
  (b) Distribution of predicted probabilities (of being positive)
  for unlabeled training data 
  as training proceeds  with (TED)$^n$. 
  Results with ResNet-18 on binary-CIFAR. 
  As in \figref{fig:intro},  we fix $W$ at $100$. 
  In \appref{ap:overfitting_unlabeled}, we show that as PvU training proceeds, 
  the purity of top bin degrades and the distribution 
  of predicted probabilities of positives and negatives 
  become less and less separable. 
  }
  \label{fig:loss_bin_property}
\end{figure*}

\thmref{thm:main_MPE} shows 
that with high probability,
our estimate is close to $\alpha^*$. 
The proof of the theorem is based on 
the following confidence bound inequality: 
\begin{lemma}\label{lem:ucb}
For every $\delta >0$, with probability at least $1-\delta$, 
we have for all $c \in [0,1]$
\begin{align*}
    \abs{ \frac{\wh q_u(c)}{ \wh q_p(c)} -  \frac{q_u(c)}{q_p(c)}} \le \frac{1}{\wh q_p(c)}\left( \sqrt{\frac{\log(4/\delta)}{2n_u}} + \frac{q_u(c)}{q_p(c)}\sqrt{\frac{\log(4/\delta)}{2n_p}}\right) \,.
\end{align*}
\end{lemma}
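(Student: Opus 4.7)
The plan is to reduce the lemma to a uniform deviation statement on two empirical CDFs combined with an elementary algebraic identity. Observe that $\hat q_u(c) = \tfrac{1}{n_u}\sum_{i} \indict{z_i \ge c}$ is essentially the right-tail empirical distribution function of the i.i.d.\ sample $Z_u = f(X_u)$, and likewise $\hat q_p(c)$ for $Z_p = f(X_p)$. Since the threshold class $\{\indict{\cdot \ge c} : c \in [0,1]\}$ is the canonical one-parameter monotone family, the Dvoretzky--Kiefer--Wolfowitz inequality yields, for each $\eta > 0$,
\begin{equation*}
\Pr\!\left[\sup_{c \in [0,1]} |\hat q_u(c) - q_u(c)| > \eta\right] \le 2 e^{-2 n_u \eta^2},
\end{equation*}
and analogously for $\hat q_p$ with $n_p$. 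Setting each tail to $\delta/2$ and union-bounding gives, simultaneously with probability at least $1 - \delta$,
\begin{equation*}
\sup_{c}|\hat q_u(c) - q_u(c)| \le \sqrt{\tfrac{\log(4/\delta)}{2 n_u}}, \qquad \sup_{c}|\hat q_p(c) - q_p(c)| \le \sqrt{\tfrac{\log(4/\delta)}{2 n_p}}.
\end{equation*}

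Next I would do the standard add-and-subtract trick to split the deviation of the ratio. Writing
\begin{equation*}
\frac{\hat q_u(c)}{\hat q_p(c)} - \frac{q_u(c)}{q_p(c)} = \frac{\hat q_u(c) q_p(c) - q_u(c) q_p(c) + q_u(c) q_p(c) - q_u(c) \hat q_p(c)}{\hat q_p(c)\, q_p(c)},
\end{equation*}
and applying the triangle inequality, one obtains
\begin{equation*}
\left| \frac{\hat q_u(c)}{\hat q_p(c)} - \frac{q_u(c)}{q_p(c)} \right| \le \frac{|\hat q_u(c) - q_u(c)|}{\hat q_p(c)} + \frac{q_u(c)}{q_p(c)} \cdot \frac{|q_p(c) - \hat q_p(c)|}{\hat q_p(c)}.
\end{equation*}
Plugging in the two DKW bounds and factoring out $1/\hat q_p(c)$ gives exactly the claimed inequality, uniformly in $c$.

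There is no serious technical obstacle here: the only subtlety is making sure the uniformity over $c$ is handled correctly, which is why invoking DKW (rather than a pointwise Hoeffding bound plus a union bound over a discretization of $[0,1]$) is the cleanest route. A second minor point is that $\hat q_p(c)$ appears in the denominator of the bound, which is harmless because the lemma is stated in terms of $\hat q_p(c)$ itself; this is also why the algorithm adds the inflation factor $1+\gamma$ when minimizing the data-dependent upper confidence bound, so that one can later transfer the bound to involve $q_p(c^*)$ in Theorem~\ref{thm:main_MPE}. Writing the two DKW tail events and the four-term algebraic split should suffice for a complete proof.
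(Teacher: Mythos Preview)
Your proposal is correct and follows essentially the same approach as the paper: the paper's proof also invokes the DKW inequality for each of $\hat q_u$ and $\hat q_p$, union-bounds to obtain simultaneous uniform control with probability $1-\delta$, and then uses the identical add-and-subtract decomposition of the ratio followed by the triangle inequality. Your write-up is in fact slightly cleaner than the paper's, which contains minor typos in the denominators of the intermediate expressions.
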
  

Now, we discuss the convergence of our estimator 
to the true mixture proportion $\alpha$.
Since, $p_u(x) = \alpha p_p(x) + (1-\alpha) p_n(x)$, for all $x\in \calX$, 
we have  $q_u(z) = \alpha q_p(z) + (1-\alpha) q_n(z)$, for all $z \in [0,1]$. 

{\update 
\begin{corollary} \label{corollary:MPE_final}
Define $c^* = \argmin_{c \in [0,1]} \qn(c)/\qp(c)$. Assume $\min(n_p, n_u) \ge \frac{2\log(4/\delta)}{q_p(c^*)}$.
For every $\delta >0$, $\wh \alpha$ (in Algorithm~\ref{alg:MPE_PU})
satisfies with probability $1-\delta$:  
\begin{align*}    
  &\alpha  - \frac{c_1}{q_p(c^*)}\left( \sqrt{\frac{\log(4/\delta)}{n_u}} + \sqrt{\frac{\log(4/\delta)}{n_p}} \right)  \le \wh \alpha \,, \text{and} \\
  &\wh \alpha \le  \alpha + (1-\alpha)\frac{q_n(c^*)}{q_p(c^*)} +\frac{c_2}{q_p(c^*)} \left( \sqrt{\frac{\log(4/\delta)}{n_u}}  + \sqrt{\frac{\log(4/\delta)}{n_p}} \right) \,,
\end{align*}
for some constant $c_1, c_2\ge0$.
\end{corollary}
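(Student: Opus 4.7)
The plan is to deduce the corollary almost directly from Theorem~\ref{thm:main_MPE} by exploiting the mixture identity $q_u = \alpha q_p + (1-\alpha) q_n$ to rewrite $\alpha^*$ in terms of the true mixture proportion $\alpha$ and the optimal purity ratio $q_n(c^*)/q_p(c^*)$.

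First I would note that, since $p_u(x) = \alpha p_p(x) + (1-\alpha) p_n(x)$, integrating over $A_z = \{x : f(x) \ge z\}$ gives $q_u(z) = \alpha q_p(z) + (1-\alpha) q_n(z)$ for every $z \in [0,1]$. Dividing by $q_p(z)$ yields
\begin{equation*}
\frac{q_u(z)}{q_p(z)} \;=\; \alpha + (1-\alpha) \frac{q_n(z)}{q_p(z)}.
\end{equation*}
Because $1-\alpha \ge 0$, the right-hand side is a monotone affine function of $q_n(z)/q_p(z)$, so the minimizer in $z$ of $q_u/q_p$ coincides with the minimizer of $q_n/q_p$. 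In particular, the point $c^*$ in Theorem~\ref{thm:main_MPE} and the point $c^*$ defined in the corollary are the same, and
\begin{equation*}
\alpha^* \;=\; \min_{c \in [0,1]} \frac{q_u(c)}{q_p(c)} \;=\; \alpha + (1-\alpha)\frac{q_n(c^*)}{q_p(c^*)}.
\end{equation*}

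Next I would invoke Theorem~\ref{thm:main_MPE} verbatim: under the sample-size condition $\min(n_p,n_u) \ge 2\log(4/\delta)/q_p(c^*)$, with probability at least $1-\delta$,
\begin{equation*}
\bigl| \wh\alpha - \alpha^* \bigr| \;\le\; \frac{c}{q_p(c^*)}\Bigl(\sqrt{\tfrac{\log(4/\delta)}{n_u}} + \sqrt{\tfrac{\log(4/\delta)}{n_p}}\Bigr).
\end{equation*}
Substituting the expression for $\alpha^*$ just derived gives both directions. For the upper bound on $\wh\alpha$, I would add the deviation term to $\alpha + (1-\alpha) q_n(c^*)/q_p(c^*)$ and take $c_2 = c$. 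For the lower bound, I would first drop the non-negative term $(1-\alpha) q_n(c^*)/q_p(c^*) \ge 0$ to pass from $\alpha^*$ down to $\alpha$, then subtract the deviation term with $c_1 = c$.

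There is no substantive obstacle in this argument; the only point that merits care is verifying that the two candidate definitions of $c^*$ really do coincide (needed so that the constant $q_p(c^*)$ appearing in the denominators of Theorem~\ref{thm:main_MPE} is literally the same quantity used in the corollary's hypothesis and conclusion), and that the one-sided slack $(1-\alpha) q_n(c^*)/q_p(c^*)$ appears only on the upper side, since $\alpha^* \ge \alpha$ always. Both points follow immediately from the mixture identity $q_u = \alpha q_p + (1-\alpha) q_n$ and the sign of $1-\alpha$, so the corollary reduces to a single line of algebra applied to Theorem~\ref{thm:main_MPE}.
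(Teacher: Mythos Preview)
Your proposal is correct and follows essentially the same approach as the paper: use the mixture identity $q_u = \alpha q_p + (1-\alpha) q_n$ to write $\alpha^* = \alpha + (1-\alpha) q_n(c^*)/q_p(c^*)$, then read off both inequalities from Theorem~\ref{thm:main_MPE}. You actually add a point the paper leaves implicit, namely that the $c^*$ defined in the corollary (via $\argmin q_n/q_p$) coincides with the $c^*$ in Theorem~\ref{thm:main_MPE} (via $\argmin q_u/q_p$) because $q_u/q_p$ is an increasing affine function of $q_n/q_p$; this is needed so that the $q_p(c^*)$ appearing in the sample-size hypothesis and in the error denominators is the same quantity in both statements.
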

}

As a corollary to \thmref{thm:main_MPE}, 
we show that our estimator $\wh \alpha$ converges 
to the true $\alpha$ with convergence rate $\min(n_p,n_u)^{-1/2}$, 
as long as there exist a threshold $c_f \in (0,1)$ 
such that $q_p(c_f) \ge \epsilon_p$ and $q_n(c_f) = 0$ 
for some constant $\epsilon_p >0$. 
We refer to this condition as the
\emph{pure positive bin} property.

%
Note that in a more general case, 
our bound in \corollaryref{corollary:MPE_final} 
captures the tradeoff
due to the proportion of 
negative examples in the top bin (bias) 
versus the proportion of positives
in the top bin (variance). 

\textbf{Empirical Validation {} {}}
We now empirically validate 
the positive pure top bin property
(\figref{fig:loss_bin_property}). 
We observe that as PvU training proceeds, 
purity of the top bin improves 
for a fixed fraction of samples in the top bin. 
Moreover, this behavior 
becomes more pronounced 
when learning a PvU classifier 
with the CVIR objective 
proposed in the following section.

\textbf{Comparison with existing methods {} {}}
Due to the intractability of \citet{blanchard2010semi} 
estimator, \citet{scott2015rate} implements a heuristic  
based on identifying a point on the AUC curve such that 
the slope of the line segment between this point 
and (1,1) is minimized.
While this approach is similar 
in spirit to our BBE method, 
there are some striking differences. 
First, the heuristic estimator in \citet{scott2015rate}
provides no theoretical guarantees,
whereas we provide guarantees 
that BBE will converge to the best estimate 
achievable over all choices of the bin size
and provide consistent estimates 
whenever a pure top bin exists.
Second, while both estimates involve thresholds, 
the functional form of the estimates are different.
Corroborating theoretical results of BBE, 
we observe that the choices in BBE create 
substantial differences in 
the empirical performance as observed 
in \appref{app:comparison}. 
We work out details of comparison between 
\citet{scott2015rate} heuristic 
and BBE in \appref{app:comparison}.

On the other hand, recent 
works~\citep{jain2016nonparametric,ivanov2019dedpul}
that use PvU classifier for dimensionality reduction,  
discuss Bayes optimality 
of the PvU classifier 
(or its one-to-one mapping)
as a sufficient condition 
to preserve $\alpha$ in 
transformed space. 
By contrast, we show that the milder 
pure positive bin property 
is sufficient to guarantee consistency 
and achieve $\calO(1/\sqrt{n})$ rates. 
Furthermore, in a simple toy setup in \appref{ap:failure},
we show that even when the hypothesis class
is well specified for PvN learning,
it will not in general contain
the Bayes optimal PvU classifier
and thus PvU training 
will not recover the 
Bayes-optimal scoring function, 
even in population.
Contrarily, we show that   
any monotonic mapping 
of the Bayes-optimal PvU scoring function 
induces a positive pure top bin property. 
%
%
%
We leave further theoretical investigations
concerning conditions under which
a pure positive top bin arises to future work.
\section{PU-Learning} \label{sec:classification}

\begin{algorithm}[t!]
  \caption{PU learning with Conditional Value Ignoring Risk (CVIR) objective }
  \label{alg:classification_PU}
  \begin{algorithmic}[1]
  \INPUT: Labeled positive training data ($X_p$) and unlabeled training samples ($X_u$). Mixture proportion estimate $\alpha$. 
    \STATE Initialize a training model $f_\theta$ and an stochastic optimization algorithm $\calA$. 
    \STATE $X_n \defeq X_u$.
    \WHILE{training error $\wh \calE^+(f_\theta; X_p) + \wh\calE^-(f_\theta; X_n)$ is not converged} 
        \STATE Rank samples $x_u \in X_u$ according to their loss values $\lossell( f_\theta(x_u),-1)$. 
        \STATE $X_n \defeq X_{u, 1-\alpha}$ where $X_{u, 1-\alpha}$ denote the lowest ranked $1- \alpha$ fraction of samples. 
        \STATE{Shuffle $(X_p, X_n)$ into $B$ mini-batches. With $(X_p^i, X_n^i)$ we denote $i$-th mini-batch}.
        \FOR{ $i=1$ to $B$}
            \STATE Set the gradient $\grad_\theta  \left[ \alpha \cdot \wh L^+(f_\theta; X_p^i) + (1-\alpha) \cdot \wh L^-(f_\theta; X_n^i) \right]$ and update $\theta$ with algo. $\calA$. 
        \ENDFOR
    \ENDWHILE
  \OUTPUT: Trained classifier $f_\theta$ 
\end{algorithmic}
\end{algorithm}

Given positive and unlabeled data, 
we hope not only to identify $\alpha$,
but also to obtain a classifier 
that distinguishes effectively
between positive and negative samples. 
In supervised learning with separable data 
(e.g., cleanly labeled image data), 
overparameterized models generalize well 
even after achieving near-zero training error. 
However, with PvU training
over-parameterized models can
memorize the unlabeled positives,
assigning them confidently 
to the negative class,
which can severely hurt
generalization on PN data~\citep{zhang2016understanding}.
Moreover, while unbiased losses exist
that estimate the PvN loss given PU data 
and the mixture proportion $\alpha$,
this unbiasedness only holds 
before the loss is optimized,
and becomes ineffective 
with powerful deep learning models 
capable of memorization.

A variety of heuristics,
including ad-hoc early stopping criteria,
have been explored~\cite{ivanov2019dedpul},
where training proceeds 
until the loss on unseen PU data
ceases to decrease. 
However, this approach 
leads to severe under-fitting 
(results in \appref{ap:early_stopping}).
On the other hand, by regularizing the loss function, 
nnPU~\citet{kiryo2017positive} mitigates
overfitting issues due to memorization. 

However, we observe that nnPU 
still leaves a substantial accuracy gap 
when compared to a model trained just on the 
positive and negative (from the unlabeled) data 
(ref. experiment in \appref{ap:PN}).  
This leads us to ask the following question: 
\emph{can we improve performance 
over nnPU of a model 
just trained with PU data  
and bridge this gap?} 
In an ideal scenario, if we could 
identify and remove all the positive
points from the unlabeled data 
during training
then we can hope to achieve improved
performance over nnPU. 
Indeed, in practice, we observe that 
in the initial stages of PvU training,
the model assigns much higher scores 
to positives than to negatives 
in the unlabeled data 
(\figref{fig:loss_bin_property}(b)). 

%
%
Inspired by this observation, we propose CVIR,  
a simple yet effective objective for PU learning. 
Below, we present our method assuming 
an access to the true MPE.
Later, we combine BBE with CVIR optimization,
yielding (TED)$^n$, 
an alternating optimization
that significantly improves 
both the BBE estimates and the PvU classifier.

Given a training set of positives $X_p$ and 
unlabeled $X_u$ and the mixture proportion $\alpha$, 
we begin by ranking the unlabeled data according 
the predicted probability (of being positive) 
by our classifier. 
Then, in every epoch of training,
we create a (temporary) set 
of provisionally negative samples $X_n$
by removing $\alpha$ fraction
of the unlabeled samples
currently scored as most positive.
Next, we update our classifier
by minimize the loss on the positives $X_p$
and provisional negatives $X_n$
by treating them as negatives.
We repeat this procedure until 
the training error on $X_p$ and $X_n$ converges.  
Likewise nnPU, note that this procedure 
does not need early stopping.
Summary in \algoref{alg:classification_PU}.  

In \appref{ap:anlysis_CVuO}, 
we justify our loss function in the scenario 
when the positives and 
negatives are separable.
For a more general scenario, 
we show that each step of our alternating 
procedure in CVIR cannot increase the 
population loss and hence, 
CVIR can only improve (or plateau)  
after every iteration. 

\textbf{\texorpdfstring{(TED)$^n$}: Integrating BBE and CVIR {} {} } 
We are now ready to present our 
algorithm Transfer, Estimate and Discard (TED)$^n$ 
that combines BBE and CVIR objective.

\begin{algorithm}[t!]
  \caption{Transform-Estimate-Discard (TED)$^n$ }
  \label{alg:classification_MPE_PU}
  \begin{algorithmic}[1]
  \INPUT: Positive data ($X_p$) and unlabeled samples ($X_u$). Hyperparameter $W, \delta$. 
    \STATE Initialize a training model $f_\theta$ and an stochastic optimization algorithm $\calA$.
    \STATE Randomly split positive and unlabeled data into training $X^1_p, X^1_u$ and hold-out set ($X^2_p, X^2_u$).
    \STATE $X^1_n\defeq X^1_u$.

    \COMMENT {// Warm start with domain discrimination training}
    \FOR{ $i=1$ to $W$} 
        \STATE{Shuffle $(X_p^1, X_n^1)$ into $B$ mini-batches. With $({X^1_p}^i, {X^1_n}^i)$ we denote $i$-th mini-batch}. 
        \FOR{ $i=1$ to $B$} 
            \STATE Set the gradient $\grad_\theta \left[ \wh L^+(f_\theta; {X^1_p}^i) + \wh L^-(f_\theta; {X^1_n}^i) \right]$ and update $\theta$ with algorithm $\calA$.
        \ENDFOR
    \ENDFOR
    \WHILE{training error $\wh \calE^+(f_\theta; X^1_p) + \wh\calE^-(f_\theta; X^1_n)$ is not converged}
        \STATE Estimate $\wh \alpha$ using \algoref{alg:MPE_PU} with $(X^2_p,X^2_u)$ and $f_\theta$ as input.
        \STATE Rank samples $x_u \in X^1_u$ according to their loss values $l(f_\theta(x_u),-1)$.
        \STATE $X^1_n \defeq X^1_{u, 1-\wh \alpha}$ where $X^1_{u, 1-\wh \alpha}$ denote the lowest ranked $1- \wh \alpha$ fraction of samples.
        \STATE Train model $f_\theta$ for one epoch on $(X^1_p,X^1_n)$ as in Lines 4-7.
    \ENDWHILE
  \OUTPUT: Trained classifier $f_\theta$ 
\end{algorithmic}
\end{algorithm}

First, we observe the interaction  
between BBE and CVIR objective. 
If we have an accurate mixture 
proportion estimate, then 
it leads to improved classifier, 
in particular, we reject 
accurate number of
prospective positive
samples 
from unlabeled.  
Consequently, updating the 
classifier to minimize loss on 
positive versus retained unlabeled 
improves purity of top bin. 
%
This leads to an 
obvious alternating procedure
where at each epoch, we first use BBE 
to estimate $\wh \alpha$ 
and then update the classifier 
with CVIR objective with 
$\wh \alpha$ as input.  
We repeat this until training error 
has not converged. 
Our method is summarized 
in \algoref{alg:classification_MPE_PU}. 

Note that we need 
to warm start with PvU (positive versus negative) 
training, since in the 
initial stages mixture proportion
estimate is often close to 1 
rejecting all the 
unlabeled examples.
However, in next section,
we show that our procedure is not sensitive 
to the choice of number of warm start epochs
and in a few cases with large datasets, 
we can even get away without warm start 
(i.e., $W=0$) without hurting 
the performance. 
Moreover, recall that our aim is to 
distinguish positive versus negative
examples among the unlabeled set  
where the proportion of positives
is determined by the true mixture proportion $\alpha$.
However, unlike CVIR, we do not re-weight the losses 
in (TED)$^n$. While true MPE $\alpha$ is unknown, one 
natural choice is to use the estimate 
$\wh \alpha$ at each iteration. 
However, in our initial experiments, we observed that 
re-weighted objective 
with estimate $\wh \alpha$ led to  
comparatively poor classification performance 
due to presence of bias in estimate $\wh \alpha$ 
in the initial iterations. 
We note that for deep neural networks 
(for which model mis-specification 
is seldom a prominent concern) 
and when the underlying classes are separable 
(as with most image datasets), 
it is known that importance weighting has little to 
no effect on the final classifier~\citep{byrd2019effect}. 
Therefore, we may not need importance-reweighting 
with (TED)$^n$ on separable datasets.
Consequently, following earlier 
works~\citep{kiryo2017positive,du2015convex} 
we do not re-weight the loss with our (TED)$^n$ procedure. 
In future work, a simple empirical strategy can be explored 
where we first obtain an estimate of $\wh \alpha$ by running the 
full (TED)$^n$ procedure till convergence and then discarding
the (TED)$^n$ classifier, we use estimate 
$\wh \alpha$ to train a fresh classifier with CVIR procedure.  


Finally, we discuss an important distinction with Dedpul 
which is also an alternating procedure.   
While in our algorithm, after updating mixture 
proportion estimate we retrain the 
classifier, Dedpul fixes the 
classifier, obtains output probabilities  
and then iteratively updates the 
mixture proportion estimate (prior)
and output probabilities (posterior). 
Dedpul doesn't re-train the 
classifier. 


\section{Experiments} \label{sec:exp}

\begin{figure*}[t!]
  \centering 
  \subfigure{\includegraphics[width=0.45\linewidth]{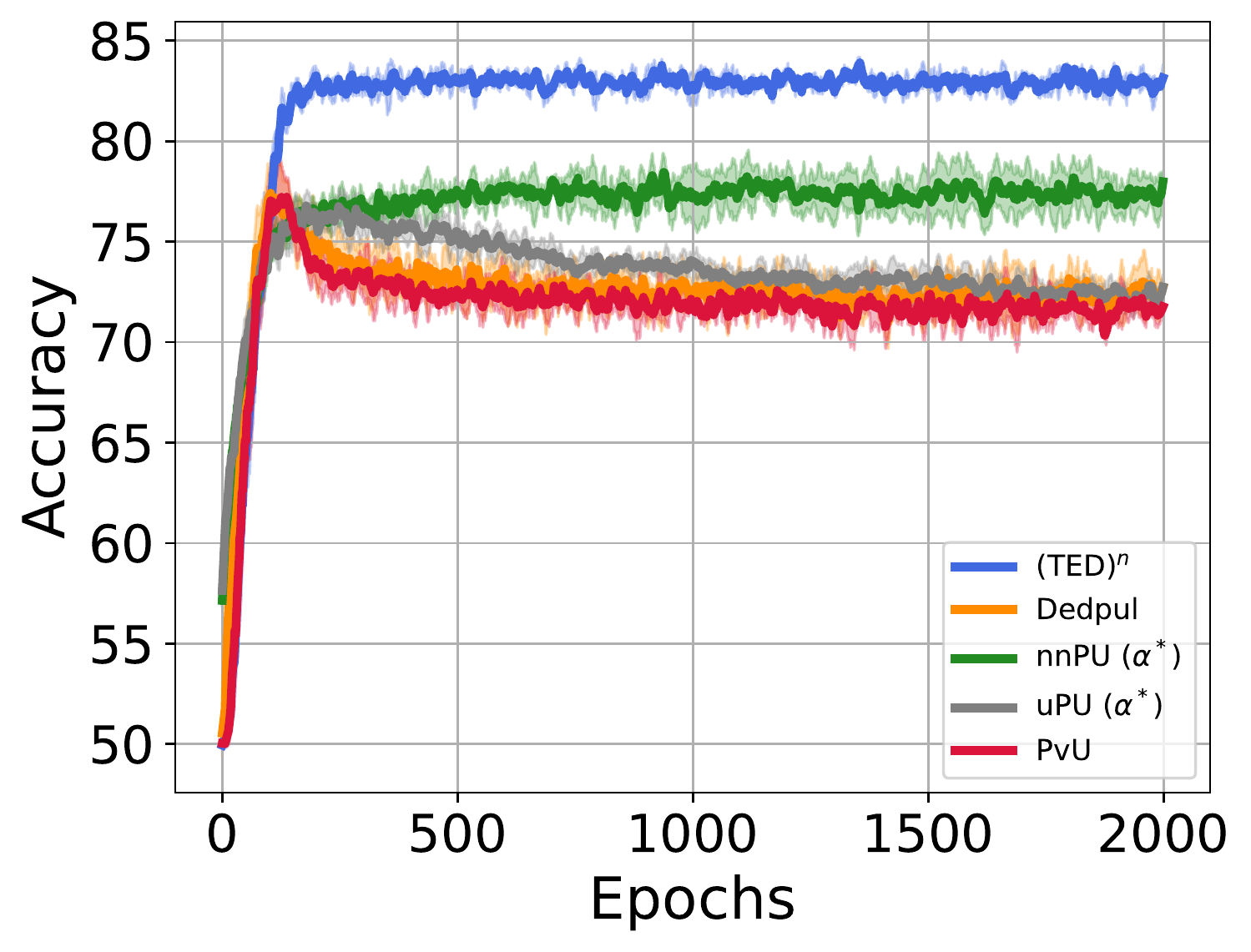}}\hfill
  \subfigure{\includegraphics[width=0.46\linewidth]{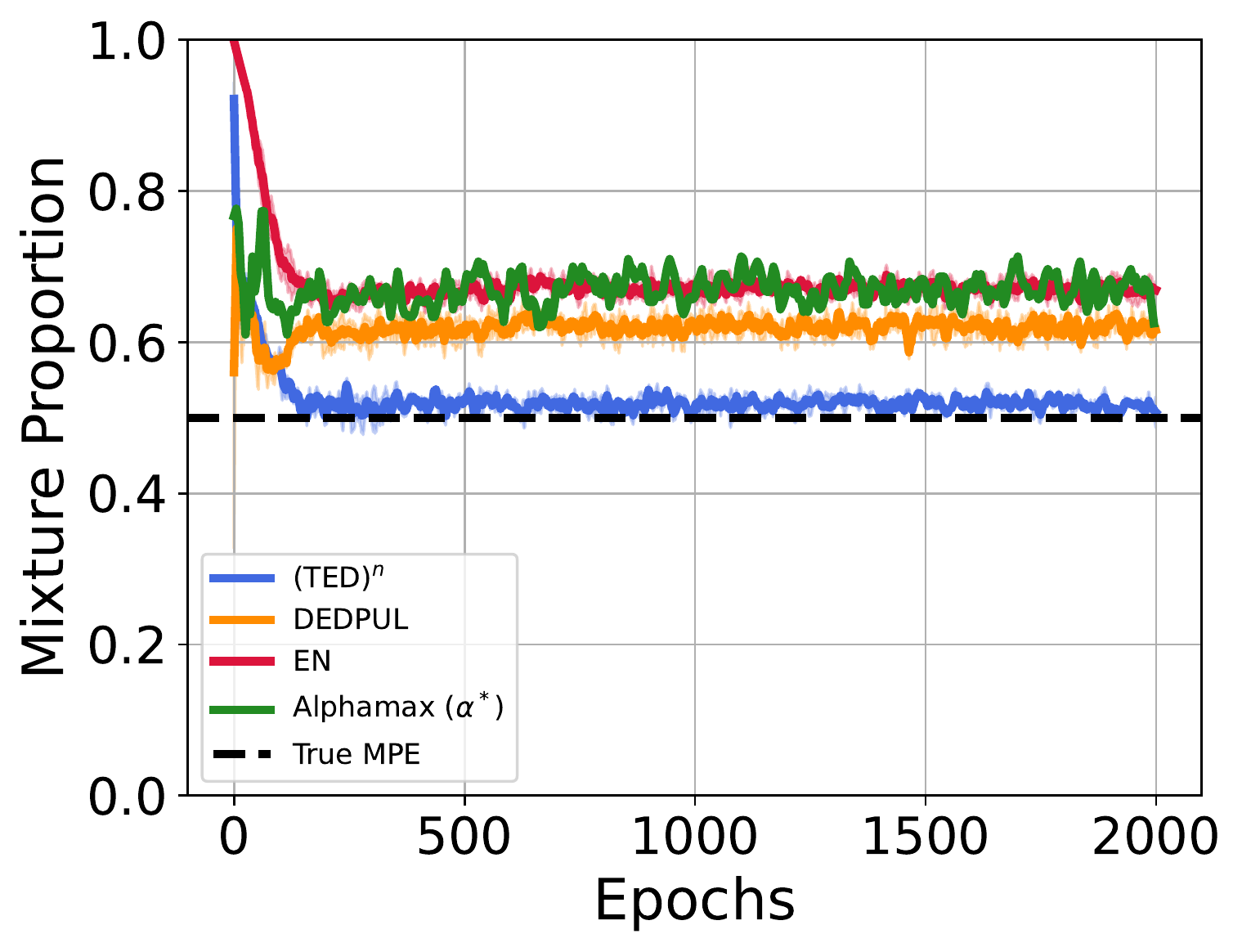}}
  \caption{ Epoch wise results with ResNet-18 trained on binary-CIFAR when $\alpha$ is $0.5$.
  Parallel results on other datasets and architectures in \appref{ap:figure3_app}.  
  For both classification and MPE, (TED)$^n$ substantially improves over existing methods.
  Additionally, (TED)$^n$ maintains the superior performance till convergence removing the 
  need for early stopping. 
  Results aggregated over 3 seeds.}
  \label{fig:cifar_results}
\end{figure*}

Having presented our PU learning 
and MPE algorithms, 
we now compare their performance 
with other methods empirically. 
We mainly focus on vision 
and text datasets in our experiments. 
We include results on UCI datasets in 
\appref{ap:UCI}.

\textbf{Datasets and Evaluation {} {}} 
We simulate PU tasks on CIFAR-10~\citep{krizhevsky2009learning},
MNIST~\citep{lecun1998mnist}, 
and IMDb sentiment analysis~\citep{maas2011learning} datasets. 
We consider binarized versions of CIFAR-10 and MNIST.  
On CIFAR-10 dataset, we consider 
two classification problems: 
(i)  binarized CIFAR, i.e., first 5 classes vs rest; 
(ii) Dog vs Cat in CIFAR. 
Similarly, on MNIST, we consider: 
(i)  binarized MNIST, i.e., digits 0-4 vs 5-9; 
(ii) MNIST17, i.e., digit 1 vs 7. 
IMDb dataset is binary. 
%
%
%
For MPE, we use a held out PU validation set.  
To evaluate PU classifiers, 
we calculate accuracy on  
held out positive versus negative dataset.
For baselines that suffer from 
issues due to overfitting on unlabeled data, 
we report results with 
an \emph{oracle early stopping} criterion. 
In particular, we report the 
accuracy averaged 
over $10$ iterations of  
the best performing model
as evaluated on positive versus negative data.
Note that we use this oracle stopping criterion
only for previously proposed methods and not for methods proposed
in this work. This allows us to compare 
(TED)$^n$ with the best performance achievable
by previous methods that 
suffer from over-fitting issues.   
With nnPU and (TED)$^n$, 
we report average accuracy 
over $10$ iterations 
of the final model.

\textbf{Architectures {} {}}
For CIFAR datasets,
we consider (fully connected) 
multilayer perceptrons (MLPs) 
with ReLU activations, 
all convolution nets~\citep{springenberg2014striving},
and ResNet18~\citep{he2016deep}.
For MNIST, we consider
multilayer perceptrons (MLPs) 
with ReLU activations 
For the IMDb dataset, 
we fine-tune an off-the-shelf uncased BERT model
\citep{devlin2018bert, wolf-etal-2020-transformers}. 
We did not tune hyperparameters 
or the optimization algorithm---instead 
we use the same benchmarked hyperparameters 
and optimization algorithm for each dataset. 
For our method,
we use cross-entropy loss. 
For uPU and nnPU, we use 
Adam~\citep{kingma2014adam}
with sigmoid loss. 
We provide additional details about the datasets 
and architectures in \appref{ap:details}.

\begin{table}[t]
  \centering
  \small
  \tabcolsep=0.12cm
  \renewcommand{\arraystretch}{1.2}
  \begin{tabular}{@{}*{9}{c}@{}}
  \toprule
  Dataset & Model  & \thead{(TED)$^n$} &  \thead{BBE$^*$} & \thead{DEDPUL$^*$} & \thead{AlphaMax$^*$} & \thead{EN$^*$} & \thead{KM2} & \thead{TiCE} \\
  \midrule
  \multirow{3}{*}{ \parbox{1.5cm}{\centering Binarized CIFAR} }  & ResNet & $\mathbf{0.026}$  & $0.091$ & $0.091$ & $0.125$ & $0.192$  & & \\
  & All Conv & $0.042$  & $\mathbf{0.037}$ & $0.052$ & $0.09$ & $0.221$  & $0.168$ & $0.251$  \\
  & MLP & $0.225$  & $0.177$  & $\mathbf{0.138}$ & $0.3$ & $0.372$ & &  \\
  \midrule
  \multirow{2}{*}{ \parbox{1.5cm}{\centering CIFAR Dog vs Cat} }  & ResNet & $\mathbf{0.078}$  & $0.176$ & $0.170$ & $0.17$ & $0.226$ & $0.331$ & $0.286$ \\
  & All Conv & $\mathbf{0.066}$  & $0.128$ & $0.115$ & $0.19$& $0.250$ &  &  \\
  \midrule 
  \multirow{1}{*}{ \parbox{2.5cm}{\centering Binarized MNIST} } & MLP & $\mathbf{0.024}$  & $0.032$ & $0.031$ & $0.090$ & $0.080$ & $0.029$ & $0.056$  \\
  \midrule
  \multirow{1}{*}{ \parbox{1.5cm}{\centering MNIST17} }  & MLP & $\mathbf{0.003}$  & $0.023$ & $0.021$ & $0.075$ & $0.028$ & $0.022$ & $0.043$ \\
  \midrule
  \multirow{1}{*}{ \parbox{1.5cm}{\centering IMDb} }  & BERT & $\mathbf{0.008}$ & $0.011$ & $0.016$ & $0.07$ & $0.12$ & - & - \\
  \bottomrule 
  \end{tabular}  
  \vspace{5pt}
  \caption{Absolute estimation error when $\alpha$ is 0.5. "*" denote oracle early stopping as defined in \secref{sec:exp}.
  (TED)$^n$ significantly reduces estimation error when compared with existing methods.  Results reported by aggregating absolute error over 10 epochs and 3 seeds. For aggregate numbers with standard deviation see \appref{app:results_std}. 
  }\label{table:MPE}
\end{table}

\begin{table}[t]
  \centering
  \small
  \tabcolsep=0.12cm
  \renewcommand{\arraystretch}{1.2}
  \begin{tabular}{@{}*{8}{c}@{}}
  \toprule
  Dataset & Model  & \thead{(TED)$^n$ \\(unknown $\alpha$)}  & \thead{CVIR\\(known $\alpha$)} & \thead{PvU$^*$ \\(known $\alpha$)} & \thead{DEDPUL$^*$ \\(unknown $\alpha$)}  & \thead{nnPU \\(known $\alpha$)} & \thead{uPU$^*$ \\(known $\alpha$)} \\
  \\
  \midrule
  \multirow{3}{*}{ \parbox{1.5cm}{\centering Binarized CIFAR} }  & ResNet & $\mathbf{82.7}$ & $82.3$& $76.9$ & $77.1$ & $77.2$ & $76.7$ \\
  & All Conv & $77.9$  & $\mathbf{78.1}$ & $75.8$ & $77.1$ & $73.4$ & $72.5$   \\
  & MLP & $64.2$  & $\mathbf{66.9}$ & $61.6$ & $62.6$ & $63.1$ & $64.0$\\
  \midrule
  \multirow{2}{*}{ \parbox{1.5cm}{\centering CIFAR Dog vs Cat} }  & ResNet & $\mathbf{75.2}$  & $73.3$ & $67.3$ & $67.0$ & $71.8$ & $68.8$  \\
  & All Conv & $\mathbf{73.0}$  & $71.7$ & $70.5$ & $69.2$ & $67.9$ & $67.5$  \\
  \midrule 
  \multirow{1}{*}{ \parbox{1.5cm}{\centering Binarized MNIST} } & MLP & $95.6$  & $\mathbf{96.3}$ & $94.2$ & $94.8$ & $96.1$ & $95.2$  \\
  \addlinespace[0.2cm]
  \midrule
  \multirow{1}{*}{ \parbox{1.5cm}{\centering MNIST17} }  & MLP & $\mathbf{98.7}$  & $\mathbf{98.7}$ & $96.9$ & $97.7$ & $98.4$ & $98.4$ \\
  \midrule
  \multirow{1}{*}{ \parbox{1.5cm}{\centering IMDb} }  & BERT & $\mathbf{87.6}$ & $87.4$ & $86.1$ & $87.3$& $86.2$  & $85.9$ \\
  \bottomrule 
  \end{tabular}  
  \vspace{5pt}
  \caption{Accuracy for PvN classification with PU learning. 
  "*" denote oracle early stopping as defined in \secref{sec:exp}.
  Results reported by aggregating over 10 epochs 
  and 3 seeds. Both CVIR (with known MPE) and (TED)$^n$ (with unknown MPE) significantly improve over previous baselines with oracle early stopping and known MPE. For aggregate numbers with standard deviation see \appref{app:results_std}. 
  }\label{table:classification}
\end{table}
  
\textbf{Mixture Proportion Estimation {} {}}
First, we discuss results for MPE (\tabref{table:MPE}). 
We compare our method with KM2, TiCE, 
DEDPUL, AlphaMax and EN.
Following earlier works~\citep{ivanov2019dedpul,ramaswamy2016mixture},  
we reduce datasets to $50$ dimensions with PCA
for KM2 and TiCE. 
We use existing implementation 
for other methods\footnote{DEDPUL: https://github.com/dimonenka/DEDPUL, 
KM: https://web.eecs.umich.edu/\textasciitilde cscott/code.html\#kmpe, 
TiCE: https://dtai.cs.kuleuven.be/software/tice, and
AlphaMax: https://github.com/Dzeiberg/AlphaMax}. 
For BBE, DEDPUL and Alphamax, we use the 
same PvU classifier as input.
%
%
On CIFAR datasets, convolutional 
classifier based  
estimators significantly outperform 
KM2 and TiCE. In contrast, the performance of 
KM2 is comparable to DEDPUL on MNIST datasets.
On all datasets, (TED)$^n$ achieves 
lowest estimation error.
%
%
With the same blackbox classifier
obtained with oracle early stopping, BBE 
performs similar or better than best alternate(s). 
Since overparamterized models start memorizing 
unlabeled samples negatives,
the quality of MPE degrades substantially 
as PvU training proceeds 
for all methods but (TED)$^n$ 
as in \figref{fig:cifar_results} 
(epoch-wise results for on other tasks in \appref{ap:figure3_app}).

\textbf{Classification with known MPE {} {}}
Now, we discuss results for classification with known $\alpha$. 
We compare our method with uPU,
nnPU\footnote{uPU and nnPU: https://github.com/kiryor/nnPUlearning}, DEDPUL
and PvU training. 
Although, we solve both MPE and classification,  
some comparison methods do not. 
Ergo, we compare our classification algorithm 
with known MPE (\algoref{alg:classification_PU}). 

To begin, first we note that nnPU 
and PvU training with CVIR
doesn't need early stopping. 
For all other methods,  
we report the best performance dictated by
the aformentioned oracle stopping criterion. 
On all datasets, PvU training with 
CVIR leads to improved classification 
performance when compared 
with alternate approaches (\tabref{table:classification}). 
Moreover, as training proceeds (\figref{fig:cifar_results}),
the performance of DEDPUL, PvU training and uPU substantially degrade. 
We repeated experiments with 
the early stopping criterion mentioned in 
DEDPUL (\appref{ap:early_stopping}), however, 
their early stopping criterion is too pessimistic 
resulting in poor results due to under-fitting. 

\textbf{Classification with unknown MPE {} {}}
Finally, we evaluate (TED)$^n$,
our alternating procedure 
for MPE and PU learning. 
Across many tasks, 
we observe substantial improvements 
over existing methods.  
Note that these improvements often are over an oracle early
stopping baselines highlighting significance of 
our procedure. 

In \appref{ap:ablation}, we show that our procedure 
is not sensitive to warm start epochs W, 
and in many tasks with $W=0$,
we observe minor-to-no differences
in the performance of (TED)$^n$. 
While for the experiments in this section,
we used fixed $W=100$, 
in the Appendix we show 
behavior with varying W. 
We also include ablations 
with different mixture proportions $\alpha$.

\vspace{-3pt}
\section{Conclusion and Future Work}
\vspace{-5pt}
In this paper, we proposed 
two practical algorithms,
BBE (for MPE) and CVIR optimization (for PU learning). 
Our methods outperform others empirically
and BBE's mixture proportion estimates
leverage black box classifiers 
to produce (nearly) consistent estimates 
with finite sample convergence guarantees
whenever we possess a classifier
with a (nearly) pure top bin.
Moreover, (TED)$^n$ combines our procedures 
in an iterative fashion,
achieving further gains.
%
%
An important next direction 
is to extend our work 
to the multiclass problem~\citep{sanderson2014class},
bridging work on label shift and PU learning.
Here, we imagine that 
a deployed $k$-way classifier 
may encounter not only label shift
among previously seen classes (\cite{lipton2018detecting,garg2020unified})
but also, potentially, 
instances from one previously unseen class.
%
We also plan to investigate distributional properties 
under which we can hope
to reliably or approximately satisfy
the pure positive bin property 
with an off-the-shelf classifier 
trained on PvU data. 
While we improve significantly 
over previous PU methods,  
there is still a gap between 
(TED)$^n$'s performance and PvN training.  
We hope that our work can open 
a pathway towards further narrowing this gap.

\section*{Acknowledgements}
\vspace{-5pt}

We thank anonymous reviewers for their feedback during NeurIPS 2021 review process.  
This material is based on research sponsored by Air Force Research Laboratory (AFRL) 
under agreement number FA8750-19-1-1000. The U.S. Government is authorized to reproduce 
and distribute reprints for Government purposes notwithstanding any copyright notation therein. 
The views and conclusions contained herein are those of the authors 
and should not be interpreted as necessarily representing the official 
policies or endorsements, either expressed or implied, of Air Force Laboratory, DARPA or the U.S. Government. 
SB acknowledges funding from the NSF grants DMS-1713003, DMS-2113684 and CIF-1763734, as well as Amazon AI and a Google Research Scholar Award. 
ZL acknowledges Amazon AI, Salesforce Research, Facebook, UPMC, Abridge, the PwC Center, the Block Center, the Center for Machine Learning and Health, and the CMU Software Engineering Institute (SEI) via Department of Defense contract FA8702-15-D-0002,
for their generous support of ACMI Lab's research on machine learning under distribution shift.
\bibliographystyle{abbrvnat}
\bibliography{lifetime}

\newpage
\appendix

\section{Appendix}

\section{Proofs from \secref{sec:MPE}}\label{ap:proof_mpe}

\begin{proof}[Proof of \lemref{lem:ucb}]
    The proof primarily involves using 
    DKW inequality~\citep{dvoretzky1956asymptotic} 
    on $\wh q_u(c)$ and $\wh q_p(c)$ to show convergence 
    to their respective means $q_u(c)$ and $q_p(c)$. 
    First, we have 

    \begin{align*} 
        \abs{ \frac{\wh q_u(c)}{ \wh q_p(c)} -  \frac{q_u(c)}{q_p(c)}} &= \frac{1}{\wh q_u(c) \cdot q_u(c) }\abs{ \wh q_u(c) \cdot q_p(c) - q_p(c)\cdot q_u(c) +  q_p(c)\cdot q_u(c) - \wh q_p(c) \cdot q_u(c) } \\
        &\le \frac{1}{\wh q_p(c)}\abs{\wh q_u(c) - q_u(c)} + \frac{q_u(c)}{\wh q_p(c) \cdot q_u(c)} \abs{ \wh q_p(c) - q_p(c)} \,. \numberthis \label{eq:ratio_bound}
    \end{align*}
    
    Using DKW inequality, we have with probability $1-\delta$: $\abs{ \wh q_p(c) - q_p(c)} \le \sqrt{\frac{\log(2/\delta)}{2n_p}}$ for all $c \in [0,1]$. Similarly, we have with probability $1-\delta$: $\abs{\wh q_u(c) - q_u(c)} \le \sqrt{\frac{\log(2/\delta)}{2n_u}}$ for all $c\in [0,1]$. Plugging this in \eqref{eq:ratio_bound}, we have 
    \begin{align*}
        \abs{\frac{\wh q_u(c)}{\wh q_p(c)} -  \frac{q_u(c)}{q_p(c)}} \le \frac{1}{\wh q_p(c)}\left( \sqrt{\frac{\log(4/\delta)}{2n_u}} + \frac{q_u(c)}{q_p(c)}\sqrt{\frac{\log(4/\delta)}{2n_p}}\right) \,.
    \end{align*}
\end{proof}

{\update 
\begin{proof}[Proof of \thmref{thm:main_MPE}] 
    The main idea of the proof is to use the confidence bound derived in \lemref{lem:ucb} at $\wh c$ and use the fact that $\wh c$ minimizes the upper confidence bound. The proof is split into two parts. First, we derive a lower bound on $\wh q_p(\wh c)$ and next, we use the obtained lower bound to derive confidence bound on $\wh \alpha$. All the statements in the proof simultaneously hold with probability $1-\delta$. Recall,
    \begin{align}
        \wh c &\defeq \argmin_{c \in [0,1]}  \frac{\wh q_u(c)}{\wh q_p(c)}  + \frac{1}{\wh q_p(c)}\left( \sqrt{\frac{\log(4/\delta)}{2 n_u}} + (1+\gamma)\sqrt{\frac{\log(4/\delta)}{2n_p}}\right) \qquad \text{and} \\
        \wh \alpha &\defeq \frac{\wh q_u(\wh c)}{\wh q_p(\wh c)}\,.
    \end{align}
    Moreover, 
    \begin{align}
        c^* \defeq \argmin_{c \in [0,1]} \frac{q_u(c)}{q_p(c)} \qquad \text{and}\qquad \alpha^* \defeq \frac{q_u(c^*)}{q_p(c^*)}\,.
    \end{align}
    \textbf{Part 1:}  We establish lower bound on $\wh q_p(\wh c)$. Consider $c^\prime \in [0,1]$ such that $\wh q_p(c^\prime) = \frac{\gamma}{2 + \gamma} \wh q_p(c^*)$. We will now show that \algoref{alg:MPE_PU} will select $\wh c < c^\prime$. For any $c \in [0,1]$, we have with with probability $1-\delta$,  
    \begin{align}
        \wh q_p(c) - \sqrt{\frac{\log(4/\delta)}{2n_p}} \le q_p(c) \qquad \text{and} \qquad q_u(c) - \sqrt{\frac{\log(4/\delta)}{2n_u}} \le \wh q_u(c) \,.
    \end{align}
    Since $ \frac{q_u(c^*)}{q_p(c^*)} \le \frac{q_u(c)}{q_p(c)}$, we have 
    \begin{align}
        \wh q_u(c) \ge q_p(c) \frac{q_u(c^*)}{q_p(c^*)}  - \sqrt{\frac{\log(4/\delta)}{2n_u}} \ge \left(  \wh q_p(c) - \sqrt{\frac{\log(4/\delta)}{2n_p}}  \right) \frac{q_u(c^*)}{q_p(c^*)}  - \sqrt{\frac{\log(4/\delta)}{2n_u}} \,.
    \end{align}
    Therefore, at $c$ we have 
    \begin{align}
        \frac{\wh q_u(c)}{\wh q_p(c)}  &\ge \alpha^* -  \frac{1}{\wh q_p(c)}\left( \sqrt{\frac{\log(4/\delta)}{2n_u}} + \frac{q_u(c^*)}{q_p(c^*)}\sqrt{\frac{\log(4/\delta)}{2n_p}}\right) \,.
    \end{align}

    Using \lemref{lem:ucb} at $c^*$, we have 
    \begin{align}
        \frac{\wh q_u(c)}{\wh q_p(c)}  &\ge \frac{\wh q_u(c^*)}{\wh q_p(c^*)} - \left(\frac{1}{\wh q_p(c^*)} +  \frac{1}{\wh q_p(c)}\right)\left( \sqrt{\frac{\log(4/\delta)}{2n_u}} + \frac{q_u(c^*)}{q_p(c^*)}\sqrt{\frac{\log(4/\delta)}{2n_p}}\right) \\
        &\ge \frac{\wh q_u(c^*)}{\wh q_p(c^*)} - \left(\frac{1}{\wh q_p(c^*)} +  \frac{1}{\wh q_p(c)}\right)\left( \sqrt{\frac{\log(4/\delta)}{2n_u}} + \sqrt{\frac{\log(4/\delta)}{2n_p}}\right) \,,
    \end{align}
    where the last inequality follows from the fact that $\alpha^* = \frac{q_u(c^*)}{q_p(c^*)} \le 1$. Furthermore, the upper confidence bound at $c$ is lower bound as follows: 
    \begin{align}
        \frac{\wh q_u(c)}{\wh q_p(c)} + &\frac{1+\gamma}{\wh q_p(c)} \left( \sqrt{\frac{\log(4/\delta)}{2n_u}} + \sqrt{\frac{\log(4/\delta)}{2n_p}}\right) \\ &\ge \frac{\wh q_u(c^*)}{\wh q_p(c^*)} + \left(\frac{1 + \gamma}{\wh q_p(c)} - \frac{1}{\wh q_p(c^*)} - \frac{1}{\wh q_p(c)}\right)\left( \sqrt{\frac{\log(4/\delta)}{2n_u}} + \sqrt{\frac{\log(4/\delta)}{2n_p}}\right) \\  
        &= \frac{\wh q_u(c^*)}{\wh q_p(c^*)} +  \left(\frac{\gamma}{\wh q_p(c)} - \frac{1}{\wh q_p(c^*)} \right)\left( \sqrt{\frac{\log(4/\delta)}{2n_u}} + \sqrt{\frac{\log(4/\delta)}{2n_p}}\right) \label{eq:lower_ucb}
    \end{align}
    Using \eqref{eq:lower_ucb} at $c = c^\prime$, we have the following lower bound on ucb at $c^\prime$: 
    \begin{align}
        \frac{\wh q_u(c^\prime)}{\wh q_p(c^\prime)} + &\frac{1+\gamma}{\wh q_p(c^\prime)} \left( \sqrt{\frac{\log(4/\delta)}{2n_u}} + \sqrt{\frac{\log(4/\delta)}{2n_p}}\right) \\ 
        &\ge \frac{\wh q_u(c^*)}{\wh q_p(c^*)} + \frac{1 + \gamma}{\wh q_p(c^*)}\left( \sqrt{\frac{\log(4/\delta)}{2n_u}} + \sqrt{\frac{\log(4/\delta)}{2n_p}}\right) \,,
    \end{align}
    
    Moreover from \eqref{eq:lower_ucb}, we also have that the lower bound on ucb at $c \ge c^\prime$ is strictly greater than the lower bound on ucb at $c^\prime$. Using definition of $\wh c$, we have 
    \begin{align}
         \frac{\wh q_u(c^*)}{\wh q_p(c^*)} &+ \frac{1 + \gamma}{\wh q_p(c^*)}\left( \sqrt{\frac{\log(4/\delta)}{2n_u}} + \sqrt{\frac{\log(4/\delta)}{2n_p}}\right) \\ 
        &\ge \frac{\wh q_u(\wh c)}{\wh q_p(\wh c)} + \frac{1 + \gamma}{\wh q_p(\wh c)}\left( \sqrt{\frac{\log(4/\delta)}{2n_u}} + \sqrt{\frac{\log(4/\delta)}{2n_p}}\right) \,,
    \end{align}
    and hence 
    \begin{align}
        \wh c \le c^\prime \,.
    \end{align}

    \textbf{Part 2:} We now establish an upper and lower bound on $\wh \alpha$. We start with upper confidence bound on $\wh \alpha$. By definition of $\wh c$, we have
    \begin{align}
        \frac{\wh q_u(\wh c)}{\wh q_p(\wh c)} + \frac{1 + \gamma}{\wh q_p(\wh c)} & \left( \sqrt{\frac{\log(4/\delta)}{2n_u}} + \sqrt{\frac{\log(4/\delta)}{2n_p}}\right)   \\
        & \le \min_{c \in [0,1]} \left[ \frac{\wh q_u(c)}{\wh q_p(c)} + \frac{1 + \gamma}{\wh q_p(c)}\left( \sqrt{\frac{\log(4/\delta)}{2n_u}} + \sqrt{\frac{\log(4/\delta)}{2n_p}}\right) \right] \\ 
        & \le \, \frac{\wh q_u(c^*)}{\wh q_p(c^*)}  + \frac{1+\gamma}{\wh q_p(c^*)}\left( \sqrt{\frac{\log(4/\delta)}{2n_u}} + \sqrt{\frac{\log(4/\delta)}{2n_p}}\right)  
        \,. \label{eq:bound_step1} \numberthis 
    \end{align}
    Using \lemref{lem:ucb} at $c^*$, we get 
    \begin{align*}
        \frac{\wh q_u(c^*)}{\wh q_p(c^*)} &\le \frac{q_u(c^*)}{q_p(c^*)} + \frac{1}{\wh q_p(c^*)}\left( \sqrt{\frac{\log(4/\delta)}{2n_u}} + \frac{q_u(c^*)}{q_p(c^*)}\sqrt{\frac{\log(4/\delta)}{2n_p}}\right) \\
        &= \alpha^* + \frac{1}{\wh q_p(c^*)}\left( \sqrt{\frac{\log(4/\delta)}{2n_u}} + \alpha^*\sqrt{\frac{\log(4/\delta)}{2n_p}}\right) \,.\numberthis \label{eq:bound_step2}
    \end{align*} 
    Combining \eqref{eq:bound_step1} and \eqref{eq:bound_step2}, we get 
    \begin{align}
        \wh \alpha = \frac{\wh q_u(\wh c)}{\wh q_p(\wh c)} \le \alpha^* + \frac{2+\gamma}{\wh q_p(c^*)}\left( \sqrt{\frac{\log(4/\delta)}{2n_u}} + \sqrt{\frac{\log(4/\delta)}{2n_p}}\right)  \,.
    \end{align}
    
    Using DKW inequality on $\wh q_p(c^*)$, we have $\wh q_p(c^*) \ge q_p(c^*) - \sqrt{\frac{\log(4/\delta)}{2n_p}}$. Assuming $n_p \ge \frac{2\log(4/\delta)}{q_p^2(c^*)}$, we get $\wh q_p(c^*) \le q_p(c^*)/ 2$ and hence, 
    \begin{align}
        \wh \alpha  \le \alpha^* + \frac{4+2\gamma}{q_p(c^*)}\left( \sqrt{\frac{\log(4/\delta)}{2n_u}} + \sqrt{\frac{\log(4/\delta)}{2n_p}}\right)  \,. \label{eq:upper_bound}
    \end{align}
    
    Finally, we now derive a lower bound on $\wh \alpha$. From \lemref{lem:ucb}, we have the following inequality at $\wh c$ 
    \begin{align}
          \frac{q_u(\wh c)}{q_p( \wh c)} \le \frac{\wh q_u(\wh c)}{\wh q_p(\wh c)} + \frac{1}{\wh q_p(\wh c)}\left( \sqrt{\frac{\log(4/\delta)}{2n_u}} + \frac{q_u(\wh c)}{q_p(\wh c)}\sqrt{\frac{\log(4/\delta)}{2n_p}}\right) \,. \label{eq:lem1}
    \end{align}
    Since $\alpha^* \le \frac{q_u(\wh c)}{q_p( \wh c)} $, we have 
    \begin{align}
        \alpha^* \le \frac{q_u(\wh c)}{q_p( \wh c)} \le \frac{\wh q_u(\wh c)}{\wh q_p(\wh c)} + \frac{1}{\wh q_p(\wh c)}\left( \sqrt{\frac{\log(4/\delta)}{2n_u}} + \frac{q_u(\wh c)}{q_p(\wh c)}\sqrt{\frac{\log(4/\delta)}{2n_p}}\right) \,. \label{eq:lower_bound_step1}
    \end{align}
    
    Using \eqref{eq:upper_bound}, we obtain a very loose upper bound on $\frac{\wh q_u(\wh c)}{\wh q_p(\wh c)}$. Assuming $\min(n_p, n_u) \ge \frac{2\log(4/\delta)}{q_p^2(c^*)}$, we have $\frac{\wh q_u(\wh c)}{\wh q_p(\wh c)} \le \alpha^* + 4 + 2\gamma \le 5 + 2\gamma$. Using this in \eqref{eq:lower_bound_step1}, we have 
    \begin{align}
        \alpha^* \le \frac{\wh q_u(\wh c)}{\wh q_p(\wh c)} + \frac{1}{\wh q_p(\wh c)}\left( \sqrt{\frac{\log(4/\delta)}{2n_u}} + (5+2\gamma)\sqrt{\frac{\log(4/\delta)}{2n_p}}\right) \,. 
    \end{align}
    Moreover, as $\wh c \ge c^\prime$,  we have $\wh q_p(\wh c) \ge \frac{\gamma}{2 + \gamma} \wh q_p(c^*)$ and hence, 
    \begin{align}
        \alpha^* - \frac{\gamma + 2}{\gamma \wh q_p(c^*)}\left( \sqrt{\frac{\log(4/\delta)}{2n_u}} + (5+2\gamma)\sqrt{\frac{\log(4/\delta)}{2n_p}}\right) \le  \frac{\wh q_u(\wh c)}{\wh q_p(\wh c)} = \wh \alpha \,. 
    \end{align}
    As we assume $n_p \ge \frac{2\log(4/\delta)}{q_p^2(c^*)}$, we have $\wh q_p(c^*) \le q_p(c^*)/ 2$, which implies the following lower bound on $\alpha$: 
    \begin{align}
        \alpha^* - \frac{2\gamma + 4}{\gamma q_p(c^*)}\left( \sqrt{\frac{\log(4/\delta)}{2n_u}} + (5+2\gamma)\sqrt{\frac{\log(4/\delta)}{2n_p}}\right) \le \wh \alpha \,. 
    \end{align}
\end{proof}
}

\begin{proof}[Proof of \corollaryref{corollary:MPE_final}]
    Note that since $\alpha \le \alpha^*$, the lower bound remains the same as in \thmref{thm:main_MPE}.  
    For upper bound, plugging in $q_u(c) = \alpha q_p(c) + (1-\alpha) q_n(c)$, we have $\alpha^* = \alpha + (1-\alpha) q_n(c^*)/q_p(c^*)$ and hence, the required upper bound.  
\end{proof}
{
\update
\subsection{Note on $\gamma$ in \algoref{alg:MPE_PU}} \label{app:gamma_disc}
We multiply  
the upper bound in \lemref{lem:ucb} 
to establish lower bound on $\wh q_p(\wh c)$.
Otherwise, in an extreme case, 
with $\gamma =0$, 
\algoref{alg:MPE_PU}
can select $\wh c$  
with arbitrarily low $\wh q_p(\wh c)$ 
($\ll q_p(c^*)$)
and hence poor concentration 
guarantee to the 
true mixture proportion. 
However, with a 
small positive $\gamma$,  
we can obtain 
lower bound on $\wh q_p(\wh c)$ 
and hence tight guarantees 
on the ratio estimate 
($\wh q_u(\wh c)/ \wh q_p(\wh c)$)
in \thmref{thm:main_MPE}. 

In our experiments, we choose $\gamma = 0.01$. 
However, we didn't observe 
any (significant) differences 
in mixture proportion 
estimation even with $\gamma=0$. 
implying that 
we never observe $\wh q_p(\wh c)$ 
taking arbitrarily small values 
in our experiments. 
}

\section{Comparison of BBE with \citet{scott2015rate}} \label{app:comparison}

Heuristic estimator due to \citet{scott2015rate} is 
motivated by the estimator in \citet{blanchard2010semi}. 
The estimator in \citet{blanchard2010semi} relies on 
VC bounds, which are known to be loose in typical 
deep learning situations. Therefore, \citet{scott2015rate} 
proposed an heuristic implementation based on the 
minimum slope of any point in the ROC space to the 
point $(1,1)$. To obtain ROC estimates, authors 
use direct binomial tail inversion (instead of VC bounds as in \citet{blanchard2010semi})
to obtain tight upper bounds for 
true positives and lower bounds for true negatives.  
Finally, using these conservatives estimates 
the estimator in \citet{scott2015rate} is obtained 
as the minimum slope of any of the operating points to the point $(1,1)$.

While the estimate of one minus true positives at a threshold $t$ is 
similar in spirit to our number of unlabeled examples in the top bin 
and the estimate of one minus true negatives at a threshold $t$ is similar 
in spirit to our number of positive examples in the unlabeled data,
the functional form of these estimates are very different. 
\citet{scott2015rate} estimator is the ratio of quantities obtained by binomial tail inversion (i.e. upper bound in the numerator and lower bound in the denominator). By contrast, the final BBE estimate is simply the ratio of empirical CDFs at the optimal threshold. Mathematically, we have 
\begin{align}
  \wh \alpha_{\text{Scott}} &= \frac{ \wh q_u(c_\text{Scott}) + \text{binv}(n_u, \wh q_u(c_\text{Scott}), \delta/n_u) }{ \wh q_p(c_\text{Scott}) - \text{binv}(n_p, \wh q_p(c_\text{Scott}), \delta/n_p)} \qquad \text{and} \label{eq:scott} \\ 
  \wh \alpha_{\text{BBE}} &= \frac{ \wh q_u(c_\text{BBE})}{\wh q_p(c_\text{BBE})} \,, \label{eq:BBE}
\end{align}
where $c_\text{Scott} = \argmin_{c \in [0,1]} \frac{\wh q_u(c) + \text{binv}(n_u, \wh q_u(c), \delta/n_u)}{\wh q_p(c) - \text{binv}(n_p, \wh q_p(c), \delta/n_p)}$ and binv$(n_p, q_p(c), \delta/n_p)$ is the tightest possible deviation bound for a binomial random variable~\citep{scott2015rate} and and $c_\text{BBE}$ is given by Algorithm~\ref{alg:MPE_PU}.  
Moreover, \citet{scott2015rate} provide no theoretical guarantees for their heuristic estimator $ \wh \alpha_{\text{Scott}}$. On the hand, we provide guarantees that our estimator $\wh \alpha_{\text{BBE}}$ will converge to the best estimate achievable over all choices of the bin size and provide consistent estimates whenever a pure top bin exists.
Supporting theoretical results of BBE, 
we observe that these choices in BBE create 
substantial differences in 
the empirical performance as observed in \tabref{table:MPE_scott}. We repeat experiment for MPE from \secref{sec:exp} where we compare other methods with the \citet{scott2015rate} estimator as defined in \eqref{eq:scott}.

As a side note, a naive implementation of $\wh \alpha_{\text{Scott}}$ instead of \eqref{eq:scott} where we directly minimize the empirical ratio yields poor estimates due to noise introduced with finite samples. In our experiments, we observed that $\wh \alpha_{\text{Scott}}$ improves a lot over this naive estimator.

\begin{table}[t]
  \centering
  \small
  \tabcolsep=0.12cm
  \renewcommand{\arraystretch}{1.2}
  \begin{tabular}{@{}*{6}{c}@{}}
  \toprule
  Dataset & Model  & \thead{(TED)$^n$} &  \thead{BBE$^*$} & \thead{DEDPUL$^*$} & \thead{Scott$^*$}\\
  \midrule
  Binarized CIFAR  & ResNet & $\mathbf{0.018}$ & $0.072$  & $0.075$ & $0.091$ \\
  \midrule
  \parbox{1.5cm}{\centering CIFAR Dog vs Cat}  & ResNet & $\mathbf{0.074}$ & $0.120$ & $0.113$ & $0.158$ \\
  \midrule 
  \multirow{1}{*}{ \parbox{2.5cm}{\centering Binarized MNIST} } & MLP & $\mathbf{0.021}$ & $0.028$ & $0.027$ & $0.063$  \\
  \midrule
  \multirow{1}{*}{ \parbox{1.5cm}{\centering MNIST17} }  & MLP & $\mathbf{0.003}$ & $0.008$ & $0.006$ & $0.037$ \\
  \bottomrule 
  \end{tabular}  
  \vspace{5pt}
  \caption{Absolute estimation error when $\alpha$ is 0.5. "*" denote oracle early stopping as defined in \secref{sec:exp}. As mentioned in \citet{scott2015rate} implementation in \href{https://web.eecs.umich.edu/\textasciitilde cscott/code/mpe\_v2.zip}{https://web.eecs.umich.edu/\textasciitilde cscott/code/mpe\_v2.zip}, we use the binomial inversion at $\delta$ instead of $\delta/n$ (rescaling using the union bound). Since we are using Binomial inversion at n discrete points simultaneously, we should use the union-bound penalty. However, using union bound penalty substantially increases the bias in their estimator.
  }\label{table:MPE_scott}
\end{table}

\section{Toy setup }\label{ap:failure}

\citet{jain2016nonparametric}
and \citet{ivanov2019dedpul}
discuss Bayes optimality 
of the PvU classifier 
(or its one-to-one mapping)
as a sufficient condition 
to preserve $\alpha$ in 
transformed space. 
However, in a simple toy setup 
(in \appref{ap:failure}),
we show that even when the hypothesis class
is well specified for PvN learning,
it will not in general contain
the Bayes optimal scoring function 
for PvU data
and thus PvU training 
will not recover the 
Bayes-optimal scoring function, 
even in population.

Consider a scenario with $\inpt = \Real^2$.  Assume points from the positive class are sampled uniformly from the interior of the triangle defined by coordinates $\{(-1, 0.1), (0,4), (1, 0.1)\}$ and negative points are sampled uniformly from the interior of triangle defined by coordinates $\{ (-1, -0.1), (4, -4), (1, -0.1)\}$. Ref. to \figref{fig:counter_example_Bayes} for a pictorial representation. Let mixture proportion be $0.5$ for the unlabeled data. 
Given access to distribution of positive data and unlabeled data, we seek to train a linear classifier to minimize logistic or Brier loss for PvU training. 

Since we need a monotonic transformation of the Bayes optimal scoring function, we want to recover a predictor parallel to x-axis, the Bayes optimal classifier for PvN training. However, minimizing the logistic loss (or Brier loss) using numerical methods, we obtain a predictor that is inclined at a non-zero acute angle to the x-axis. Thus, the PvU classifier obtained fails to satisfy the sufficient condition from \citet{jain2016nonparametric} and \citet{ivanov2019dedpul}. On the other hand, note that the linear classifier obtained by PvU training satisfies the pure positive bin property. 

\begin{figure}[t!]
    \centering
    \subfigure[]{\includegraphics[width=0.5\linewidth]{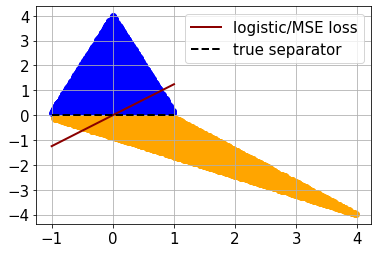}}
    \caption{ Blue points show samples from the positive distribution and orange points show samples from the negative distribution. Unalabeled data is obtained by mixing positive and negative distribution with equal proportion. BCE (or Brier) loss minimization on P vs U data leads to a classifiers that is not consistent with the ranking of the Bayes optimal score function.}
    \label{fig:counter_example_Bayes}
\end{figure}

Now we show that under the subdomain assumption~\citep{scott2015rate,ramaswamy2016mixture}, any monotonic transformation of Bayes optimal scoring function induces positive pure bin property. First, we define the subdomain assumption. 

\begin{assumption}[Subdomain assumption] \label{asmp:subdomain} 
    A family of subsets $\calS \subseteq 2^\calX$, and distributions $p_{p}$, $p_{n}$ are said to satisfy the anchor set condition with margin $\gamma >0$, if there exists a compact set $A \in \calS$ such that $A \subseteq \supp(p_{p}) / \supp(p_{n})$ and $p_{p}(A) \ge \gamma$.  
\end{assumption}

Note that any monotonic mapping of the Bayes optimal scoring function can be represented by $ \tau^\prime =  g \circ \tau$, where g is a monotonic function and 
\begin{align}
    \tau(x) = \begin{cases}
                    p_p(x)/ p_u(x) \quad & \text{if } p_p(x) >0\\
                    0 \quad & \text{o.w}\,.
                \end{cases} 
\end{align}

For any point $x\in A$ and $x^\prime \in \inpt / A $, we have $\tau(x) > \tau(x^\prime)$ which implies $\tau^\prime(x) > \tau^\prime(x^\prime)$. Thus, 
any monotonic mapping of Bayes optimal scoring function yields the positive pure bin property with $\epsilon_p \ge \gamma$.

\section{Analysis of CVIR}\label{ap:anlysis_CVuO}


First we analyse our loss 
function in the scenario when 
the support of positives and 
negatives is separable.
We assume that the 
true alpha $\alpha$ is known and we have 
access to populations of 
positive and unlabeled data.   
We also assume that
their exists a separator $f^*: \mathcal{X} \mapsto \{0, 1\}$
that can perfectly separate the 
positive and negative distribution, i.e., $\int dx p_p(x) \indict{f^*(x) \ne 1} + \int dx p_n(x) \indict{f^*(x) \ne 0} = 0$.
Our learning objective can be written as jointly optimizing a classifier $f$ and a weighting function $w$ on the unlabeled distribution:  
\begin{align*}
& \min_{f\in \mathcal{F}, w} \int dxp_p(x)l(f(x), 1) + \frac{1}{1- \alpha} \int dxp_u(x)w(x)l(f(x), 0) \,, \\
& \text{ s.t. } w: \mathcal{X}\mapsto [0, 1] \,, \int dxp_u(x)w(x) = 1 - \alpha \,. \addeq\label{eq:cvuo-obj} 
\end{align*}
The following proposition shows that minimizing the objective \eqref{eq:cvuo-obj} on separable positive and negative distributions gives a perfect classifier.
\begin{proposition}
For $\alpha \in (0, 1)$, if there exists a classifier $f^* \in \mathcal{F}$ that can perfectly separate the positive and negative distributions, optimizing objective \eqref{eq:cvuo-obj} with 0-1 loss leads to a classifier $f$ that achieves $0$ classification error on the unlabeled distribution.
\end{proposition}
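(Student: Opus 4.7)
The plan is to first establish that the minimum of objective~\eqref{eq:cvuo-obj} equals zero by exhibiting an explicit feasible pair, and then to show that every minimizer's classifier component agrees with the ground truth $p_u$-almost everywhere. For the first step, I would take the separating classifier $f^*$ together with the weight $w^*(x) = \indict{f^*(x)=0}$. Using the decomposition $p_u = \alpha p_p + (1-\alpha) p_n$ and separability (so $f^* = 1$ on $\mathrm{supp}(p_p)$ and $f^* = 0$ on $\mathrm{supp}(p_n)$ outside null sets), a direct calculation gives $\int w^*(x) p_u(x)\, dx = (1-\alpha)$, so the constraint is satisfied; both integrated $0$-$1$ loss terms vanish at $(f^*, w^*)$, so the infimum of \eqref{eq:cvuo-obj} equals $0$ and is attained.

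For the second step, let $(\hat f, \hat w)$ be any minimizer, so the objective value is $0$. Because both summands are nonnegative, each must vanish. The first gives $\hat f = 1$ for $p_p$-almost every $x$, hence on $\mathrm{supp}(p_p)$ outside a $p_p$-null set. The second gives $\hat f = 0$ at $p_u$-almost every $x$ with $\hat w(x) > 0$. Because $p_u \ge \alpha p_p$, these two conditions are simultaneously compatible on $\mathrm{supp}(p_p)$ only if $\hat w = 0$ $p_p$-a.e.\ (otherwise $\hat f$ would equal both $0$ and $1$ on a positive-mass set). Substituting this into the normalization constraint collapses it to $(1-\alpha)\int \hat w(x) p_n(x)\, dx = 1-\alpha$, so $\int \hat w\, p_n = 1$; combined with $0 \le \hat w \le 1$, this forces $\hat w = 1$ $p_n$-a.e. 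Applying the second zero-loss condition once more on $\mathrm{supp}(p_n)$ yields $\hat f = 0$ $p_n$-a.e. Since $\mathrm{supp}(p_u) \subseteq \mathrm{supp}(p_p) \cup \mathrm{supp}(p_n)$ and the true label is $1$ on $\mathrm{supp}(p_p)$ and $0$ on $\mathrm{supp}(p_n)$, the classifier $\hat f$ matches the truth $p_u$-a.e., i.e., it has zero classification error on the unlabeled distribution.

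The main delicate step is the measure-theoretic argument that forces $\hat w = 0$ on $\mathrm{supp}(p_p)$: one must combine the two separate zero-loss conditions with the observation that $\hat f$ cannot take two distinct values at the same point, and then carefully propagate the consequences through the normalization constraint to pin down $\hat w \equiv 1$ on $\mathrm{supp}(p_n)$. Everything else is a direct rewriting using the mixture decomposition and separability. No extra regularity assumptions on $\mathcal{F}$ beyond $f^* \in \mathcal{F}$ are needed, because $f^*$ itself realizes the zero-loss minimum and any other minimizer is forced by the above chain of implications to share its essential classification behavior on $\mathrm{supp}(p_p) \cup \mathrm{supp}(p_n)$.
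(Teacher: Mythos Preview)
Your proposal is correct and follows essentially the same approach as the paper: both exhibit the zero-value witness $(f^*, 1-f^*)$, then use the mixture decomposition $p_u = \alpha p_p + (1-\alpha) p_n$ together with the normalization constraint to force $w=0$ on the positive support and $w=1$ on the negative support, pinning down $\hat f$. The only cosmetic difference is that the paper argues the contrapositive (any $f$ with positive PvN error yields strictly positive objective for every feasible $w$), whereas you argue the direct implication (zero objective forces zero PvN error); the underlying steps and case structure are identical.
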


\begin{proof}
First we observe that having $w(x) = 1 - f^*(x)$ leads to the objective value being minimized to 0 as well as a perfect classifier $f$. This is because
\begin{align*}
\frac{1}{1- \alpha} \int dxp_u(x)(1 - f^*(x))l(f(x), 0)   = \int dxp_n(x)l(f(x), 0) 
\end{align*}
thus the objective becomes classifying positive v.s. negative, which leads to a perfect classifier  if $\mathcal{F}$ contains one. Now we show that for any $f$ such that the classification error is non-zero then the objective \eqref{eq:cvuo-obj} must be greater than zero no matter what $w$ is.
Suppose $f$ satisfies
\begin{align*}
\int dx p_p(x) l(f(x),1) + \int dx p_n(x) l(f(x), 0) > 0 \,.
\end{align*}
We know that either $\int dx p_p(x) l(f(x),1) > 0$ or $\int dx p_n(x) l(f(x), 0) > 0$ will hold. If $\int dx p_p(x) l(f(x),1) > 0$ we know that \eqref{eq:cvuo-obj} must be positive. If $\int dx p_p(x) l(f(x),1) = 0$ and $\int dx p_n(x) l(f(x), 0) > 0$ we have $l(f(x), 0) = 1$ almost everywhere in $p_p(x)$ thus
\begin{align*}
& \frac{1}{1- \alpha} \int dxp_u(x)w(x)l(f(x), 0) \\
& = \frac{\alpha}{1 - \alpha} \int dxp_p(x)w(x)l(f(x), 0) + \int dxp_n(x)w(x)l(f(x), 0) \\
&  = \frac{\alpha}{1 - \alpha} \int dx p_p(x)w(x) + \int dxp_n(x)w(x)l(f(x), 0) \,.
\end{align*}
If $\int dx p_p(x)w(x)>0$ we know that \eqref{eq:cvuo-obj} must be positive. If $\int dx p_p(x)w(x) = 0$,
since we know that 
\begin{align*}
\int dxp_u(x)w(x) = \alpha \int dxp_p(x)w(x) + (1 - \alpha)\int dxp_n(x)w(x) = 1 - \alpha
\end{align*}
we have $\int dxp_n(x)w(x) = 1$ which means $w(x)=1$ almost everywhere in $p_n(x)$. This leads to the fact that
$\int dx p_n(x) l(f(x), 0) > 0$ indicates $\int dxp_n(x)w(x)l(f(x), 0) > 0$, which concludes the proof.

\end{proof}

The intuition is that, any classifier that 
discards an $\wt \alpha >0$ 
proportion
of negative distribution 
from unlabeled
will have loss strictly 
greater than zero
with our CVIR objective. 
Since only a perfect linear separator 
(with weights $\to \infty$)
can achieves loss $\to 0$, 
CVIR objective will (correctly)
discard the $\alpha$ proportion of positive 
from unlabeled data 
achieving 
a classifier that
perfectly separates the data. 

We leave theoretic 
investigation on non-separable 
distributions for future work. However, as an 
initial step towards a general theory, 
we show that 
in the population case 
one step of our alternating procedure 
cannot increase the loss.

Consider the following objective function 
\begin{align*}
  L(f_t, w_t) &= E_{x \sim P_p}[ l( f_t(x), 0) ] + E_{x \sim P_u}[ w_t(x) l( f_t(x), 1) ] \numberthis \label{eq:obj_onestep} \\ 
  \text{such that} \qquad & E_{x \sim P_u}[ w(x)] = 1-\alpha \text{ and } w(x) \in \{0,1\}
\end{align*}

Given $f_t$ and $w_t$, CVIR can be summarized as the following two step iterative procedure: 
(i) Fix $f_t$, optimize the loss to obtain $w_{t+1}$; and 
(ii) Fix $w_{t+1}$ and optimize the loss to obtain $f_{t+1}$. 
By construction of CVIR, we select $w_{t+1}$ 
such that we discard points with highest loss, 
and hence $L(f_t, w_{t+1}) \le L(f_t, w_{t})$. 
Fixing $w_{t+1}$, we minimize the $L(f_t, w_{t+1})$ to obtain $f_{t+1}$ and 
hence $L(f_{t+1}, w_{t+1}) \le L(f_t, w_{t+1})$. 
Combining these two steps, we get $L(f_{t+1}, w_{t+1}) \le L(f_t, w_{t})$.

\section{Experimental Details}\label{ap:details}

Below we present dataset details. We present experiments with MNIST Overlap in \appref{ap:mnist_overlap}. 
\begin{center}
    \begin{table}[H] 
        \centering
        \tabcolsep=0.12cm
        \renewcommand{\arraystretch}{1.2}
        \begin{tabular}{@{}*{7}{c}@{}}
        \toprule
        
        Dataset & Simulated PU Dataset & P vs N & \multicolumn{2}{c}{\#Positives} & \multicolumn{2}{c}{\#Unlabeled} \\ 
        & & & Train & Val & Train & Val \\ [0.5ex] 
        \midrule
        \multirow{2}{*}{CIFAR10} & Binarized CIFAR & [0-4] vs [5-9] & 12500 & 12500 & 2500 & 2500 \\
                                &  CIFAR Dog vs Cat & 3 vs 5 & 2500 & 2500 & 500 & 500 \\
        \midrule 
        \multirow{3}{*}{MNIST} & Binarized MNIST & [0-4] vs [5-9] & 15000 & 15000 & 2500 & 2500 \\ 
                                &  MNIST 17 & 1 vs 7 & 3000 & 3000 & 500 & 500 \\
                                &  MNIST Overlap & [0-7] vs [3-9] & 150000 & 15000 & 2500 & 2500 \\
        \midrule 
        IMDb &  IMDb & pos vs neg & 6250 & 6250 & 5000 & 5000 \\
        \bottomrule 
        \end{tabular}
    \end{table}    
\end{center}

For CIFAR dataset, we also use the standard data augementation of random crop and horizontal flip. PyTorch code is as follows: 

\texttt{(transforms.RandomCrop(32, padding=4),\\
\tab transforms.RandomHorizontalFlip())}

\subsection{Architecture and Implementation Details} 

All experiments were run on NVIDIA GeForce RTX 2080 Ti GPUs. We used PyTorch~\citep{NEURIPS2019a9015} and Keras with Tensorflow~\citep{abadi2016tensorflow} backend for experiments.  

For CIFAR10, we experiment with convolutional nets and MLP. For MNIST, we train MLP. In particular, we use ResNet18 \citep{he2016deep} and all convolution net~\citep{springenberg2014striving} . Implementation adapted from:  \url{https://github.com/kuangliu/pytorch-cifar.git}. We consider a 4-layered MLP. The PyTorch code for 4-layer MLP is as follows: 

\texttt{ nn.Sequential(nn.Flatten(), \\
\tab        nn.Linear(input\_dim, 5000, bias=True),\\
\tab        nn.ReLU(),\\
\tab        nn.Linear(5000, 5000, bias=True),\\
\tab        nn.ReLU(),\\
\tab        nn.Linear(5000, 50, bias=True),\\
\tab        nn.ReLU(),\\
\tab        nn.Linear(50, 2, bias=True)\\
\tab        )}

For all architectures above, we use Xaviers initialization~\citep{glorot2010understanding}. For all methods except nnPU and uPU, we do cross entropy loss minimization with SGD optimizer with momentum $0.9$. For convolution architectures we use a learning rate of $0.1$ and MLP architectures we use a learning rate of $0.05$. For nnPU and uPU, we minimize sigmoid loss with ADAM optimizer with learning rate $0.0001$ as advised in its original paper. For all methods,  we fix the weight decay param at $0.0005$.

For IMDb dataset, we fine-tune an off-the-shelf uncased BERT model~\citep{devlin2018bert}. Code adapted from Hugging Face Transformers~\citep{wolf-etal-2020-transformers}: \url{https://huggingface.co/transformers/v3.1.0/custom_datasets.html}. For all methods except nnPU and uPU, we do cross entropy loss minimization with Adam optimizer with learning rate $0.00005$ (default params). With the same hyperparameters and Sigmoid loss, we could not train BERT with nnPU and uPU due to vanishing gradients. Instead we use learning rate $0.00001$.

\subsection{Division between training set and hold-out set} 

Since the training set is used to 
learn the classifier 
(parameters of a deep neural network) 
and the hold-out set is just used 
to learn the mixture proportion estimate (scalar), 
we use a larger dataset for training. 
Throughout the experiments, we use an 
80-20 split of the original set. 

At a high level, we have an error bound 
on the mixture proportion estimate and 
we can use that to decide the split in general. 
As long as we use enough samples to make the 
$\calO(1/\sqrt{n})$ small in
our bound in \thmref{thm:main_MPE}, 
we can use the rest of the samples 
to learn the classifier.

\section{Additional Experiments}

\subsection{nnPU vs PN classification} \label{ap:PN}

In this section, we compare the performance of nnPU and PvN training on the same positive and negative (from the unlabeled) data at $\alpha = 0.5$.   
We highlight the huge classification performance gap between nnPU and PvN training and show that training with CVuO objective partially recovers the performance gap. 
Note, to train PvN classifier, we use the same hyperparameters as that with PvU training. 

\begin{table}[h]
    \centering
    \small
    \tabcolsep=0.12cm
    \renewcommand{\arraystretch}{1.2}
    \begin{tabular}{@{}*{6}{c}@{}}
    \toprule
    Dataset & Model  & \thead{nnPU \\(known $\alpha$)} & \thead{PvN}  & \thead{CVuO\\(known $\alpha$)} & \thead{(TED)$^n$ \\(unknown $\alpha$)} \\
    \midrule
    \multirow{3}{*}{ \parbox{1.5cm}{\centering Binarized CIFAR} }  & ResNet & $76.8$ & $86.9$ & $82.6$ & $82.7$\\
    & All Conv & $72.1$ & $76.7$ & $77.1$ & $76.8$ \\
    & MLP & $63.9$ & $65.1$ & $65.9$ & $63.2$ \\
    \midrule
    \multirow{2}{*}{ \parbox{1.5cm}{\centering CIFAR Dog vs Cat} }  & ResNet & $72.6$ & $80.4$ & $74.0$ & $76.1$ \\
    & All Conv & $68.4$ & $77.9$ & $71.0$ &  $72.2$ \\
    \midrule 
    \multirow{1}{*}{ \parbox{1.5cm}{\centering Binarized MNIST} } & MLP & $95.9$ & $96.7$ & $96.4$ & $95.9$ \\
    \addlinespace[0.2cm]
    \midrule
    \multirow{1}{*}{ \parbox{1.5cm}{\centering MNIST17} }  & MLP & $98.2$ & $99.0$ & $98.6$ & $98.6$\\
    \midrule
    \multirow{1}{*}{ \parbox{1.5cm}{\centering IMDb} }  & BERT & $86.2$ & $89.1$ & $87.4$ & $88.1$ \\
    \bottomrule 
    \end{tabular}  
    \vspace{10pt}
    \caption{Accuracy for PvN classification with nnPU, PvN, CVuO objective and (TED)$^n$ training. 
    Results reported by aggregating aggregating over 10 epochs.}\label{table:pn_classification}
  \end{table}

\subsection{Under-Fitting due to pessimistic early stopping} \label{ap:early_stopping}

\citet{ivanov2019dedpul} explored the following 
heuristics for ad-hoc early stopping criteria: 
training proceeds 
until the loss on unseen PU data
ceases to decrease. 
In particular, the authors 
suggested early stopping criterion based
on the loss on unseen PU data 
doesn't decrease in epochs separated 
by a pre-defined window of length $l$.
The early stopping is done when this happens 
consecutively for $l$ epochs.
However, this approach 
leads to severe under-fitting. 
When we fix $l=5$, we observe a significant 
performance drop in CIFAR classification and MPE. 

With PvU training, 
the performance of ResNet model on Binarized CIFAR
(in \tabref{table:classification}) 
drops from $78.3$ (orcale stopping) to 
$60.4$ (with early stopping). Similar on CIFAR CAT vs Dog, 
the performance of the same architecture drops 
from $71.6$ (orcale stopping) to 
$58.4$ (with early stopping). 
Note that the decrease in accuracy is less 
or not significant
for MNIST.
With PvU training, 
the performance of MLP model on Binarized MNIST
(in \tabref{table:classification}) 
drops from $94.5$ (orcale stopping) to 
$94.1$ (with early stopping). 
This is because we obtain good performance 
on MNIST early in training.

\subsection{Results parallel to \figref{fig:cifar_results}} \label{ap:figure3_app}

Epoch wise results for all models for Binarized CIFAR, CIFAR Dog vs Cat, Binarized MNIST, MNIST 17 and IMDb. 

\begin{figure*}[h!]
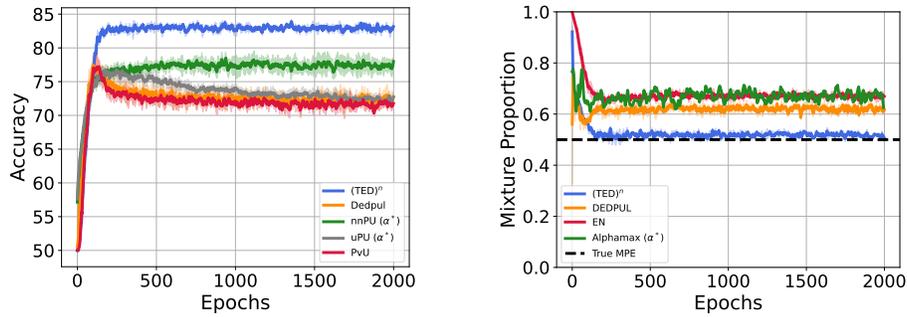

  \centering 
  \subfigure{\includegraphics[width=0.4\linewidth]{figures/cifar_cnn_3a.pdf}}\hfil
  \subfigure{\includegraphics[width=0.4\linewidth]{figures/cifar_cnn_3b.pdf}}
  \caption{ Epoch wise results with ResNet-18 network trained on CIFAR-binarized.}
  \label{fig:cifar_binarized_resnet_results}
\end{figure*}

\begin{figure*}[h!]
  \centering 
  \subfigure{\includegraphics[width=0.4\linewidth]{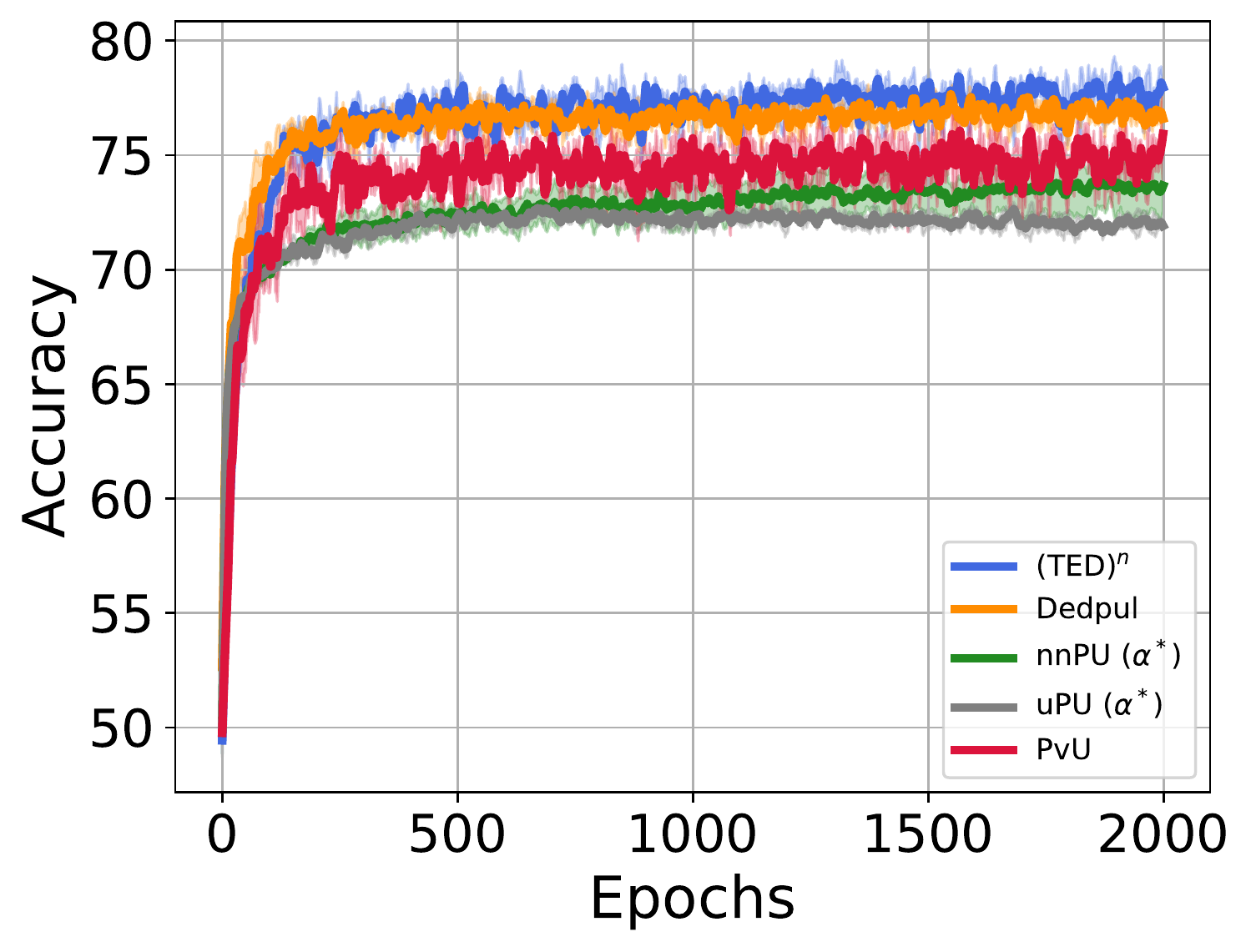}}\hfil
  \subfigure{\includegraphics[width=0.4\linewidth]{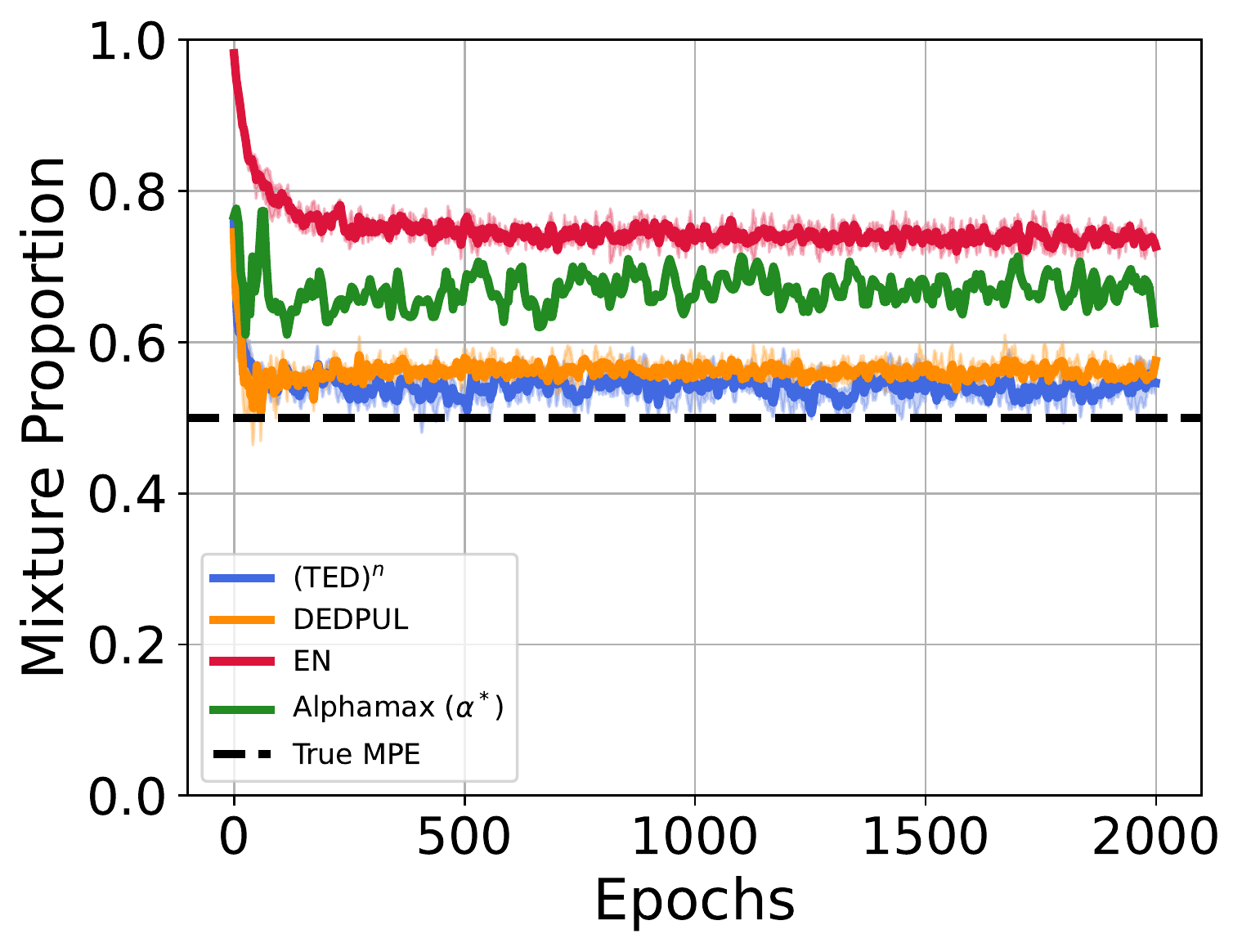}}
  \caption{ Epoch wise results with All convolutional network trained on CIFAR-binarized.}
  \label{fig:cifar_binarized_allconv_results}
\end{figure*}

\begin{figure*}[h!]
  \centering 
  \subfigure{\includegraphics[width=0.4\linewidth]{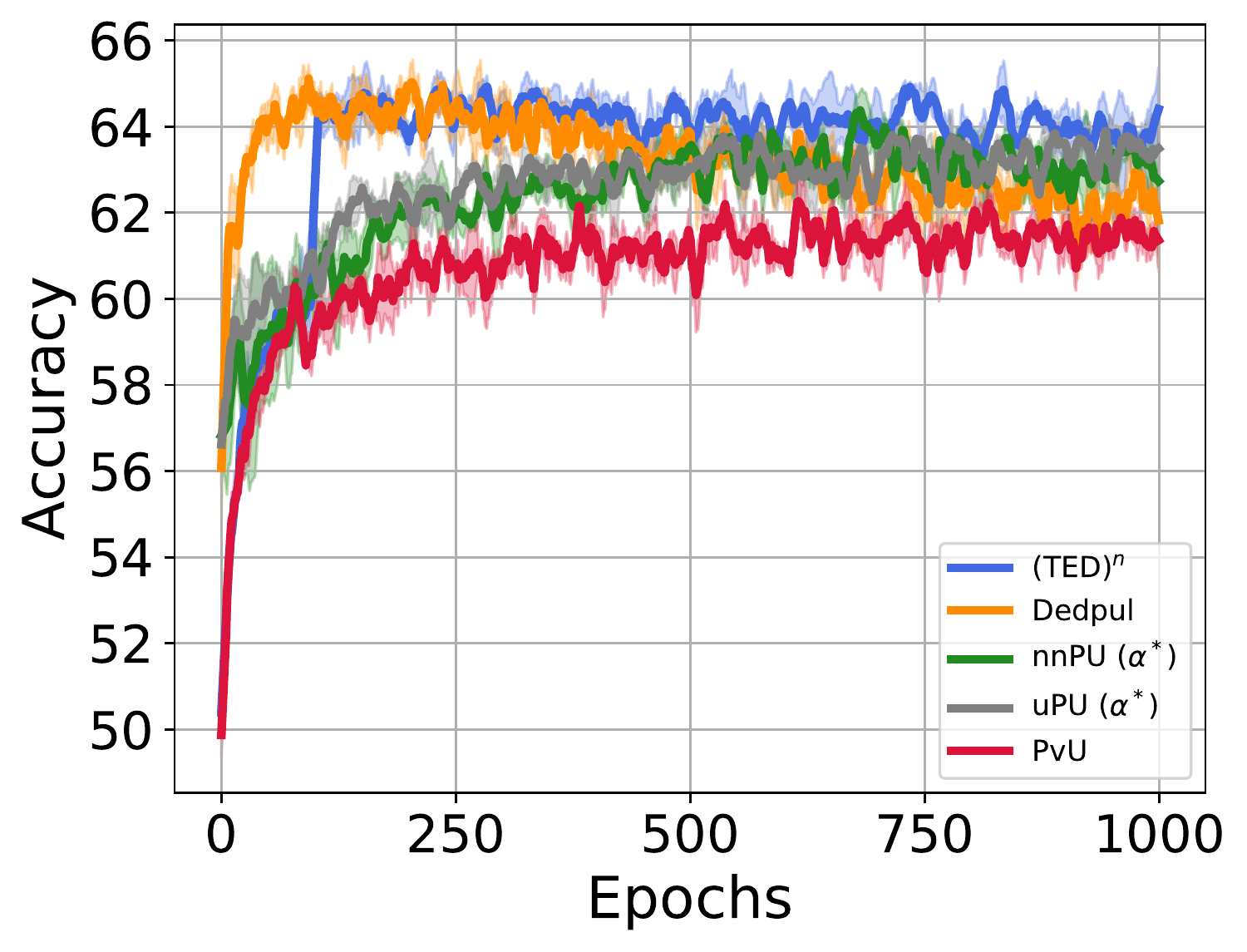}}\hfil
  \subfigure{\includegraphics[width=0.4\linewidth]{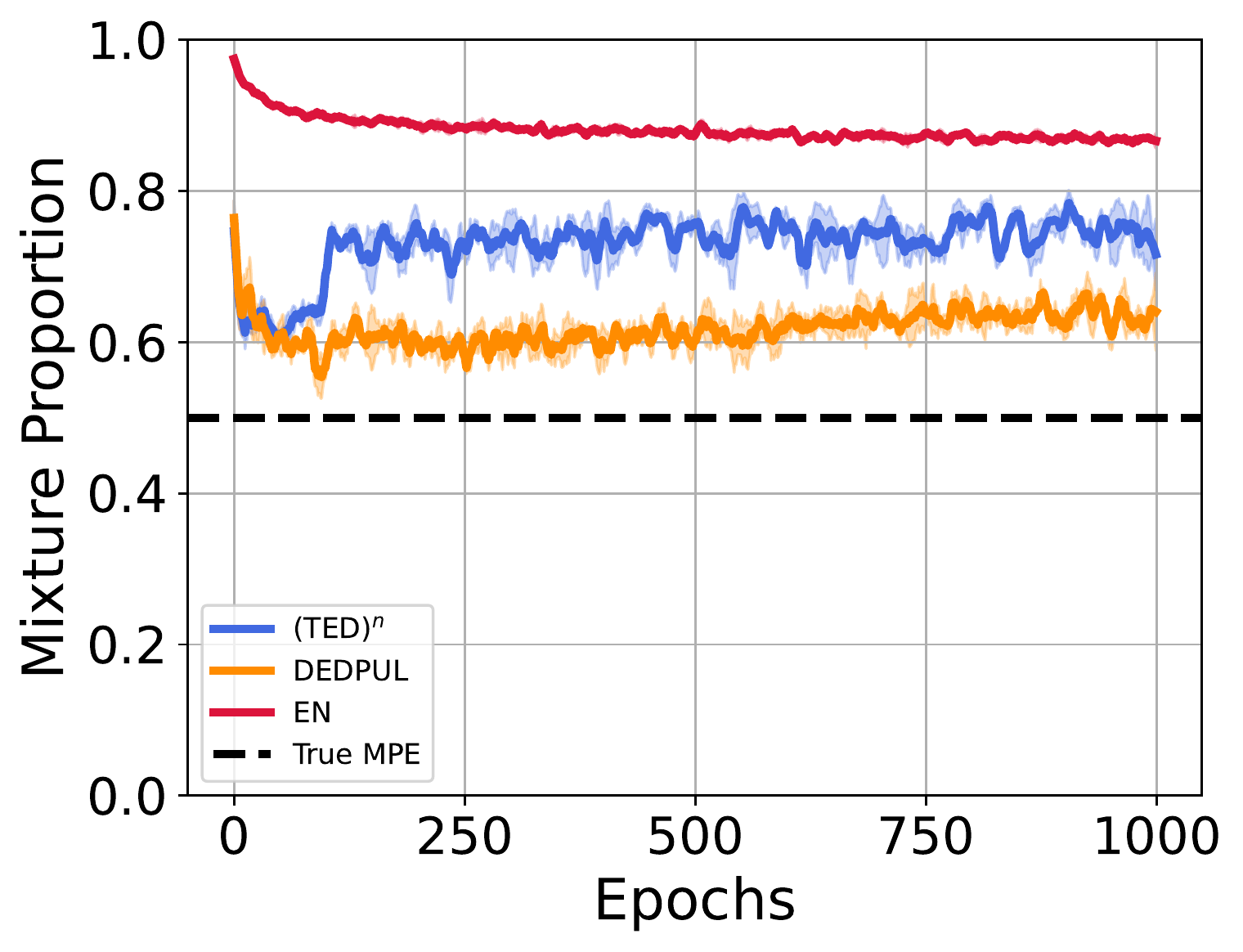}}
  \caption{ Epoch wise results with FCN trained on CIFAR-binarized.}
  \label{fig:cifar_binarized_FCN_results}
\end{figure*}

\begin{figure*}[h!]
    \centering 
    \subfigure{\includegraphics[width=0.4\linewidth]{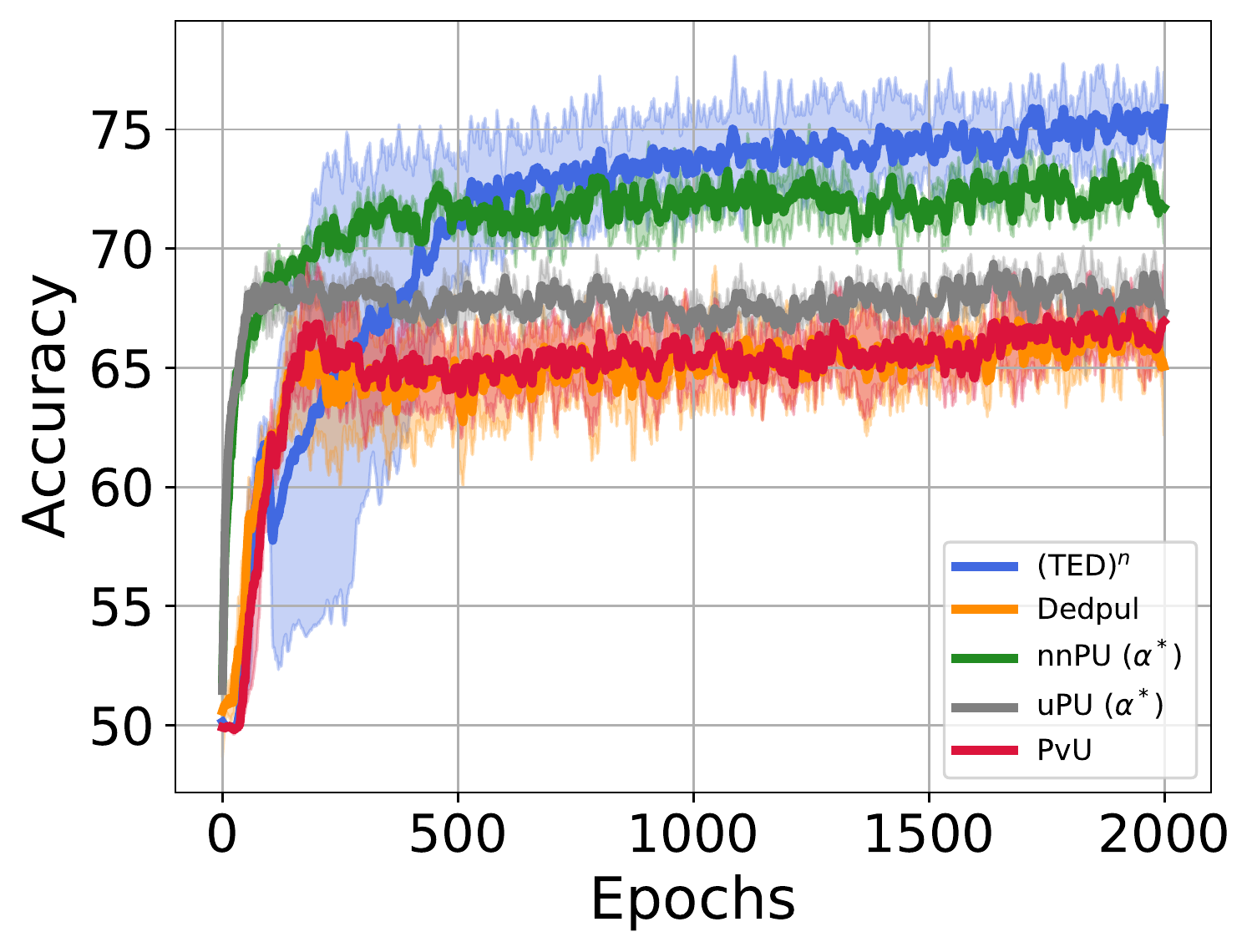}}\hfil
    \subfigure{\includegraphics[width=0.4\linewidth]{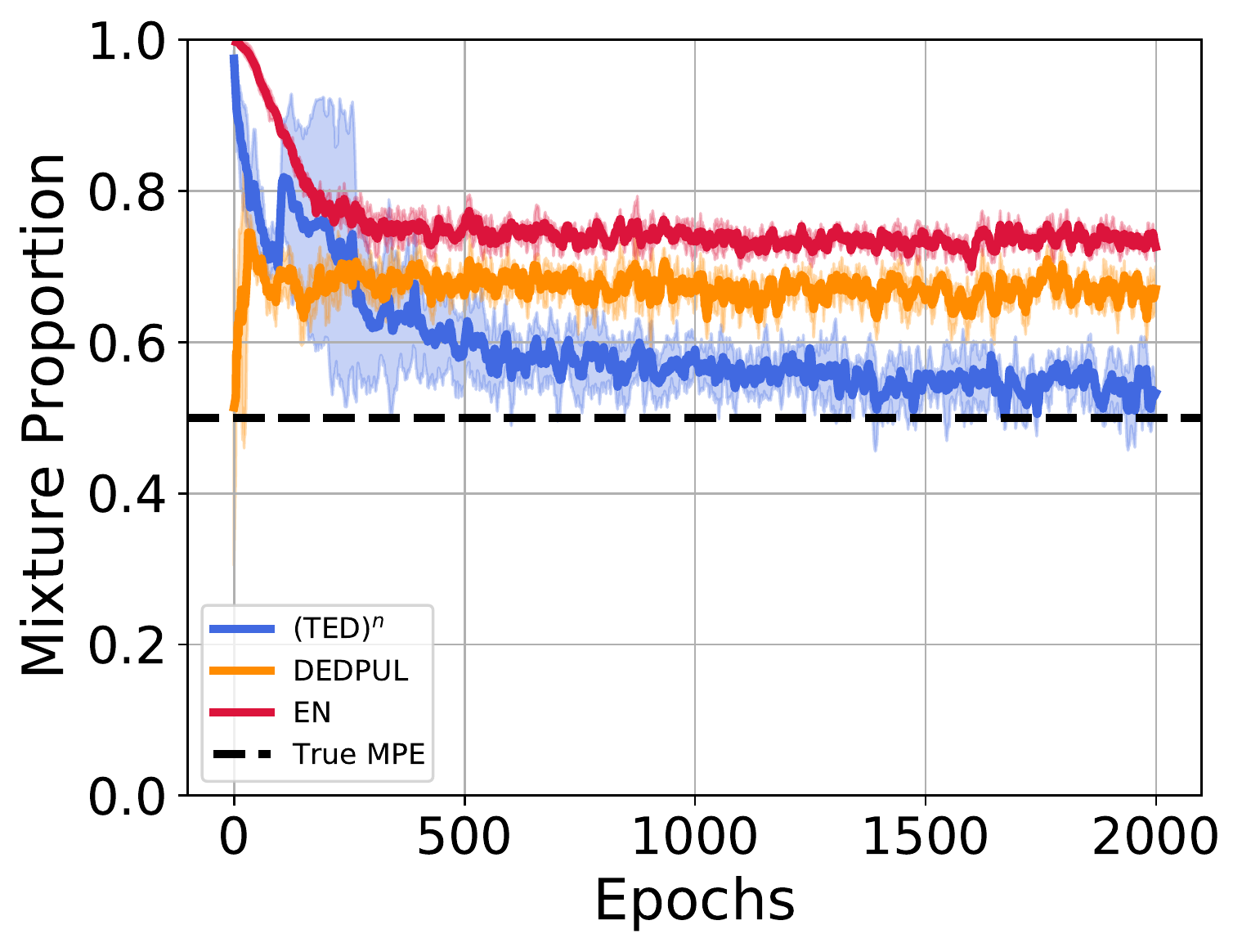}}
    \caption{ Epoch wise results with ResNet-18 trained on CIFAR Dog vs Cat.}
    \label{fig:cifar_dogcat_resnet_results}
  \end{figure*}

  \begin{figure*}[h!]
    \centering 
    \subfigure{\includegraphics[width=0.4\linewidth]{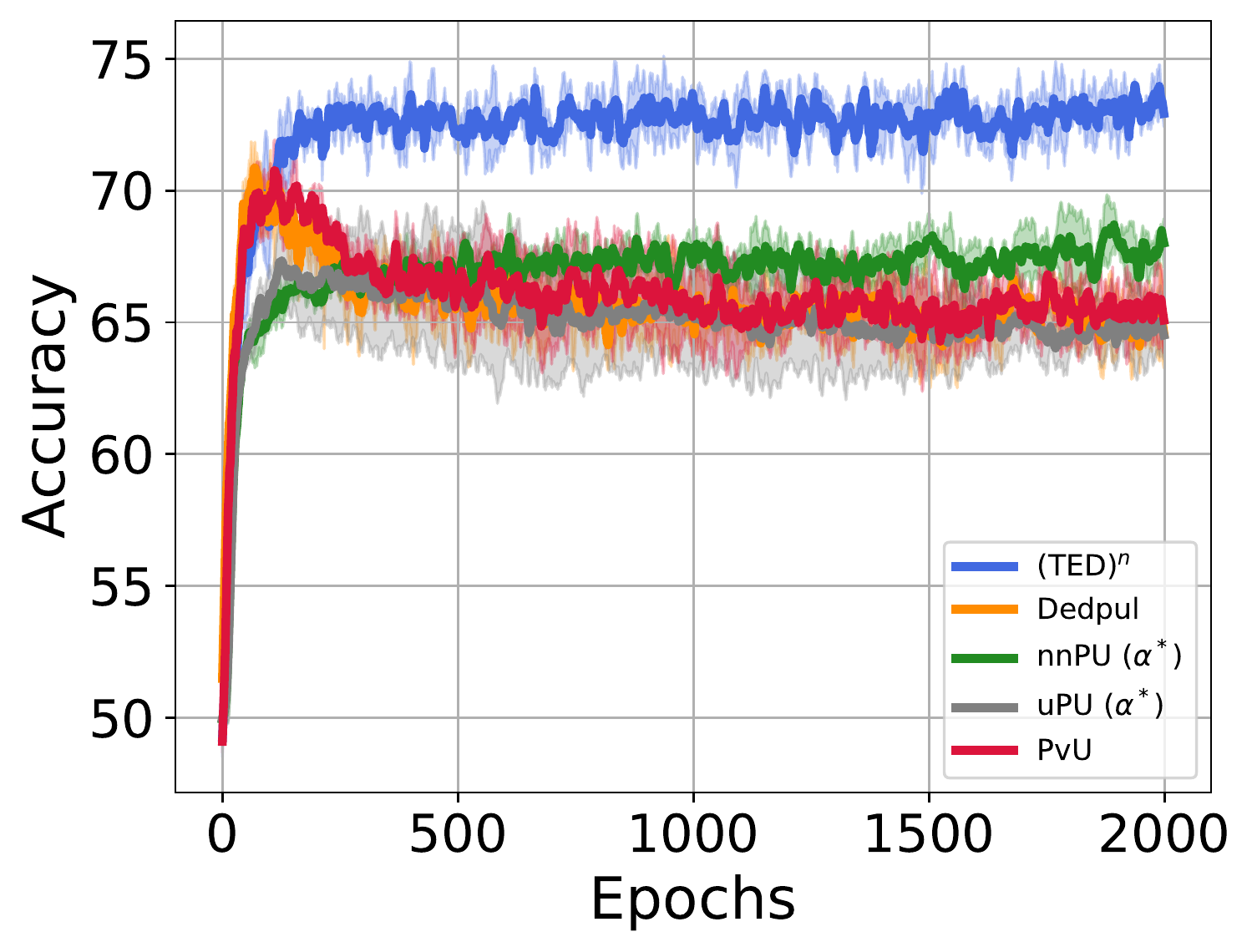}}\hfil
    \subfigure{\includegraphics[width=0.4\linewidth]{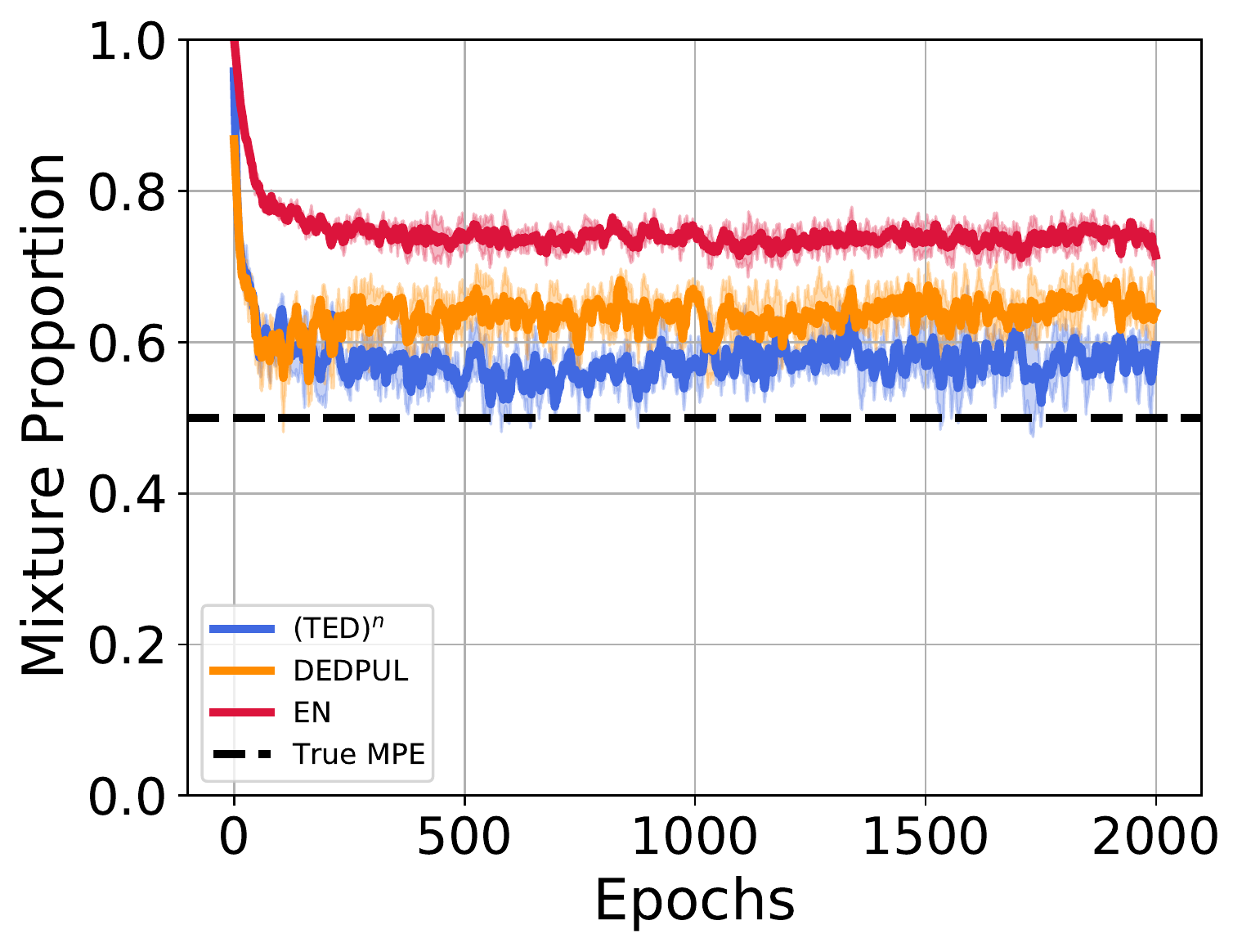}}
    \caption{ Epoch wise results with All convolutional network trained on CIFAR Dog vs Cat.}
    \label{fig:cifar_dogcat_allconv_results}
  \end{figure*}

  \begin{figure*}[h!]
    \centering 
    \subfigure{\includegraphics[width=0.4\linewidth]{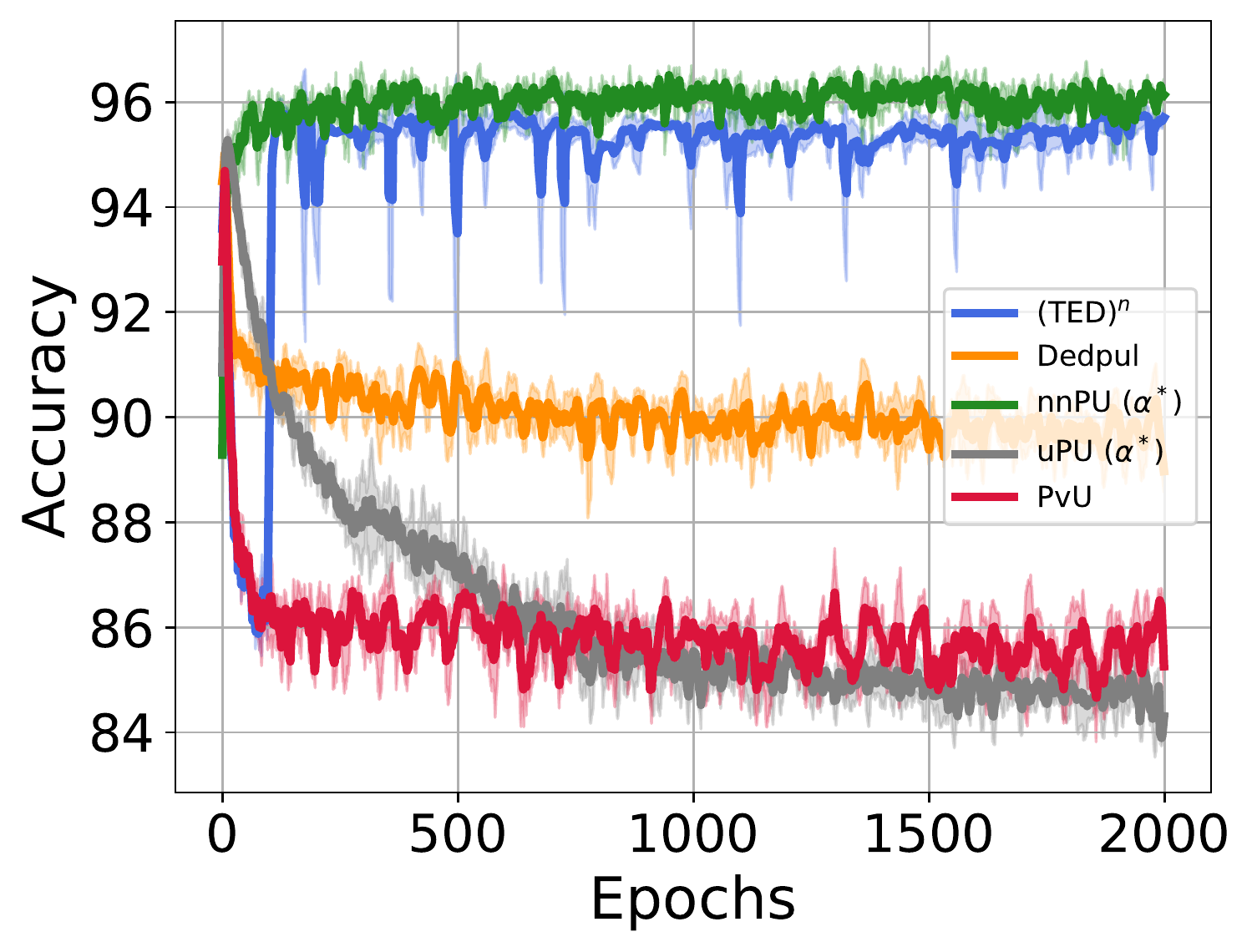}}\hfil
    \subfigure{\includegraphics[width=0.4\linewidth]{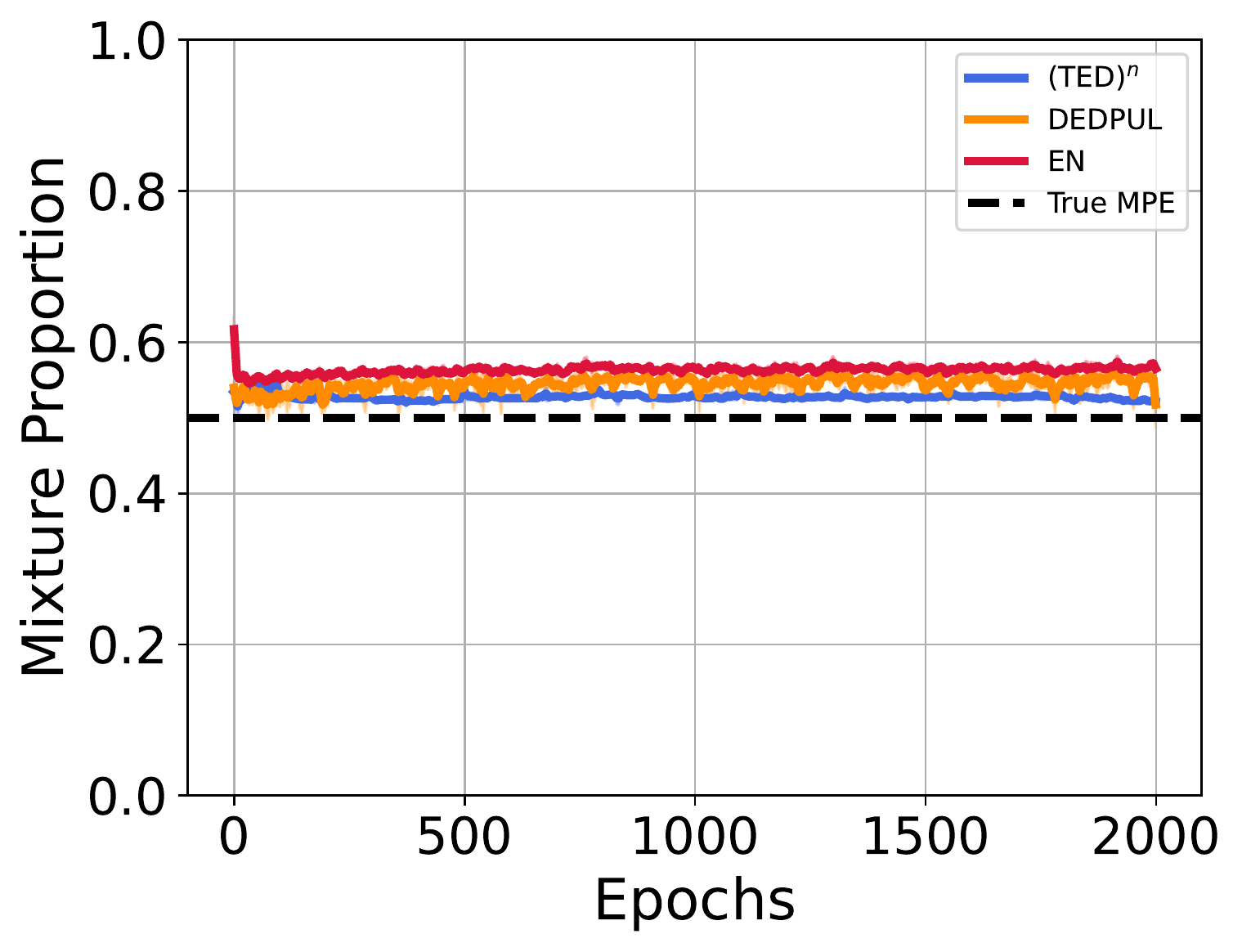}}
    \caption{ Epoch wise results with MLP trained on Binarized MNIST.}
    \label{fig:mnist_binary_results}
  \end{figure*}

  \begin{figure*}[h!]
    \centering 
    \subfigure{\includegraphics[width=0.4\linewidth]{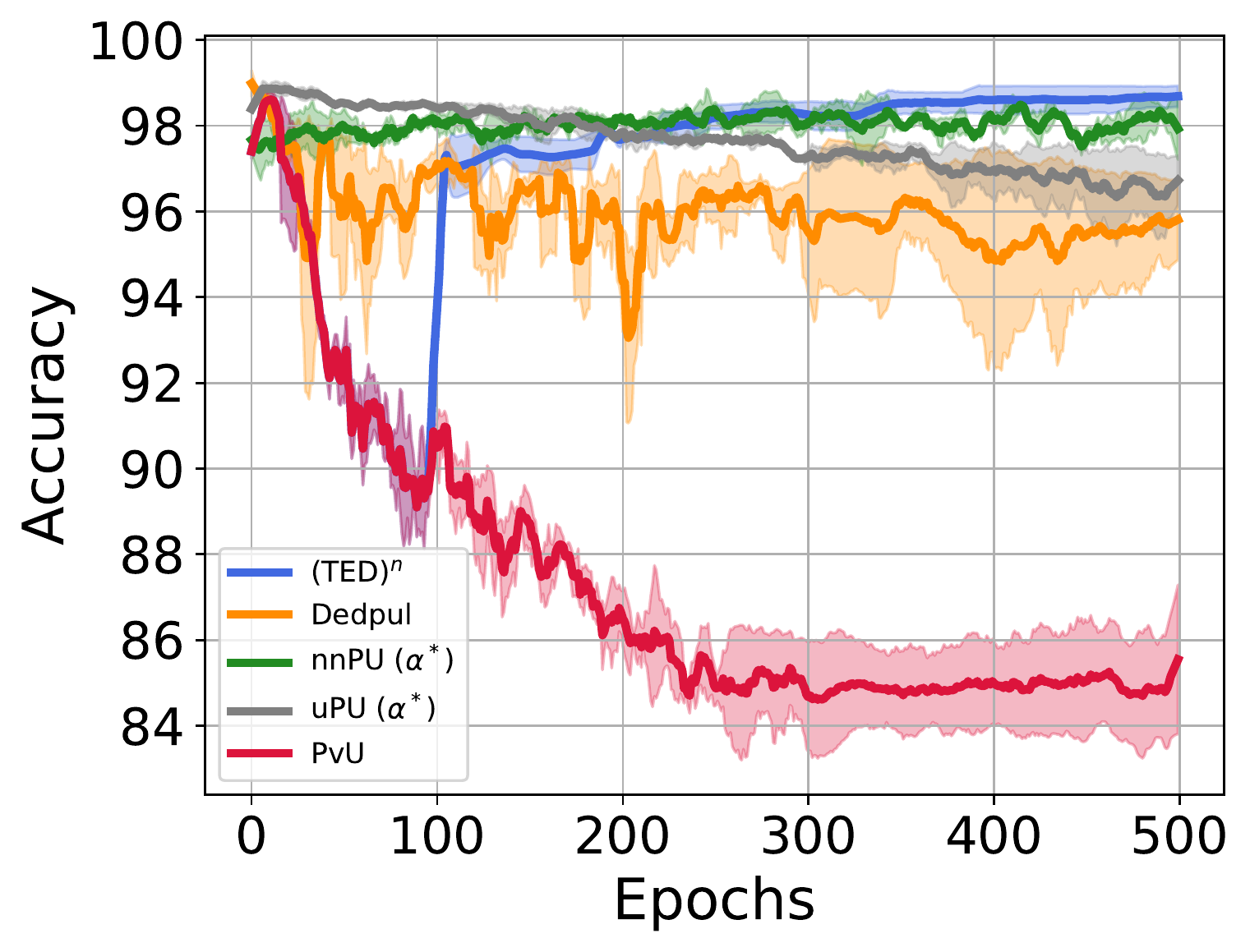}}\hfil
    \subfigure{\includegraphics[width=0.4\linewidth]{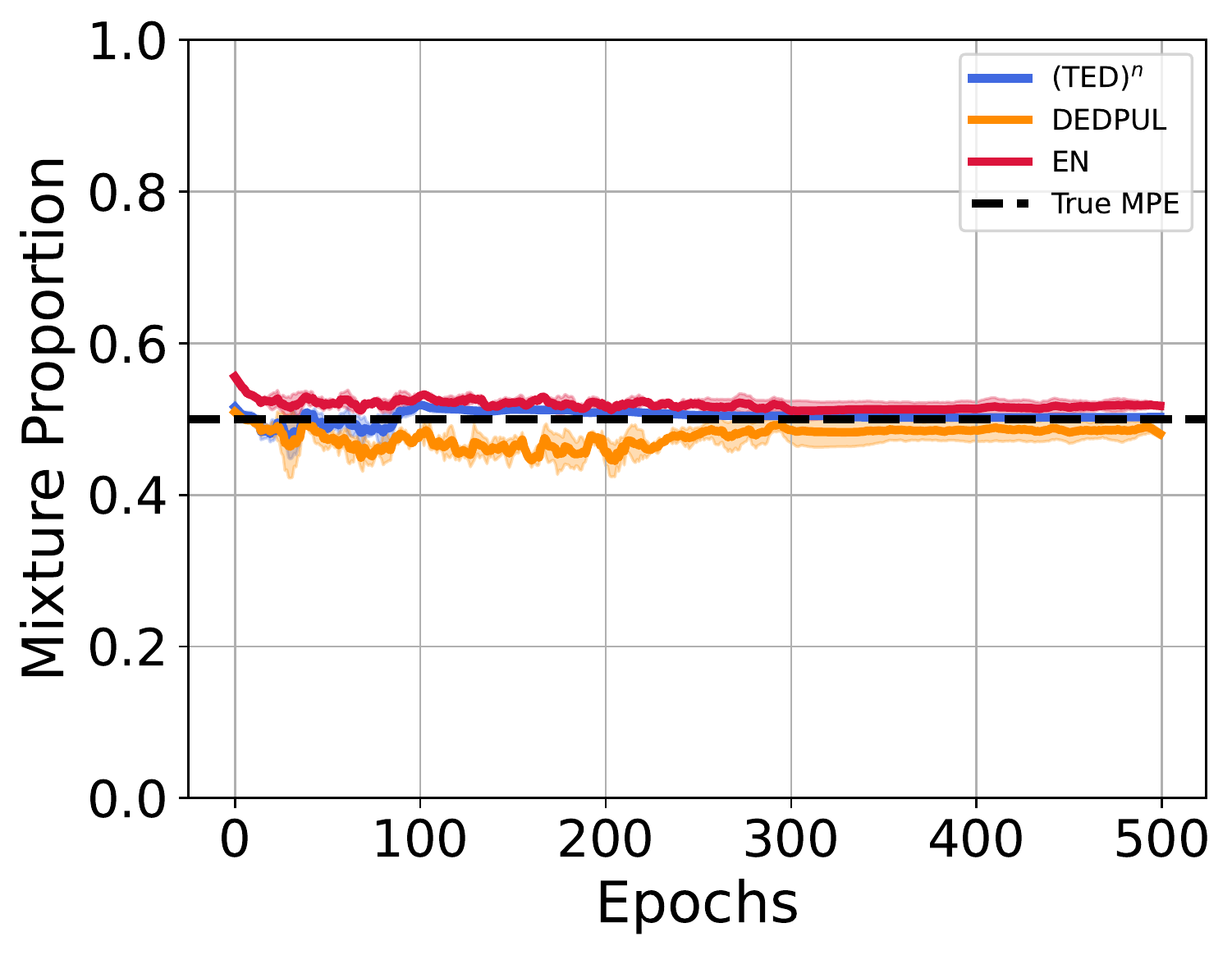}}
    \caption{ Epoch wise results with MLP trained on MNIST 17.}
    \label{fig:mnist_17_results}
  \end{figure*}

  \begin{figure*}[h!]
    \centering 
    \subfigure{\includegraphics[width=0.4\linewidth]{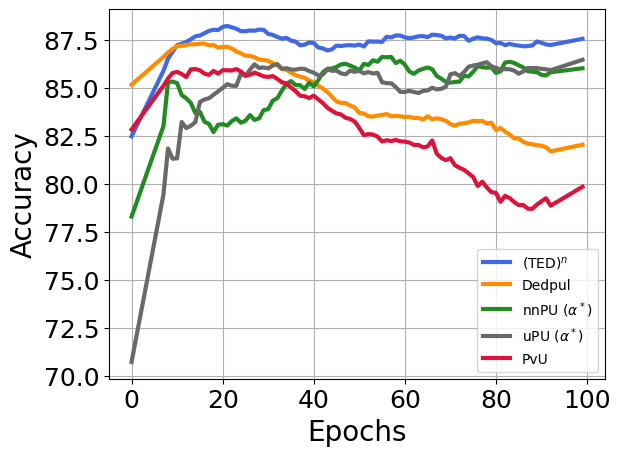}}\hfill
    \subfigure{\includegraphics[width=0.4\linewidth]{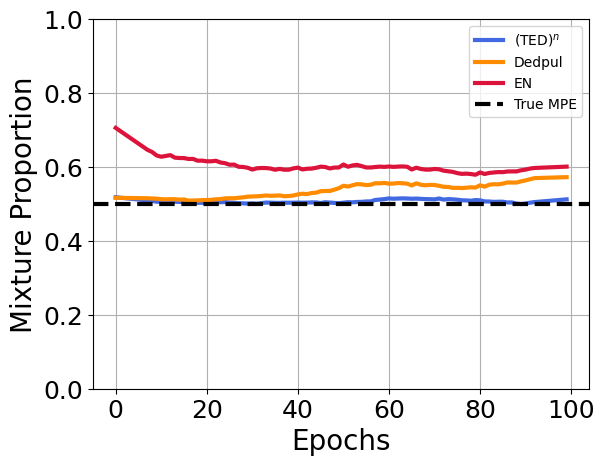}}
    \caption{ Epoch wise results with BERT trained on IMDb.}
    \label{fig:imdb_results}
  \end{figure*}

\newpage
\subsection{Overfitting on unlabeled data as PvU training proceeds} \label{ap:overfitting_unlabeled}

  \begin{figure*}[h!]
    \centering 
    \subfigure{\includegraphics[width=0.3\linewidth]{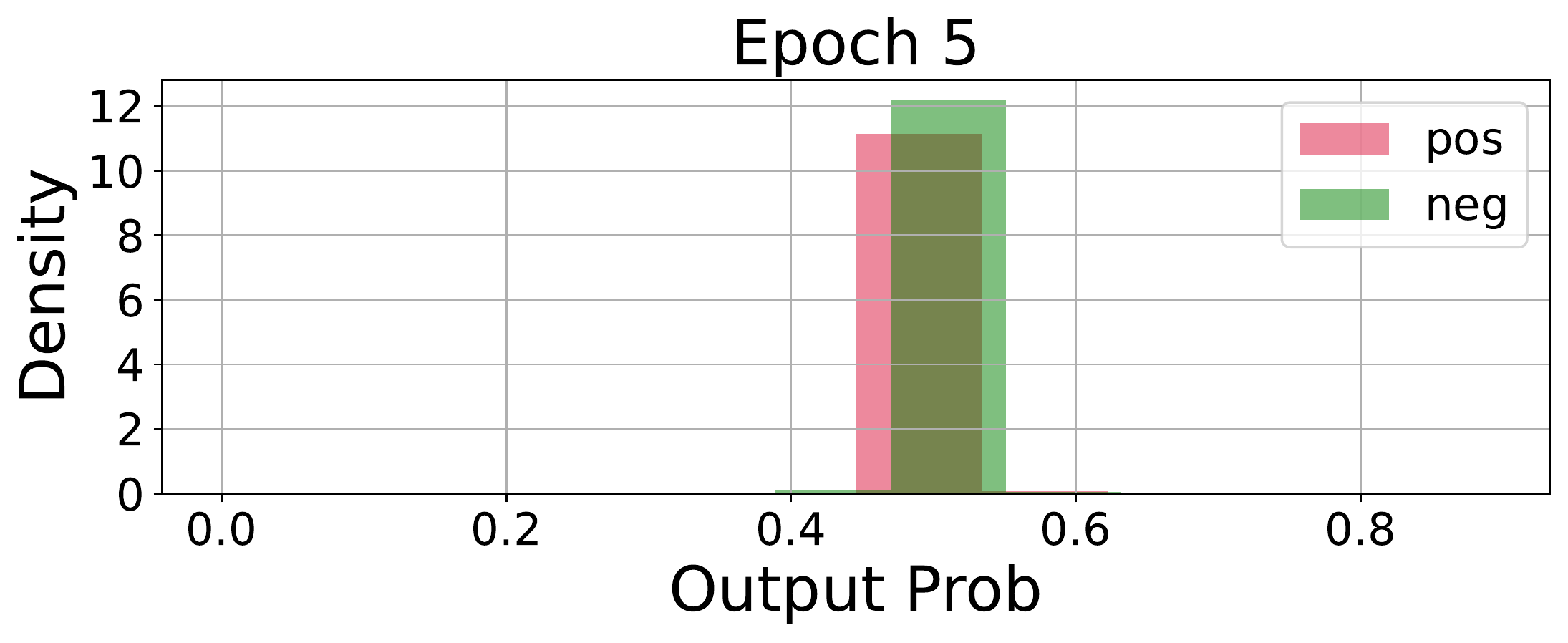}}\hfill
    \subfigure{\includegraphics[width=0.3\linewidth]{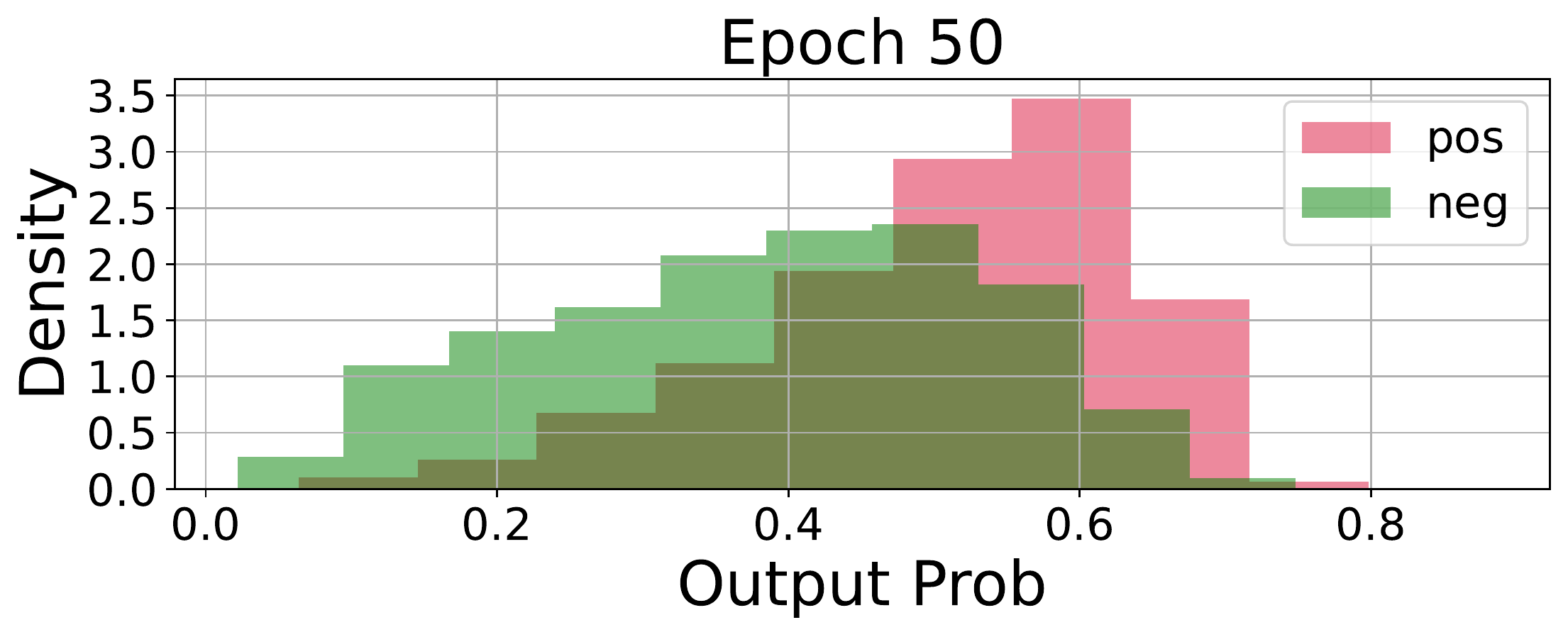}}\hfill
    \subfigure{\includegraphics[width=0.3\linewidth]{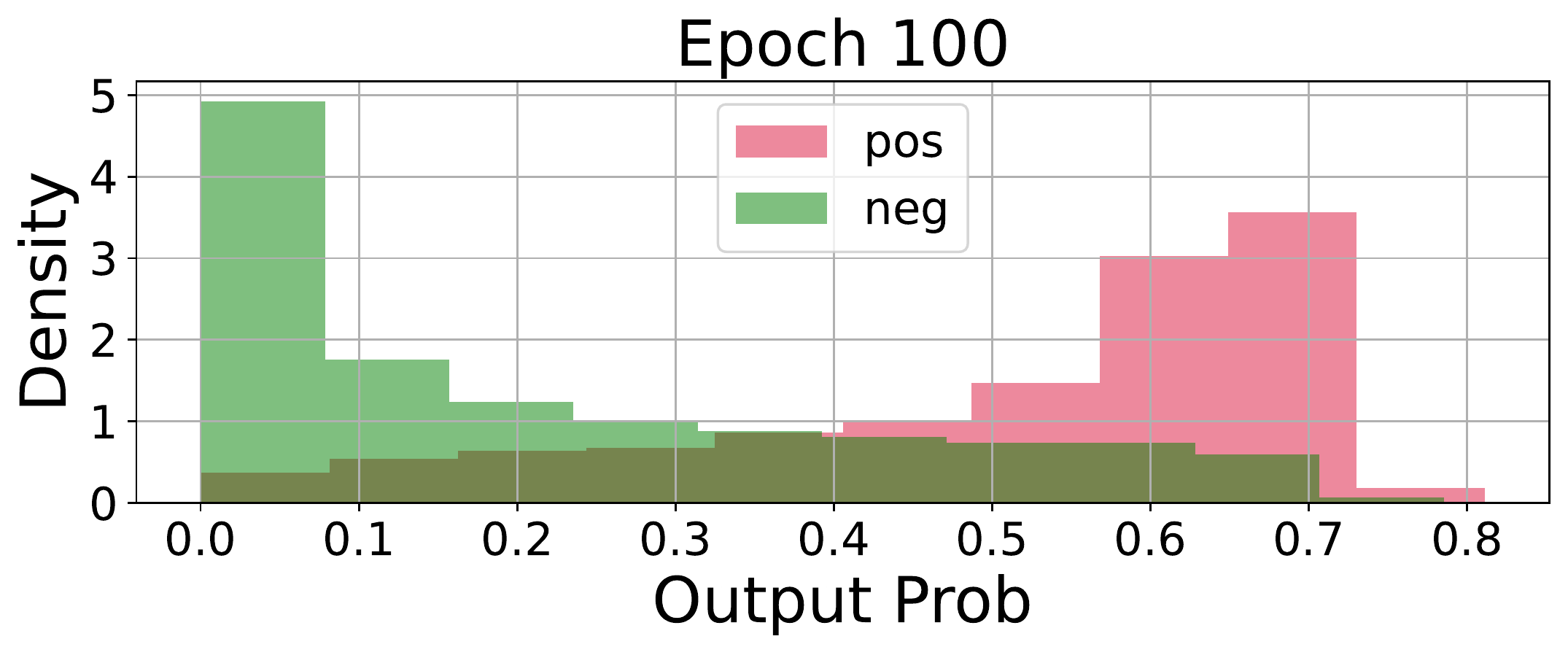}}
    \subfigure{\includegraphics[width=0.3\linewidth]{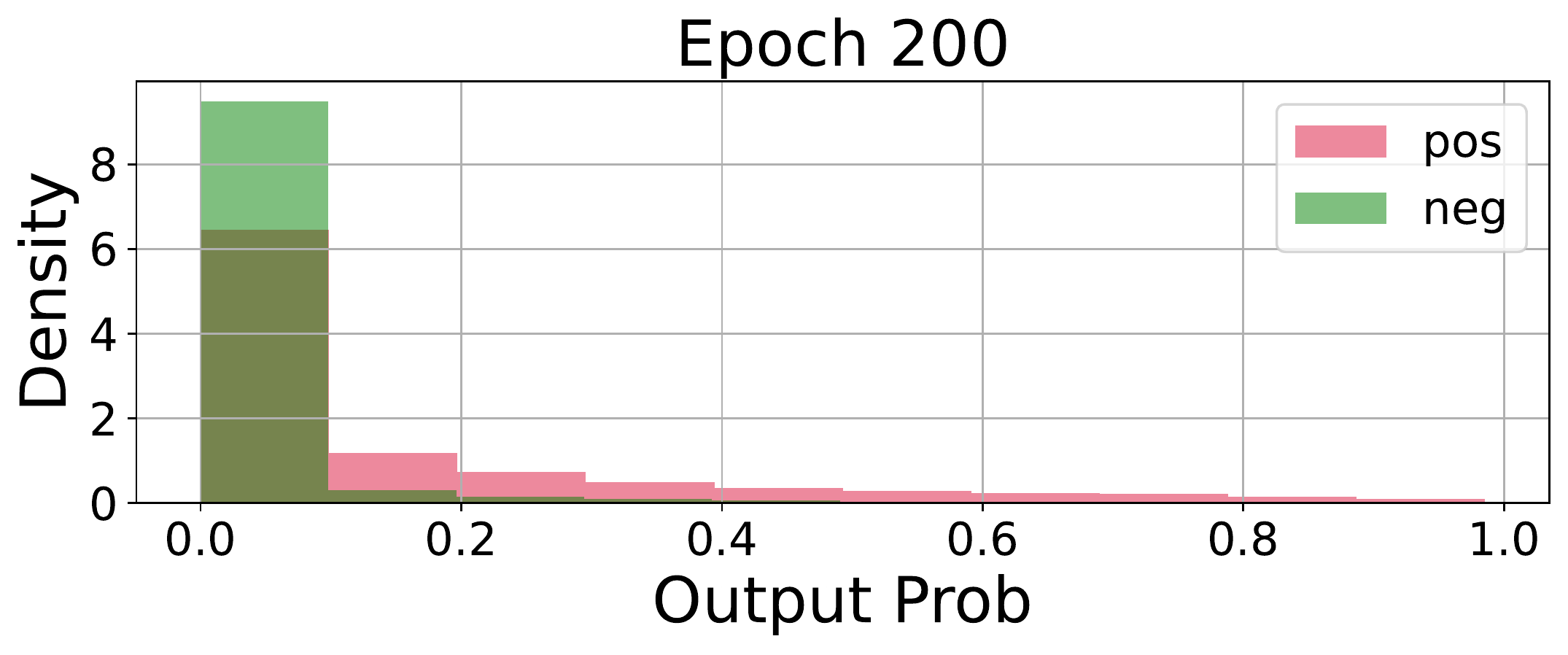}}\hfill
    \subfigure{\includegraphics[width=0.3\linewidth]{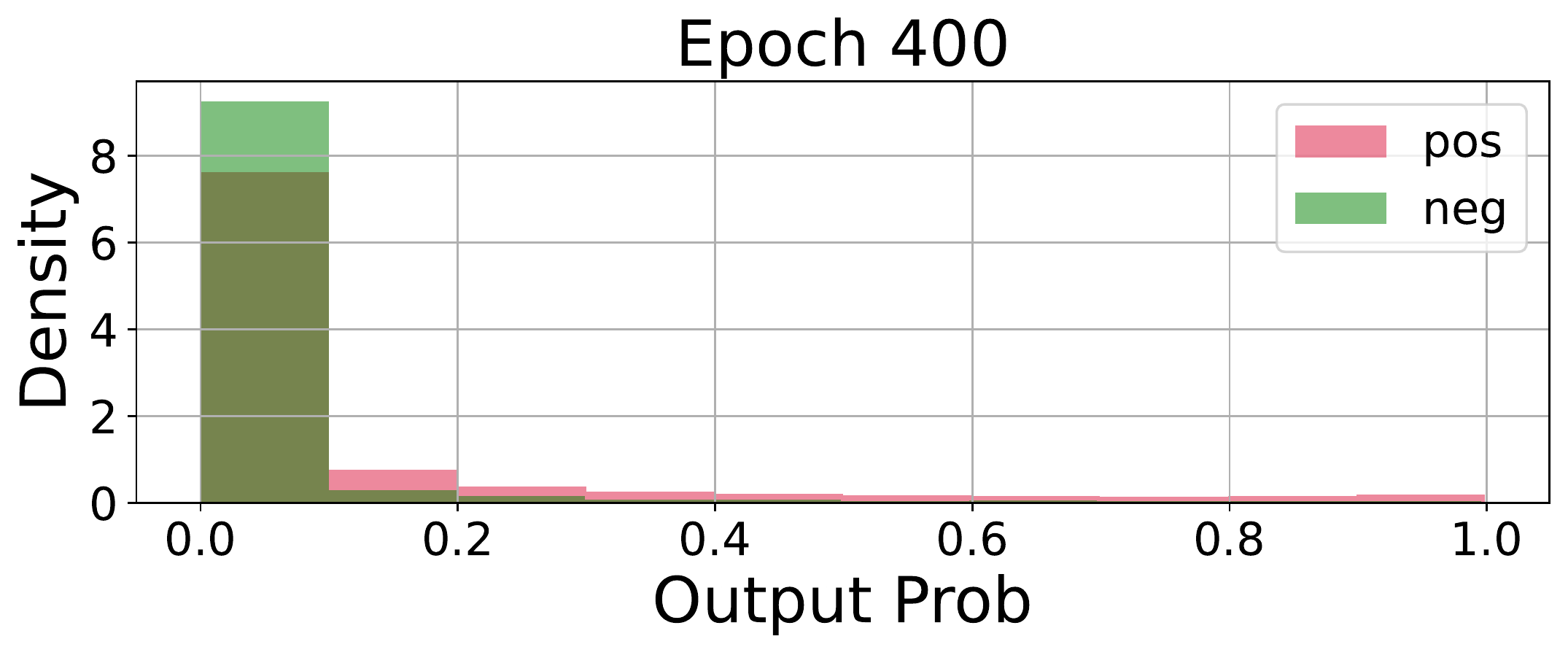}}\hfill
    \subfigure{\includegraphics[width=0.3\linewidth]{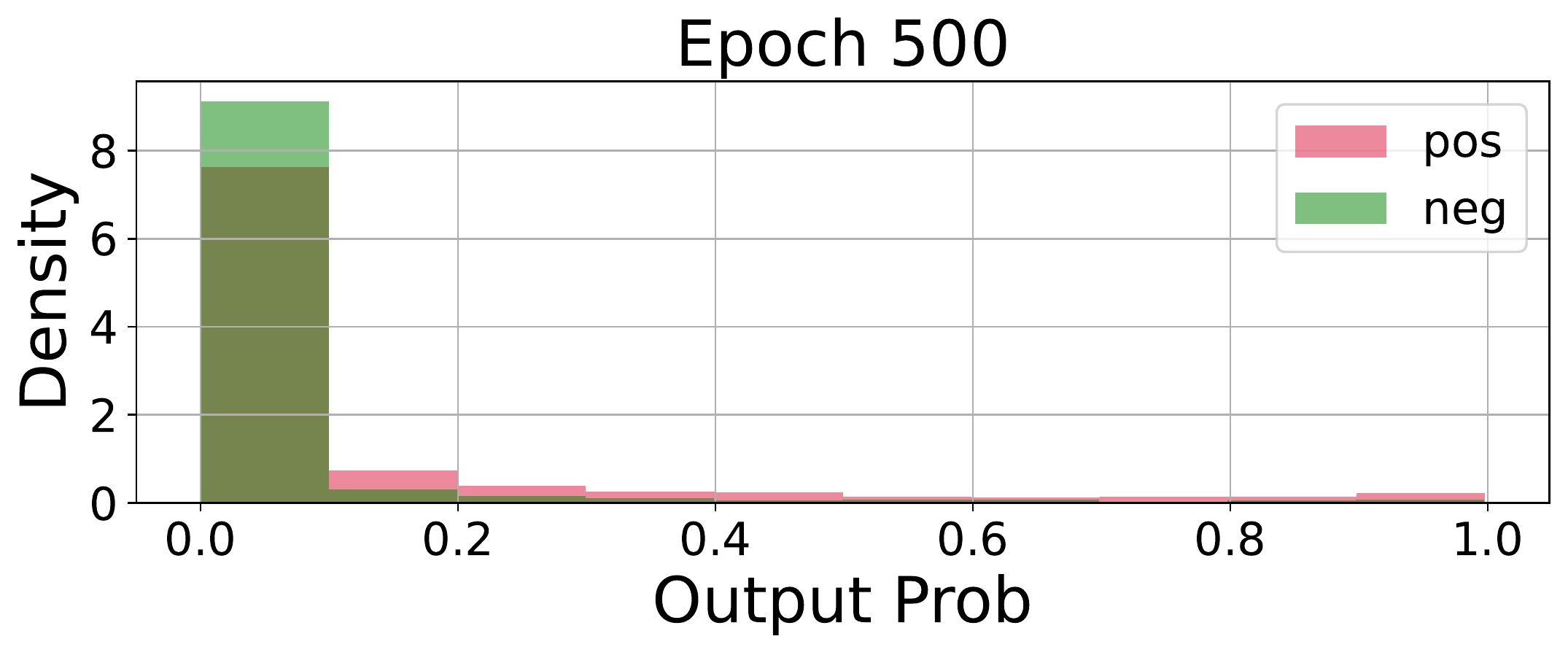}}
  
    \caption{Score assigned by the classifier to positive and negative points in the unlabeled training set as PvU training proceeds. As training proceeds, classifier memorizes both positive and negative in unlabeled as negatives. }
    \label{fig:PvU_training_overfit}
  \end{figure*}
  
In \figref{fig:PvU_training_overfit}, we show the distribution of unlabeled training points. We show that as positive versus unlabeled training proceeds  with a ResNet-18 model 
on binarized CIFAR dataset, 
classifier memorizes all the unlabeled data as negative 
assigning them very small scores (i.e., the probability of them being negative).


\subsection{Ablations to (TED)$^n$} \label{ap:ablation}

\textbf{Varying the number of warm start epochs {} {}}
We now vary the number of warm start epochs with (TED)$^n$. We observe that increasing the number of warm start epochs doesn't hurt (TED)$^n$ even when the classifier at the end of the warm start training memorized PU training data due PvU training. 
While in many cases (TED)$^n$ training without warm start is able to recover the same performance, it fails to learn anything for CIFAR Dog vs Cat with all convolutional neural network. This highlights the need for warm start training with (TED)$^n$.  

\begin{figure*}[h]
    \centering 
    \subfigure{\includegraphics[width=0.23\linewidth]{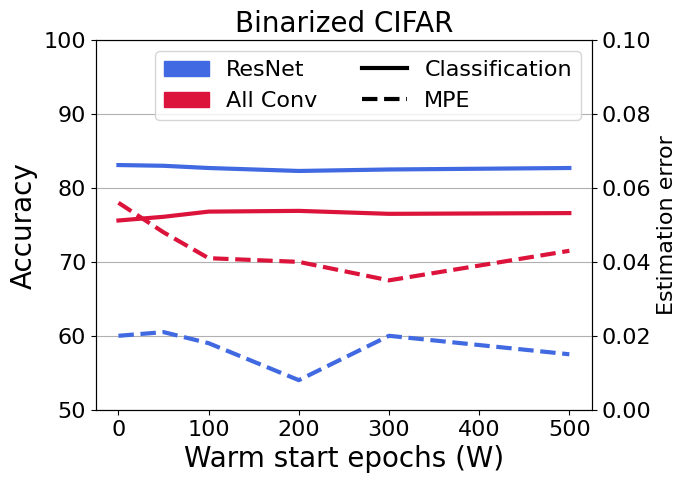}}\hfill
    \subfigure{\includegraphics[width=0.23\linewidth]{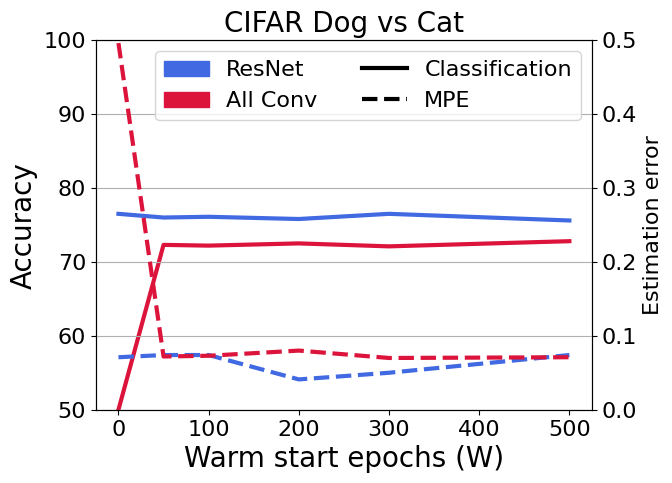}}\hfill
    \subfigure{\includegraphics[width=0.23\linewidth]{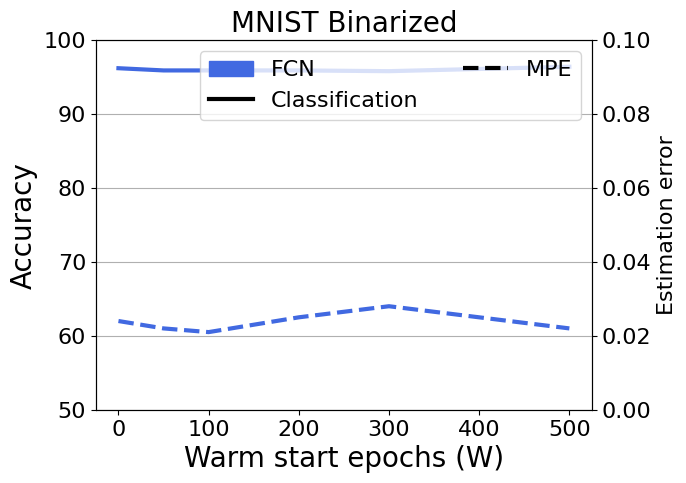}}\hfill
    \subfigure{\includegraphics[width=0.23\linewidth]{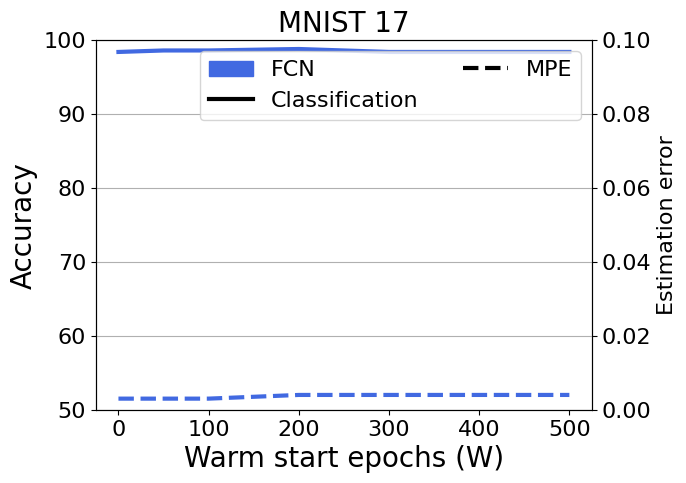}}
    \caption{Classification and MPE results with varying warm start epochs $W$ with (TED)$^n$}
    \label{fig:warm_start_epochs}
  \end{figure*}

 \textbf{Varying the true mixture proportion $\alpha$ {} {}} 
Next, we vary $\alpha$, the true mixture proportion and present results for MPE and  classification in \figref{fig:mpe_classification_alpha}. 
Overall, across all $\alpha$, our method (TED)$^n$ is able to achieve superior performance as compared to alternate algorithms. 
We omit high $\alpha$ for CIFAR and IMDb datasets as all the methods result in trivial accuracy and mixture proportion estimate. 

 \begin{figure*}[h]
    \centering 
    \subfigure{\includegraphics[width=0.19\linewidth]{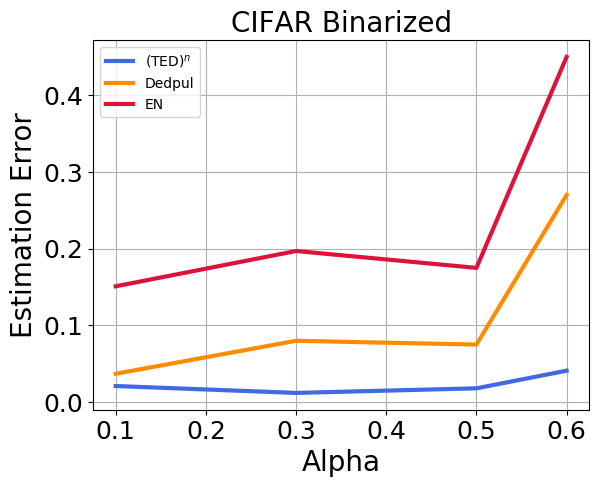}}\hfill
    \subfigure{\includegraphics[width=0.19\linewidth]{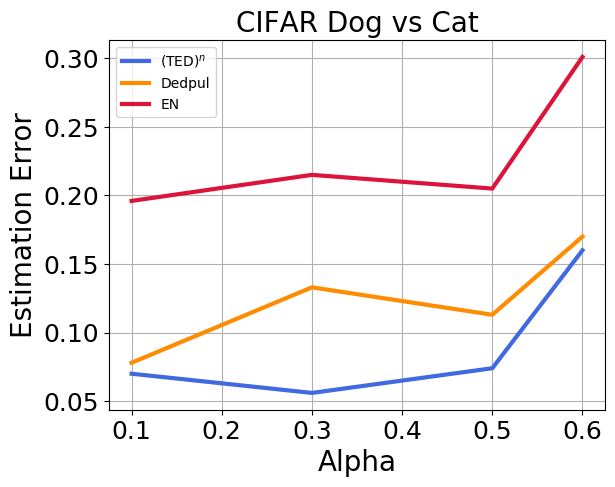}}\hfill
    \subfigure{\includegraphics[width=0.19\linewidth]{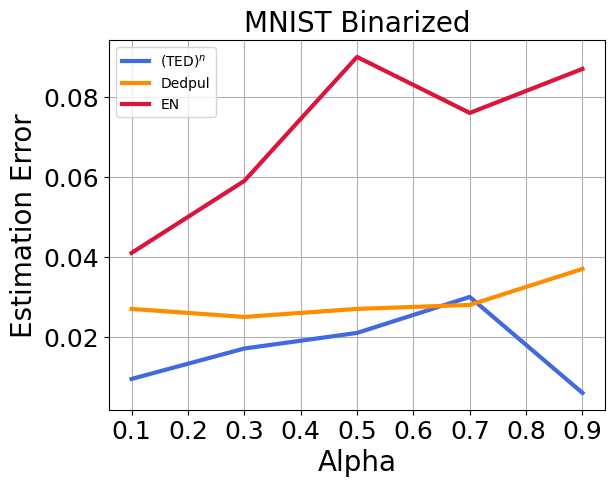}}\hfill
    \subfigure{\includegraphics[width=0.19\linewidth]{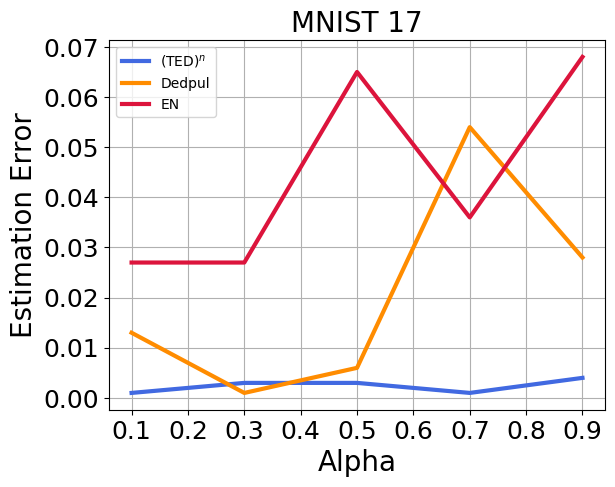}}
    \subfigure{\includegraphics[width=0.19\linewidth]{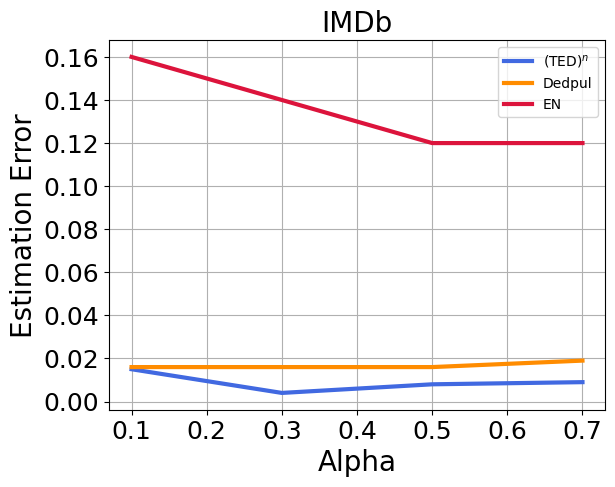}}
    \centering 
    \subfigure{\includegraphics[width=0.19\linewidth]{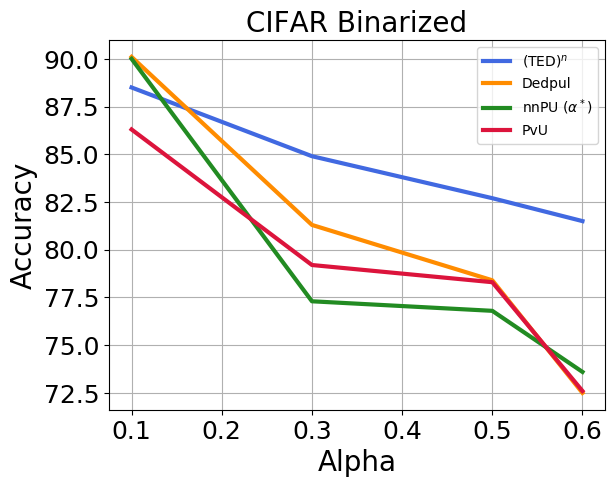}}\hfill
    \subfigure{\includegraphics[width=0.19\linewidth]{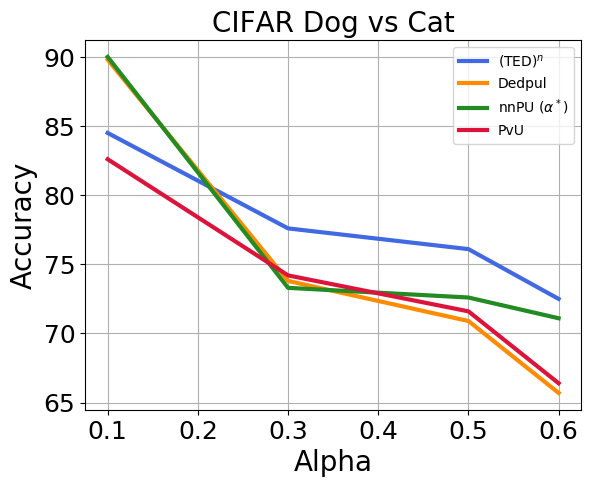}}\hfill
    \subfigure{\includegraphics[width=0.19\linewidth]{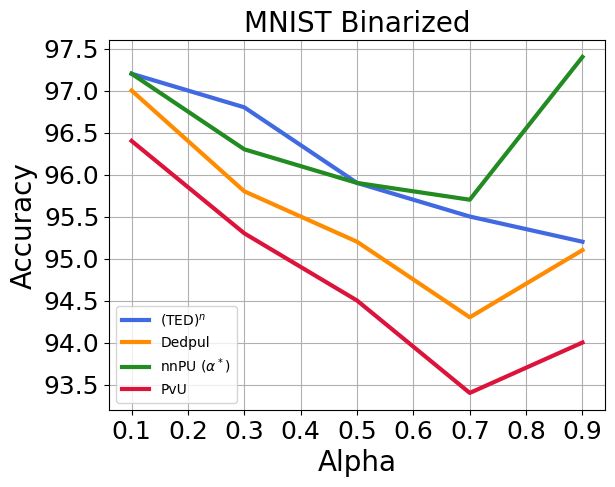}}\hfill
    \subfigure{\includegraphics[width=0.19\linewidth]{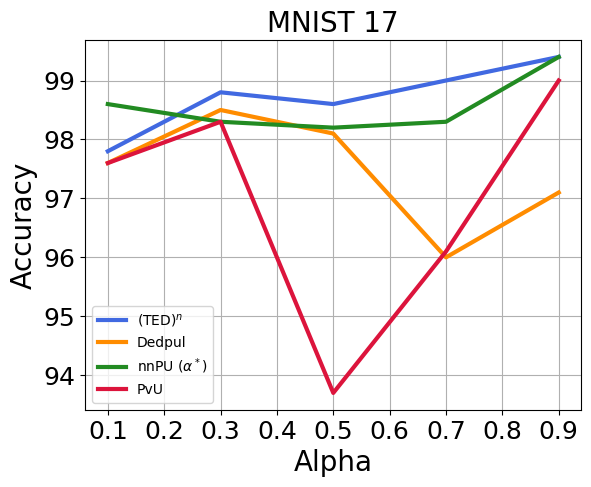}}
    \subfigure{\includegraphics[width=0.19\linewidth]{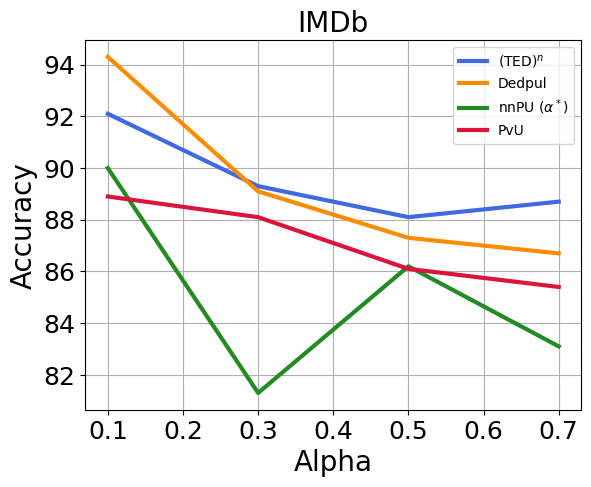}}
    \caption{MPE and Classification results with varying mixture proportion. For each method we show results with the best performing architecture.}
    \label{fig:mpe_classification_alpha}
  \end{figure*}

\subsection{Classification and MPE results with error bars} \label{app:results_std}

\begin{table}[h]
  \begin{adjustbox}{width=\columnwidth,center}
  \centering
  \small
  \tabcolsep=0.12cm
  \renewcommand{\arraystretch}{1.2}
  \begin{tabular}{@{}*{8}{c}@{}}
  \toprule
  Dataset & Model  & \thead{(TED)$^n$} &  \thead{BBE$^*$} & \thead{DEDPUL$^*$} & \thead{EN} & \thead{KM2} & \thead{TiCE}\\
  \midrule
  \multirow{3}{*}{ \parbox{1.5cm}{\centering Binarized CIFAR} }  & ResNet & $\mathbf{0.026 \pm 0.005}$  & $0.091 \pm 0.027 $ & $0.091 \pm 0.023  $ & $0.192 \pm 0.007 $ & &  \\
  & All Conv & $0.042 \pm 0.003 $  & $\mathbf{0.037 \pm 0.018}$ & $0.052 \pm 0.017  $ & $0.221 \pm 0.017 $  & $0.168 \pm 0.207$ &  $0.194 \pm 0.039$  \\
  & MLP & $0.225 \pm 0.013 $  & $0.177 \pm 0.011 $ & $\mathbf{0.138 \pm 0.009}$ & $0.372 \pm 0.002 $  & &  \\
  \midrule
  \multirow{2}{*}{ \parbox{1.5cm}{\centering CIFAR Dog vs Cat} }  & ResNet & $\mathbf{0.078 \pm 0.010}$  & $0.176 \pm 0.015 $ & $0.170 \pm 0.010  $ & $0.226 \pm 0.003 $ & $0.331 \pm 0.238$ & $0.286 \pm 0.013$ \\
  & All Conv & $\mathbf{0.066 \pm 0.015}$  & $0.128 \pm 0.020 $ & $0.115 \pm 0.014  $ & $0.250 \pm 0.019 $ & & \\
  \midrule 
  \multirow{1}{*}{ \parbox{2.5cm}{\centering Binarized MNIST} } & MLP & $\mathbf{0.024 \pm 0.001}$  & $0.032 \pm 0.001 $ & $0.031 \pm 0.003  $ & $0.080 \pm 0.009 $ & $0.029 \pm 0.008$ & $0.056 \pm 0.05$  \\
  \midrule
  \multirow{1}{*}{ \parbox{1.5cm}{\centering MNIST17} }  & MLP & $\mathbf{0.003 \pm 0.000}$  & $0.023 \pm 0.017 $ & $0.021 \pm 0.011  $ & $0.028 \pm 0.017 $ & $0.022 \pm 0.003$ & $0.043 \pm 0.023$ \\
  \midrule
  \multirow{1}{*}{ \parbox{1.5cm}{\centering IMDb} }  & BERT & $\mathbf{0.008 \pm 0.001}$ & $0.011 \pm 0.002$ & $0.016 \pm 0.005$ & $0.07 \pm 0.01$ & - & -\\
  \bottomrule 
  \end{tabular}  
  \end{adjustbox}
  \vspace{5pt}
  \caption{Absolute estimation error when $\alpha$ is 0.5. "*" denote oracle early stopping as defined in \secref{sec:exp}.
  Results reported by aggregating absolute error over 10 epochs and 3 seeds.
  }\label{table:MPE_error_bar}
\end{table}

\begin{table}[h]
  \begin{adjustbox}{width=\columnwidth,center}
  \centering
  \small
  \tabcolsep=0.12cm
  \renewcommand{\arraystretch}{1.2}
  \begin{tabular}{@{}*{8}{c}@{}}
  \toprule
  Dataset & Model  & \thead{(TED)$^n$ \\(unknown $\alpha$)}  & \thead{CVIR\\(known $\alpha$)} & \thead{PvU$^*$ \\(known $\alpha$)} & \thead{DEDPUL$^*$ \\(unknown $\alpha$)}  & \thead{nnPU \\(known $\alpha$)} & \thead{uPU$^*$ \\(known $\alpha$)} \\
  \midrule
  \multirow{3}{*}{ \parbox{1.5cm}{\centering Binarized CIFAR} }  & ResNet & $\mathbf{82.7 \pm 0.13}$  & $82.3 \pm 0.18$ & $76.9 \pm 1.12$ & $77.1 \pm 1.52$ & $77.2 \pm 1.03$ & $76.7 \pm 0.74$    \\
  & All Conv & $77.9 \pm 0.29$  & $\mathbf{78.1 \pm 0.47}$ & $75.8 \pm 0.75$ & $77.1 \pm 0.64$ & $73.4 \pm 1.31$ & $72.5 \pm 0.21$   \\
  & MLP & $64.2 \pm 0.37$  & $\mathbf{66.9 \pm 0.28}$ & $61.6 \pm 0.38$ & $62.6 \pm 0.30$ & $63.1 \pm 0.79$ & $64.0 \pm 0.24$ \\
  \midrule
  \multirow{2}{*}{ \parbox{1.5cm}{\centering CIFAR Dog vs Cat} }  & ResNet & $\mathbf{75.2 \pm 1.74}$  & $73.3 \pm 0.94$ & $67.3 \pm 1.52$ & $67.0 \pm 1.46$ & $71.8 \pm 0.33$ & $68.8 \pm 0.53$  \\
  & All Conv & $\mathbf{73.0 \pm 0.81}$  & $71.7 \pm 0.47$ & $70.5 \pm 0.60$ & $69.2 \pm 0.86$ & $67.9 \pm 0.52$ & $67.5 \pm 2.28$ \\
  \midrule 
  \multirow{1}{*}{ \parbox{1.5cm}{\centering Binarized MNIST} } & MLP & $95.6 \pm 0.42$  & $\mathbf{96.3 \pm 0.07}$ & $94.2 \pm 0.58$ & $94.8 \pm 0.10$ & $96.1 \pm 0.14$ & $95.2 \pm 0.19$  \\
  \addlinespace[0.2cm]
  \midrule
  \multirow{1}{*}{ \parbox{1.5cm}{\centering MNIST17} }  & MLP & $\mathbf{98.7 \pm 0.25}$  & $\mathbf{98.7 \pm 0.09}$ & $96.9 \pm 1.51$ & $97.7 \pm 0.62$  & $98.4 \pm 0.20$ & $98.4 \pm 0.09$ \\
  \midrule
  \multirow{1}{*}{ \parbox{1.5cm}{\centering IMDb} }  & BERT & $\mathbf{87.6 \pm 0.20}$ & $87.4 \pm 0.25$ & $86.1 \pm 0.53$ & $87.3\pm 0.18$ & $86.2 \pm 0.25$  & $85.9 \pm 0.12$ \\
  \bottomrule 
  \end{tabular}  
  \end{adjustbox}
  \vspace{5pt}
  \caption{Accuracy for PvN classification with PU learning. 
  "*" denote oracle early stopping as defined in \secref{sec:exp}.
  Results reported by aggregating over 10 epochs 
  and 3 seeds.
  }\label{table:classification_error_bar}
\end{table}

\newpage
\subsection{Experiments on UCI dataset} \label{ap:UCI}

In this section, we will present results on 5 UCI datasets. 

\begin{center}
  
    \begin{table}[H] 
        \centering
        \tabcolsep=0.12cm
        \renewcommand{\arraystretch}{1.2}
        \begin{tabular}{@{}*{5}{c}@{}}
        \toprule
        
        Dataset &  \multicolumn{2}{c}{\#Positives} & \multicolumn{2}{c}{\#Unlabeled} \\ 
        & Train & Val & Train & Val \\ [0.5ex] 
        \midrule
        concrete &  162 & 162 & 81 & 81 \\
        mushroom & 1304 & 1304 & 652 & 652 \\
        landsat &  946 & 946 & 472 & 472 \\
        pageblock  &185 &185 & 92 & 92 \\
        spambase &  604 & 604 & 302 & 302 \\
        \bottomrule 
        \end{tabular}
    \end{table}    
\end{center}

We train a MLP with 2 hidden layers each with $512$ units. The PyTorch code for 4-layer MLP is as follows: 

\texttt{ nn.Sequential(nn.Flatten(), \\
\tab        nn.Linear(input\_dim, 512, bias=True),\\
\tab        nn.ReLU(),\\
\tab        nn.Linear(512, 512, bias=True),\\
\tab        nn.ReLU(),\\
\tab        nn.Linear(512, 2, bias=True),\\
\tab        )}

Similar to vision datasets and architectures, we do cross entropy loss minimization with SGD optimizer with momentum $0.9$ and learning rate $0.1$. For nnPU and uPU, we minimize sigmoid loss with ADAM optimizer with learning rate $0.0001$ as advised in its original paper. For all methods,  we fix the weight decay param at $0.0005$. 

\begin{table}[h]
    \centering
    \small
    \tabcolsep=0.12cm
    \renewcommand{\arraystretch}{1.2}
    \begin{tabular}{@{}*{9}{c}@{}}
    \toprule
    Dataset  & \thead{(TED)$^n$} &  \thead{BBE$^*$} & \thead{DEDPUL$^*$} & \thead{EN$^*$} & \thead{KM2} & \thead{TiCE} \\
    \midrule
    concrete & $\mathbf{0.071}$ & $0.152$ & $0.176$ & $0.239$ & $0.099$ & $0.268$ \\
    mushroom & $\mathbf{0.001}$ & $0.015$ & $0.014$ & $0.013$ & $0.038$ & $0.069$ \\
    landsat & ${0.022}$ & $0.021$ & $\mathbf{0.012}$ & $0.080$ & $0.037$ & $0.027$ \\
    pageblock & $\mathbf{0.007}$ & $0.066$ & $0.041$ & $0.135$ & $0.008$ & $0.298$ \\
    spambase & $\mathbf{0.006}$ & $0.047$ & $0.077$ & $0.127$ & $0.062$ & $0.276$ \\
    \bottomrule 
    \end{tabular}  
    \vspace{5pt}
    \caption{Absolute estimation error when $\alpha$ is 0.5.
     "*" denote oracle early stopping as defined in \secref{sec:exp}.
    Results reported by aggregating absolute error over 10 epochs.
    }\label{table:uci_MPE}
  \end{table}

  \begin{table}[h]
    \centering
    \small
    \tabcolsep=0.12cm
    \renewcommand{\arraystretch}{1.2}
    \begin{tabular}{@{}*{7}{c}@{}}
    \toprule
    Dataset   & \thead{(TED)$^n$ \\(unknown $\alpha$)}  & \thead{CVuO\\(known $\alpha$)} & \thead{PvU$^*$ \\(known $\alpha$)} & \thead{DEDPUL$^*$ \\(unknown $\alpha$)}  & \thead{nnPU \\(known $\alpha$)} & \thead{uPU$^*$ \\(known $\alpha$)} \\
    \midrule
    concrete & $\mathbf{86.3}$ & $80.1$ & $83.1$ & $83.7$& $83.2$  & $84.4$ \\
    mushroom & $96.4$ & $96.3$ & $\mathbf{98.7}$ & $\mathbf{98.7}$ & $97.5$ & $93.9$ \\
    landsat & $\mathbf{93.8}$ & $93.1$ & $93.4$ & $92.4$ & $92.9$ & $92.3$ \\
    pageblock & $\mathbf{95.7}$ & $\mathbf{95.7}$ & $95.1$ & $94.5$ & $93.9$ & $93.9$ \\
    spambase & $\mathbf{89.4}$ & $88.1$ & $89.2$ & $86.8$ & $88.5$ & $87.7$ \\
    \bottomrule 
    \end{tabular}  
    \vspace{5pt}
    \caption{Accuracy for PvN classification with PU learning. 
    "*" denote oracle early stopping as defined in \secref{sec:exp}.
    Results reported by aggregating aggregating over 10 epochs.  
    }\label{table:uci_classification}
  \end{table}
  
  On $4$ out of $5$ UCI datasets, our proposed methods are better than the best performing alternatives (\tabref{table:uci_MPE} and \tabref{table:uci_classification}).   

\subsection{Experiments on MNIST Overlap} \label{ap:mnist_overlap}
Similar to binarized MNIST, we create a new dataset called MNIST Overlap, where the positive class contains digits from $0$ to $7$ and the negative class contains digits from $3$ to $9$. This creates a dataset with overlap between positive and negative support. Note that while the supports overlap, we sample images from the overlap classes with replacement, and hence, in absence of duplicates in the dataset, exact same images don't appear both in positive and negative subsets.

We train MLP with the same hyperparameters as before. Our findings in \tabref{table:mnist_MPE} and \tabref{table:mnist_classification} highlight superior performance of the proposed approaches in the cases of support overlap. 

\begin{table}[h]
    \centering
    \small
    \tabcolsep=0.12cm
    \renewcommand{\arraystretch}{1.2}
    \begin{tabular}{@{}*{9}{c}@{}}
    \toprule
    Dataset  & \thead{(TED)$^n$} &  \thead{BBE$^*$} & \thead{DEDPUL$^*$} & \thead{EN$^*$} & \thead{KM2} & \thead{TiCE} \\
    \midrule
    MNIST Overlap & $\mathbf{0.035}$ & $0.100$ & $0.104$ & $0.196$ & $0.099$ & $0.074$ \\
    \bottomrule 
    \end{tabular}  
    \vspace{5pt}
    \caption{Absolute estimation error when $\alpha$ is 0.5.
     "*" denote oracle early stopping as defined in \secref{sec:exp}.
    Results reported by aggregating absolute error over 10 epochs.
    }\label{table:mnist_MPE}
  \end{table}

\begin{table}[h]
    \centering
    \small
    \tabcolsep=0.12cm
    \renewcommand{\arraystretch}{1.2}
    \begin{tabular}{@{}*{7}{c}@{}}
    \toprule
    Dataset   & \thead{(TED)$^n$ \\(unknown $\alpha$)}  & \thead{CVuO\\(known $\alpha$)} & \thead{PvU$^*$ \\(known $\alpha$)} & \thead{DEDPUL$^*$ \\(unknown $\alpha$)}  & \thead{nnPU \\(known $\alpha$)} & \thead{uPU$^*$ \\(known $\alpha$)} \\
    \midrule
    MNIST Overlap & $\mathbf{79.0}$ & $78.4$ & $77.4$ & $77.5$& $78.6$  & $78.8$ \\
    \bottomrule 
    \end{tabular}  
    \vspace{5pt}
    \caption{Accuracy for PvN classification with PU learning. 
    "*" denote oracle early stopping as defined in \secref{sec:exp}.
    Results reported by aggregating aggregating over 10 epochs.  
    }\label{table:mnist_classification}
  \end{table}

\end{document}